\documentclass{article}

\usepackage{microtype}
\usepackage{graphicx}
\usepackage{subcaption}
\usepackage{booktabs} % for professional tables

\usepackage{hyperref}

\usepackage[preprint]{icml2026}

\usepackage{amsmath}
\usepackage{amssymb}
\usepackage{mathtools}
\usepackage{amsthm}

\usepackage[capitalize,noabbrev]{cleveref}

\theoremstyle{plain}
\newtheorem{theorem}{Theorem}[section]
\newtheorem{proposition}[theorem]{Proposition}

\theoremstyle{definition}
\newtheorem{definition}[theorem]{Definition}

\theoremstyle{remark}
\newtheorem{remark}[theorem]{Remark}
\theoremstyle{remark}
\newtheorem{example}[theorem]{Example}

\usepackage[textsize=tiny]{todonotes}

\icmltitlerunning{Stepwise Variational Inference with Vine Copulas}

\usepackage[utf8]{inputenc} % allow utf-8 input
\usepackage[T1]{fontenc}    % use 8-bit T1 fonts
\usepackage{hyperref}       % hyperlinks
\usepackage{url}            % simple URL typesetting
\usepackage{booktabs}       % professional-quality tables
\usepackage{amsfonts}       % blackboard math symbols
\usepackage{nicefrac}       % compact symbols for 1/2, etc.
\usepackage{microtype}      % microtypography
\usepackage[dvipsnames]{xcolor}         % colors

\usepackage{tikz}
\usetikzlibrary{matrix}
\usetikzlibrary{shapes, arrows, calc, arrows.meta, fit, positioning}

\usepackage{graphicx}
\usepackage{placeins}
\usepackage{bm}
\newcommand{\B}{\boldsymbol}
\newcommand{\Bb}{\mathbf}

\begin{document}

\twocolumn[
  \icmltitle{Stepwise Variational Inference with Vine Copulas}

  % It is OKAY to include author information, even for blind submissions: the
  % style file will automatically remove it for you unless you've provided
  % the [accepted] option to the icml2026 package.

  % List of affiliations: The first argument should be a (short) identifier you
  % will use later to specify author affiliations Academic affiliations
  % should list Department, University, City, Region, Country Industry
  % affiliations should list Company, City, Region, Country

  % You can specify symbols, otherwise they are numbered in order. Ideally, you
  % should not use this facility. Affiliations will be numbered in order of
  % appearance and this is the preferred way.
  \icmlsetsymbol{equal}{*}

  \begin{icmlauthorlist}
      \icmlauthor{Elisabeth Griesbauer}{nr,uiocbe,intg}
      \icmlauthor{Leiv Rønneberg}{uiomat}
      \icmlauthor{Arnoldo Frigessi}{uiocbe,intg}
      \icmlauthor{Claudia Czado}{tum,mucd}
      \icmlauthor{Ingrid Hobæk Haff}{uiomat,intg}
    % \icmlauthor{Firstname1 Lastname1}{equal,yyy}
    % \icmlauthor{Firstname2 Lastname2}{equal,yyy,comp}
    % \icmlauthor{Firstname3 Lastname3}{comp}
    % \icmlauthor{Firstname4 Lastname4}{sch}
    % \icmlauthor{Firstname5 Lastname5}{yyy}
    % \icmlauthor{Firstname6 Lastname6}{sch,yyy,comp}
    % \icmlauthor{Firstname7 Lastname7}{comp}
    %\icmlauthor{}{sch}
    % \icmlauthor{Firstname8 Lastname8}{sch}
    % \icmlauthor{Firstname8 Lastname8}{yyy,comp}
    %\icmlauthor{}{sch}
    %\icmlauthor{}{sch}
  \end{icmlauthorlist}

  \icmlaffiliation{uiocbe}{Oslo Centre for Biostatistics and Epidemiology (OCBE), University of Oslo, Norway}
  \icmlaffiliation{uiomat}{Department of Mathematics, University of Oslo, Norway}
  \icmlaffiliation{nr}{Norwegian Computing Center, Oslo, Norway}
  \icmlaffiliation{intg}{Integreat - Norwegian Centre for Knowledge-driven Machine Learning, Oslo, Norway}
  \icmlaffiliation{tum}{Technical University of Munich, Germany}
  \icmlaffiliation{mucd}{Munich Data Science Institute, Germany}

  \icmlcorrespondingauthor{Elisabeth Griesbauer}{elismg@uio.no}

  % You may provide any keywords that you find helpful for describing your
  % paper; these are used to populate the "keywords" metadata in the PDF but
  % will not be shown in the document
  \icmlkeywords{Machine Learning, ICML}

  \vskip 0.3in
]

% this must go after the closing bracket ] following \twocolumn[ ...

% This command actually creates the footnote in the first column listing the
% affiliations and the copyright notice. The command takes one argument, which
% is text to display at the start of the footnote. The \icmlEqualContribution
% command is standard text for equal contribution. Remove it (just {}) if you
% do not need this facility.

% Use ONE of the following lines. DO NOT remove the command.
% If you have no special notice, KEEP empty braces:
\printAffiliationsAndNotice{ }  % no special notice (required even if empty)
% Or, if applicable, use the standard equal contribution text:
% \printAffiliationsAndNotice{\icmlEqualContribution}

\begin{abstract}
    We propose stepwise variational inference (VI) with vine copulas: a universal VI procedure that combines vine copulas with a novel stepwise estimation procedure of the variational parameters. 
    Vine copulas consist of a nested sequence of trees built from copulas, where more complex latent dependence can be modeled with increasing number of trees. 
    We propose to estimate the vine copula approximate posterior in a stepwise fashion, tree by tree along the vine structure. 
    Further, we show that the usual backward Kullback-Leibler divergence cannot recover the correct parameters in the vine copula model, thus the evidence lower bound is defined based on the R\'enyi divergence. 
    Finally, an intuitive stopping criterion for adding further trees to the vine eliminates the need to pre-define a complexity parameter of the variational distribution, as required for most other approaches. 
    Thus, our method interpolates between mean-field VI (MFVI) and full latent dependence. 
    In many applications, in particular sparse Gaussian processes, our method is parsimonious with parameters, while outperforming MFVI.

\end{abstract}

\begin{figure}[!t]
    \centering
    \includegraphics[width=\linewidth]{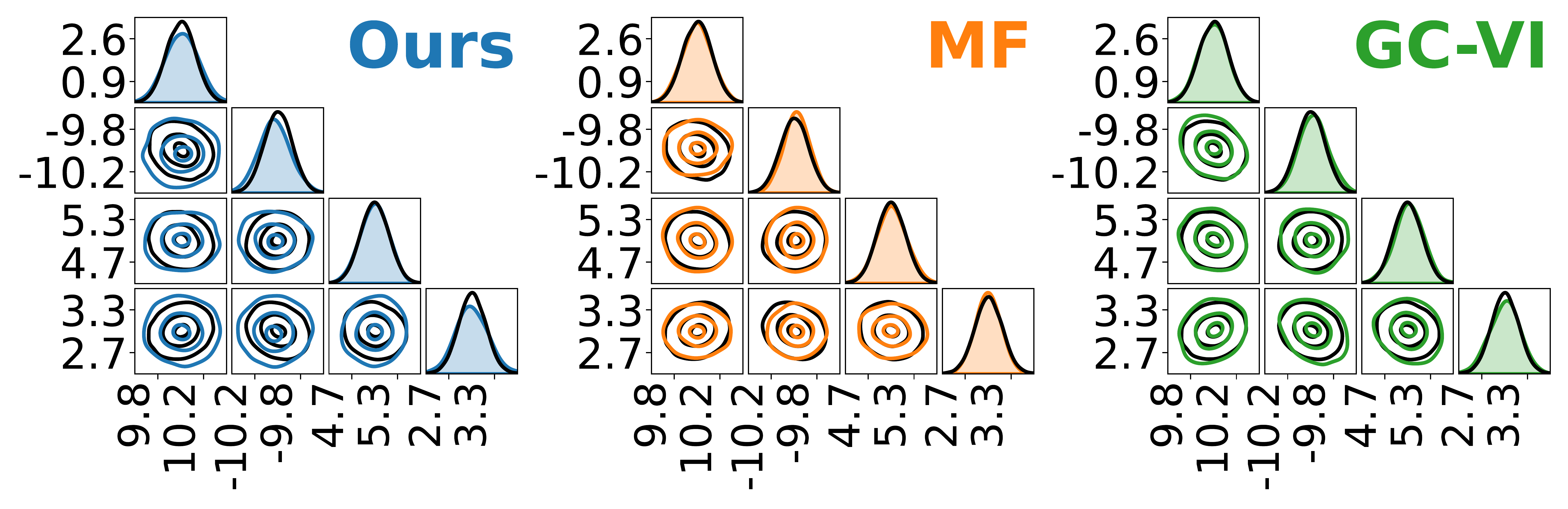}
    \includegraphics[width=\linewidth]{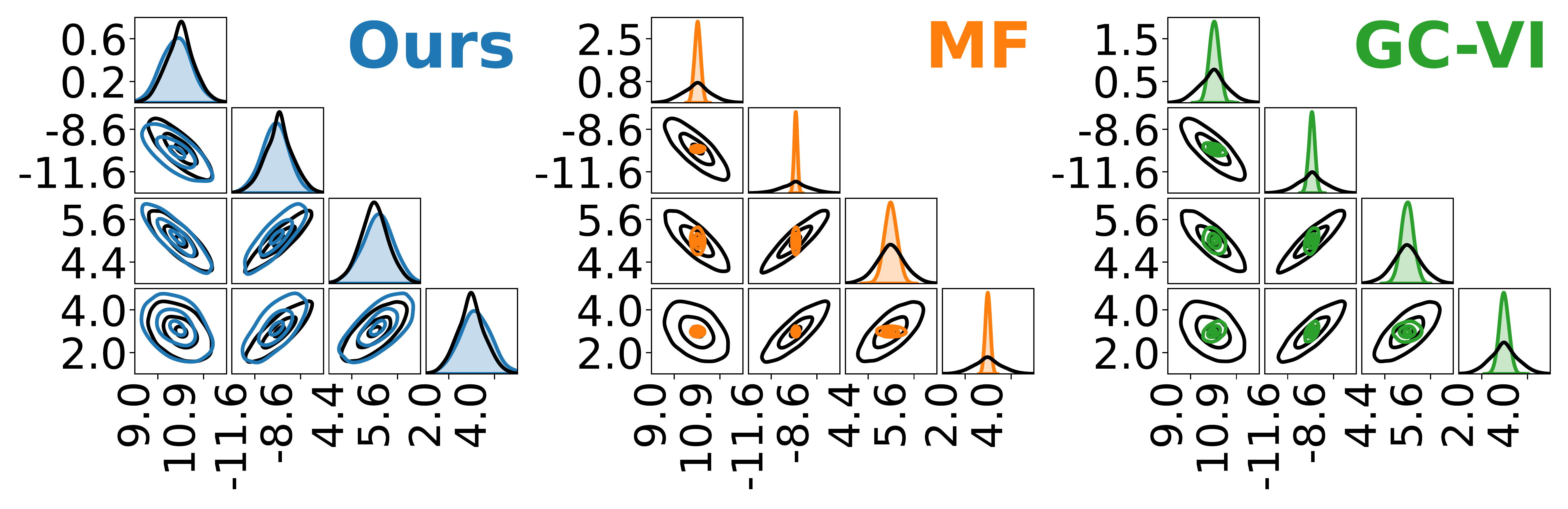}
    \caption{Contour plots of samples from NUTS (black) regarded as ground truth and variational approximations obtained with stepwise VI with vines (blue), MFVI (orange) and GC-VI (green).}
    \label{fig:first_page}
\end{figure}

\section{Introduction}
% general on VI
Variational inference allows approximate posterior inference by framing it as an optimization problem. It is most useful when sampling based methods struggle or fail. The candidate distribution is found using stochastic optimization from a tractable, parametric distribution family, that lies closest to the true posterior with respect to a divergence. 
% choice of Q
The choice of the variational family is critical -- its flexibility and complexity determines how well the true posterior can be approximated.
% rank of covariance matrix/dependence structure
The complexity of variational distributions ranges from fully factorized MFVI, structured MF approaches, variational distributions with sparse (low-rank + diagonal) or full covariance matrices, and (un-)truncated vine copulas. In each case, a hyperparameter of the variational distribution, which determines the complexity of the variational distribution, needs to be pre-defined : the structure of the structured MF \citep{saul1995exploiting, fu2025vector}, the sparsity of the covariance matrix of the variational distribution \citep{smith2020high,smith2023implicit}, the truncation level of the vine copula \citep{tran2015copula, chi2022fast}, the number of components in mixture variational distribution \citep{guo2016boosting, campbell2019universal, gunawan2024flexible} or the number of flow parameters in normalizing flow based VI \citep{rezende2015variational}. Without any prior knowledge, this is hard. It either gives a too simplistic variational model that fails to capture important aspects of the true posterior, or an over-parametrized variational model, potentially making the optimization more costly than necessary. 
% different variational distribution families

We propose a variational family that is expressive while parsimonious, and eliminates the need to pre-specify a complexity hyperparameter. Our approach combines a vine copula distribution as the approximate posterior with a stepwise estimation of the variational parameters along the sequence of vine copula trees and a global stopping criterion, that automatically selects the complexity of the variational distribution on the fly. We term this approach \textit{stepwise VI with vine copulas}.

% vines
Vine copulas are highly flexible models that model joint dependence and marginal behavior separately. For this reason, they have gained popularity both within and outside VI \citep{tran2015copula, chi2022fast, tagasovska2019copulas, tagasovska2023retrospective, huk2024quasi}. Vine copulas are built from copulas in a structure that is graphically represented by a nested sequence of trees: each edge in a tree corresponds to a copula which captures the (conditional) dependence between a pair of variables. Consequently, a vine with more trees, and thus more copulas, can model more complex dependence.
% stepwise estimation
Due to their nested tree structure, the standard approach for estimating vine copulas from observed data is to proceed tree by tree using stepwise maximum likelihood estimation \citep{dissmann2013selecting}. Existing vine based VI approaches estimate the vine parameters of all trees simultaneously. 
We propose a novel procedure for estimating the parameters of the vine approximate posterior along its tree sequence, which is natural from its structure. The stepwise estimation is assisted by a natural global stopping criterion for adding further trees to the vine approximate posterior. If all copulas of the current tree are close to independence, no further trees are added. This stopping criterion automatically chooses the complexity parameter of the vine copula, i.e. its number of trees, on the fly. Thus, stepwise VI with vines gives an expressive and flexible approximate posterior, that is parsimonious in the number of variational parameters and represents a compromise between MFVI and full-rank variational models.
% Renyi-divergence instead of backward KL
Finally, we show that an evidence lower bound based on the commonly used backward Kullback-Leibler (KL) divergence cannot recover the correct parameters in the vine copula approximate posterior. For this reason we optimize the variational parameters using a R\'enyi divergence based lower bound.

% % advantages
% This gives a major advantage over other (copula-based) variational models: there is no need to pre-define the complexity parameter of the variational model. Instead this is automatically inferred in form of the truncation level of the vine copula using the intuitive and interpretable global stopping criterion in the stepwise estimation procedure. Thus stepwise VI with vines avoids estimating more variational parameters than needed for good posterior approximation. Additionally, the stepwise estimation avoids estimating all variational parameters simultaneously which is computationally more demanding.

\paragraph{Related Work}\label{sec:related_work}
The idea of using vine copulas as approximate posteriors in VI has been explored before. \citet{tran2015copula} and \citet{chi2022fast} alternate between optimizing the MF parameters and optimizing the vine copula parameters until convergence. For reduced sampling cost, \citet{chi2022fast} additionally propose to formulate the ELBO gradients as an expectation over the MF. In these approaches all pair copula parameters are updated simultaneously and numerous alternating steps are needed for convergence. In their experiments \citet{tran2015copula} explore the route of learning the vine tree structure and pair copula families from synthetic data of the latent variables, but abandon this idea.\footnote{Either there is a high-quality variational posterior available from which we could generate synthetic data and meaningfully learn the vine tree structure, but then there is no need to estimate a vine copula variational posterior, or one only has access to a low-quality variational posterior, resulting in synthetic data that are useless for vine structure learning.} 
Consequently, they have to fix the truncation level. \citet{han2016variational} propose a $d$-dimensional Gaussian copula on latent variables transformed to normality with Bernstein polynomials. Their approach requires the estimation of the full correlation matrix or its Cholesky decomposition as variational parameters of the Gaussian copula. 
Similarly, \citet{smith2020high} and \citet{smith2023implicit} approximate transformed latent variables with implicit copulas defined by normal, skew-normal and elliptical distributions. They set the structure of the corresponding covariance matrix to a factor structure (low-rank + diagonal) and pre-define the rank as a complexity parameter.
\citet{gunawan2024flexible} combine ideas from variational boosting \citep{guo2016boosting, miller2017variational} and implicit (skew-)Gaussian copulas as variational models \citep{smith2020high}. They model the transformed latent variables with a $K$-mixture of multivariate Gaussians with correlation matrix in factor structure, pre-specifying the number of mixing components and the rank of the covariance matrices within each mixture component. \citet{fu2025vector} model block-dependence in style of structured MF with a vector copula, updating all vector copula parameters simultaneously and pre-specifying the number of blocks and their dimension.
An extended literature review can be found in Appendix \ref{app:extended_literature_review}.

\paragraph{Contributions}
Our contributions are as follows:
\begin{itemize}
    \item We propose \textit{stepwise VI with vine copulas}, a VI procedure based on vine copulas as a flexible and expressive variational family, and a novel stepwise estimation procedure of the variational parameters along the tree sequence of the vine copula. 
    \item \textit{Stepwise VI with vines} automatically infers the complexity parameter of the variational family by using an intuitive global stopping criterion for adding further trees to the vine. This yields a parsimonious \textit{and} expressive approximate posterior, and eliminates the need to pre-define the complexity parameter of the variational distribution. 
    \item Theoretically, we show that an evidence lower bound based on the backward KL divergence cannot recover the correct parameters in the vine copula approximate posterior and therefore optimize the variational parameters using a R\'enyi divergence based lower bound.
\end{itemize}

\section{Background}

\subsection{Variational Inference (VI)}

Let $\mathbf{Z} \in \mathbb{R}^d$ be a latent random vector and $\mathbf{x}$ be realizations of the random vector $\mathbf{X} \in \mathbb{R}^p$. Let $\pi(\mathbf{z})$ be the prior and $p(\mathbf{x} | \mathbf{z})$ the likelihood, which both can be evaluated and sampled from. 
The core idea of VI is to approximate the true posterior $p(\mathbf{z} | \mathbf{x})$ with a variational distribution $q(\mathbf{z}; \bm{\phi})$\footnote{The notation $q( \cdot; \bm{\phi})$ and $q_{\bm{\phi}}(\cdot)$ will be used exchangeably. Notational conventions can be found in Appendix \ref{app:notation}.} where $q$ is an element of some tractable distribution family $\mathcal{Q}$, parametrized by $\bm{\phi} \in \Phi$. The approximate posterior $q( \cdot; \bm{\phi}^*)$ is obtained by minimizing, some divergence, most commonly the KL-divergence, from the variational distribution to the true posterior:
\begin{align}\label{eq:argmin_KL_VI}
    q(\mathbf{z}; \bm{\phi}^*) := \arg \min_{q \in \mathcal{Q}} KL\big( q(\mathbf{z}; \bm{\phi}) || p(\mathbf{z} | \mathbf{x}) \big)
\end{align}
Minimizing $KL\big( q(\mathbf{z}; \bm{\phi}) || p(\mathbf{z} | \mathbf{x}) \big)$ is equivalent to maximizing the the evidence lower bound (ELBO):
\begin{align}\label{eq:KL_ELBO}
    \mathcal{L}(\bm{\phi}) &:= E_{q_{\bm{\phi}}}\big[ \log p(\mathbf{z}, \mathbf{x}) - \log q(\mathbf{z}; \bm{\phi}) \big] \; 
\end{align}

usually obtained with stochastic gradient descent (SGD) on $-\mathcal{L}(\bm{\phi})$.

A standard choice for $\mathcal{Q}$ is MFVI \citep{anderson1987mean, jordan1999introduction},  $q(\mathbf{z}; \bm{\phi}) = \prod_{j = 1}^d q_j(z_j; \phi_j)$, which is fast to optimize but does not allow any latent dependence. 
% \eg{Add? Minimum KL divergence any approximate posterior distribution without dependence to the true posterior can achieve is the KL between the independence copula and the copula of the true posterior \citep{li2025bayesian}.}

KL-divergence based VI, optimizing the objective in \eqref{eq:KL_ELBO}, suffers from several drawbacks, e.g. zero-forcing behavior. We prove in Section \ref{sec:stepwiseVIvines} that it cannot recover the correct parameters in a vine copula, and for this reason we instead make use of R\'enyi $\alpha$-divergence based VI.

\subsection{R\'enyi $\alpha$-divergence and VI}\label{sec:Renyi-divergence_VI}

\paragraph{R\'enyi $\alpha$-divergence}

The R\'enyi $\alpha$-divergence is defined as:
\begin{align*}
    R_{\alpha}(q||p) := \frac{1}{\alpha-1} \log \Big( \int q(z)^{\alpha} p(z)^{1-\alpha} dz  \Big)
\end{align*}
for $\alpha \in (0, 1) \cup (1, \infty)$ \citep{van2014renyi} and can be extended to:
\begin{align*}
    &\lim_{\alpha \rightarrow 1} R_{\alpha}(q||p) =: R_1(q||p) = KL(q||p) \; ,\\
    &R_0(q||p) = - \log \Big( \int_{q(z) > 0} p(z) dz \Big) \; .
\end{align*}
% Additionally, the R\'enyi $\alpha$-divergence has skew symmetry for $\alpha \in (0,1)$:
% \begin{align}
%     &R_{\alpha}(q||p) = \frac{a}{1-\alpha} R_{1 - \alpha}(p||q)
% \end{align}
% and is jointly convex in $(q,p)$ for $\alpha \in [0,1]$. 

\citet{li2016renyi} propose a R\'enyi $\alpha$-divergence based VI framework  by optimizing the variational R\'enyi bound (VR):
\begin{align*}
    & \text{VR}^{(\alpha)}(\bm{\phi}; \mathbf{x}):= \frac{1}{1 - \alpha } \log \int q(\mathbf{z}; \bm{\phi})^{\alpha} p(\mathbf{x}, \mathbf{z})^{1 - \alpha} dz
\end{align*}
that lower bounds the log evidence through:
\begin{align}
    &R_{\alpha}(q(\mathbf{z}; \bm{\phi}) || p(\mathbf{z} | \mathbf{x})) \nonumber  \\
    =& \frac{1}{\alpha -1} \log  \int q(\mathbf{z}; \bm{\phi})^{\alpha} \Big( \frac{p(\mathbf{x}, \mathbf{z})}{p(\mathbf{x})}\Big)^{1 - \alpha} dz \nonumber \\
    =& \frac{1}{\alpha -1} \log \int q(\mathbf{z}; \bm{\phi})^{\alpha} p(\mathbf{x}, \mathbf{z})^{1 - \alpha}  dz + \log \{p(\mathbf{x})\} \; . \nonumber
\end{align} 
The parameter $\alpha$ allows control over the amount of weight put on the true posterior,  overcoming drawbacks of the backward KL-divergence $KL(q||p)$ \citep{daudel2023alpha}.

% The VR bound possibly provides a tighter lower bound on the log evidence than the ELBO according to Theorem 1 in \citet{li2016renyi}, which states that:
% \begin{align}
%     \mathcal{L}_{KL}(q; x) = \lim_{\alpha' \rightarrow 1} VR_{\alpha'}(q; x) \leq VR^{(\alpha)}(q; x) \leq \log p(\mathbf{x}) \; ,
% \end{align}
% for $\alpha \in (0,1)$ and $supp(p(z|x)) \subset supp(q(z))$. 

\paragraph{R\'enyi divergence VI and VR-IWAE bound}

The Monte-Carlo (MC) estimator for the VR bound and its gradients proposed by \citet{li2016renyi} is biased for all $\alpha \notin \{0,1\}$, but shown to work well empirically. However, SGD with a learning rate sequence fulfilling the Robbins-Monro conditions is guaranteed to converge to its optimum only for unbiased estimators of the evidence lower bound gradients \citep{robbins1951stochastic}.

\citet{daudel2023alpha} show that the expectation of the biased VR bound gradient estimator in \citet{li2016renyi} can be used as a variational lower bound itself. And that this represents a generalization of the importance weighted auto-encoder (IWAE) bound of \citet{burda2015importance}, based on $N$ importance samples. They term it the VR-IWAE bound and define it to be:
\begin{align}\label{eq:VR_IWAE_def}
    &l_N^{(\alpha)}(\bm{\phi}; \mathbf{x}) := \frac{1}{1 - \alpha} \int \prod_{i=1}^N \nonumber \\
    & \quad q(\mathbf{z}_i; \phi) \log \Bigg( \frac{1}{N} \sum_{k=1}^N  \Big(\frac{p(\mathbf{x},\mathbf{z}_j)}{q(\mathbf{z}_j; \phi)} \Big)^{1 - \alpha} \Bigg) d\mathbf{z}_{1:N} \; .
\end{align}

Applying the reparametrization trick (see Section \ref{subsec:reparamtrick}) to the VR-IWAE bound yields the same SGD procedure as the reparametrized VR bound \citep{daudel2023alpha}. This means that existing implementations of the VR bound gradient estimators, e.g. in \verb|pyro| \citep{bingham2018pyro}, provide an unbiased estimator for VR-IWAE gradients.

\paragraph{VR-IWAE bound and $\alpha$}

The VR-IWAE recovers the ELBO for $N=1$ and $\alpha \rightarrow 1$, and the IWAE bound for $\alpha=0$. The VR-IWAE can be expressed \citep{daudel2023alpha, daudel2024learning} as:
\begin{gather*}
    l_N^{(\alpha)}(\bm{\phi}; \mathbf{x}) = \text{VR}^{(\alpha)}(\bm{\phi}; \mathbf{x}) - \frac{\gamma^{(\alpha)}(\bm{\phi}; \mathbf{x})^2}{2N} + o\Big(\frac{1}{N} \Big) \; , \\
    \text{where } \gamma^{(\alpha)}(\bm{\phi}; \mathbf{x})^2 := (1 - \alpha) \text{Var}_{\mathbf{Z} \sim q_{\phi}} (\bar{w}^{(\alpha)}_{\bm{\phi}}(\mathbf{Z})) \\
    \bar{w}^{(\alpha)}_{\bm{\phi}}(\mathbf{Z}) := w_{\bm{\phi}}(\mathbf{Z})^{1 - \alpha}/E_{\mathbf{Z} \sim q_{\phi}} [ w_{\bm{\phi}}(\mathbf{Z})^{1 - \alpha}] \\
    w_{\bm{\phi}}(\mathbf{z}) := p(\mathbf{x}, \mathbf{z})/q(\mathbf{z}; \bm{\phi}).
\end{gather*}
This gives two things: Firstly, the VR-IWAE bound converges to the VR bound at a rate of $1/N$, and secondly this gives a decomposition of the VR-IWAE bound into a bias and a variance term which depend on $\alpha$. For $\alpha \rightarrow 1$ the variance term vanishes, bringing the VR-IWAE closer to the VR bound, while the latter at the same time converges to the backward KL-divergence based ELBO, with drawbacks we want to overcome. 
On the other hand, a value of $\alpha$ closer to 0 puts more weight on the true posterior in the VR bound due to the skew symmetry of the R\'enyi $\alpha$-divergence \citep{li2016renyi}, which is favorable.
How fast $\gamma^{(\alpha)}(\bm{\phi}; \mathbf{x})^2/2N$ goes to 0 depends on the behavior of $\gamma^{(\alpha)}(\bm{\phi}; \mathbf{x})^2$, which is not straight-forward to quantify. 
This encourages to find a trade-off based on $\alpha$ for good empirical performance \citep{daudel2023alpha}.
% The presence of $\gamma^{(\alpha)}(\bm{\phi}; \mathbf{x})^2$ in the VR-IWAE bound might decrease this effect. Even though it might be possible to compute or estimate $\gamma^{(\alpha)}(\bm{\phi}; \mathbf{x})^2$, neither the VR bound nor the VR-IWAE bound have a unified scale. It is therefore not possible to quantify how far $\gamma^{(\alpha)}(\bm{\phi}; \mathbf{x})^2$ brings us away from $VR^{(\alpha)}(\bm{\phi}; \mathbf{x})$. Additionally, $\gamma^{(\alpha)}(\bm{\phi}; \mathbf{x})^2/2N$ can be controlled by increasing $N$.
\citet{daudel2024learning} give more refined results in similar fashion on the gradient level. \citet{margossian2024variational} analyze R\'enyi divergence VI in a Gaussian setting, which we further discuss in Appendix \ref{app:Renyi_divergence_Gaussian_setting}.

\subsection{Reparametrization trick}\label{subsec:reparamtrick}
Lower bound gradient estimators typically suffer from high variance that can limit their practical applicability. Gradient estimators based on the reparametrized lower bound \citep{kingma2013auto, rezende2014stochastic} exhibit lower variance.
If applicable, the latent variable $\mathbf{z}$ is expressed as a deterministic, differentiable transformation $\mathbf{z} := g(\bm{\epsilon}, \bm{\phi})$ of some random variable $\bm{\epsilon} \sim q(\bm{\epsilon})$. 
The reparametrized gradient estimator of the VR-IWAE bound is given by \citep{daudel2023alpha}:
\begin{align*}
    &\nabla_{\bm{\phi}} l_N^{(\alpha)}(\bm{\phi}; \mathbf{x}) = \int \prod_{i=1}^N q(\bm{\epsilon}_i) \cdot  \Bigg( \sum_{j=1}^N  \nonumber \\
    & \; \frac{w_{\bm{\phi}}\big(g(\bm{\epsilon}_j, \bm{\phi})\big)^{1 - \alpha}}{\sum_{k=1}^N w_{\bm{\phi}}\big(g(\bm{\epsilon}_k, \bm{\phi})\big)^{1 - \alpha}} \nabla_{\bm{\phi}} \log w_{\bm{\phi}}\big(g(\bm{\epsilon}_j, \bm{\phi})\big) \Bigg) d\bm{\epsilon}_{1:N} \; ,
\end{align*}
where $w_{\bm{\phi}}(\mathbf{z}) := p(\mathbf{x},\mathbf{z})/q(\mathbf{z}; \bm{\phi})$\footnote{Here we have left out the subscript $\theta$ in $w_{\phi}(z)$ as the parameter of the model $p(x,z)$ is assumed to be a constant.}, and its unbiased estimator given by:
\begin{align}\label{eq:vr_iwae_bound_grad_estimator}
    \widehat{\nabla_{\bm{\phi}} l_N^{(\alpha)}(\bm{\phi}; \mathbf{x})} &:= \sum_{j=1}^N \frac{w_{\bm{\phi}}\big(g(\bm{\epsilon}_j, \bm{\phi})\big)^{1 - \alpha}}{\sum_{k=1}^N w_{\bm{\phi}}\big(g(\bm{\epsilon}_k, \bm{\phi})\big)^{1 - \alpha}} \nonumber \\
    & \quad \cdot \nabla_{\bm{\phi}} \log w_{\bm{\phi}}\big(g(\bm{\epsilon}_j, \bm{\phi})\big) \; .
\end{align}

\subsection{Vine Copulas}
A $d$-dimensional copula $C:[0,1]^d \rightarrow [0,1]$ is a $d$-dimensional distribution on the unit cube with uniform marginals and (if existing) corresponding copula density $c$. \citet{sklar1959fonctions} shows that any $d$-dimensional distribution $F$ can be expressed in terms of a $d$-dimensional copula $C$:
\begin{align*}
    F(x_1, \dots, x_d) = C\big(F_1(x_1), \dots, F_d(x_d)\big) \; .
\end{align*}
If all densities exist, a $d$-dimensional density $f$ can be expressed as a product of the corresponding $d$-dimensional copula density $c$ and the $d$ marginal densities: 
\begin{align}\label{eq:Sklar_density}
    f(x_1, \dots, x_d) &= c\big(F_1(x_1), \dots, F_d(x_d)\big)  \nonumber \\
    &\; \cdot f_1(x_1) \cdot ... \cdot f_d(x_d) \; .
\end{align}
Together with the fact that a copula uniquely describes dependence of random variables \citep{geenens2023towards}, this allows completely separate modeling of marginal behavior and joint dependence, which gives a recipe for building highly flexible models.
There are different parametric copula families $\{c(\cdot, \cdot; \eta) : \; \eta \in H\}$ that model different types of dependence, e.g. upper or lower tail dependence, both or none. However, it is hard to estimate a $d$-dimensional copula, and one is limited to the dependence type inherent to the corresponding copula family. Using density factorization combined with Sklar's theorem \citep{sklar1959fonctions}, a $d$-dimensional copula $c$ can be deconstructed into a product of bivariate (conditional) copulas, so called pair copulas.

A vine copula \citep{joe1997multivariate, bedford2001probability, bedford2002vines, aas2009pair, joe2014dependence, czado2019analyzing} is a probabilistic model built on the idea of reversing the copula decomposition, constructing flexible $d$-dimensional distributions from univariate marginals and bivariate (conditional) copulas. The vine tree structure $\mathcal{V} = (T_1, \dots, T_{d-1})$, is a nested sequence of $d-1$ trees $T_k = (V_k, E_k), \; k \in [d-1]$, which serves as a construction plan of the vine copula.  
An edge in $T_1$ represents a bivariate copula $c_{a_e, b_e}$ of the unconditional pair of random variables $(X_{a_e}, X_{b_e}), \; a_e, b_e \in [d]$, and an edge $e$ in $T_k, \; k \in \{2, \dots, d-1\}$ represents a bivariate copula $c_{a_e, b_e ; D_e}$ of a pair $(X_{a_e}, X_{b_e})$, conditioned on $k-1$ random variables $X_j, \; j \in D_e \subset [d]$. 
Hence, the vine copula is: 
\begin{align}\label{eq:vine}
    c = \prod_{k \in [d-1]} \prod_{e \in E_k} c_{a_e, b_e ; D_e} \; ,
\end{align}
where we left out arguments and the pair copula parameters $\eta$ for notational ease.
A way to simplify a vine copula is to truncate it at a specific tree level $\tau \in [d-1]$. This is equivalent to setting all pair copulas of trees $T_{\tau+1}, \dots, T_{d-1}$ to independence. 

\begin{definition}[Truncation of the Vine Copula at Level $\tau$]
    Let $c$ be a vine copula as given in Equation \eqref{eq:vine}. We define the vine copula truncated at truncation level $\tau \in [d-1]$ as: $\prod_{t \in [\tau]} \prod_{e \in E_k} c_{a_e, b_e ; D_e}$.
\end{definition}

Thus, in the resulting vine copula, only trees $T_1, ... T_{\tau}$ are left in the model.  For $\tau=d-1$, we obtain the un-truncated vine copula, while for $\tau=1$, only the first tree is retained.
% D-vine
Special shapes of trees in the vine tree structure lead to certain sub-classes of vines. In particular, in a \textit{D-vine}, each tree is a path, i.e. the degree of all nodes in all trees it smaller than or equal to 2. An illustration of a D-vine can be found in Figure \ref{picture:dvine4_notation}.
% order
The vine tree structure together with the order, in which the random variables appear in each tree $T_t$, determines which pairs of random variables (conditioned on other random variables) are modeled with a copula in the vine. Figure \ref{picture:dvine4_notation} depicts a D-vine tree sequence on 4 random variables with fixed order.

\vspace{0.2cm}
\begin{figure}[H]
    % \centering
    \begin{tikzpicture}[font = \footnotesize, node distance=5mm, main/.style = {draw, shape = circle}, baseline=+6mm]
        
        \node[draw=none,fill=none] at (0,0) {(T1)};
        \node[main] (1) at (1, 0) {1};
        \node[main] (2) at (3, 0) {2};
        \node[main] (3) at (5, 0) {3};
        \node[main] (4) at (7, 0) {4};
        \draw (1) -- (2) node[midway, above] {$1, 2$};
        \draw (2) -- (3) node[midway, above] {$2, 3$};
        \draw (3) -- (4) node[midway, above] {$3, 4$};
    \end{tikzpicture}

\vspace{0.3cm}

    \begin{tikzpicture}[font = \footnotesize, node distance=5mm, main/.style = {draw}, state/.style ={ellipse, draw, minimum width = 0.8 cm}, baseline=+6mm]
    
        \node[draw=none,fill=none] at (0,0) {(T2)};
        \node[state] (1) at (1.5, 0) {$1, 2$};
        \node[state] (2) at (4, 0) {$2, 3$};
        \node[state] (3) at (6.5, 0) {$3, 4$};
        \draw (1) -- (2) node[midway, above] {$1, 3; 2$};
        \draw (2) -- (3) node[midway, above] {$2, 4; 3$};
    \end{tikzpicture}
 
 \vspace{0.3cm}
 
    \begin{tikzpicture}[font = \footnotesize, node distance=5mm, main/.style = {draw}, state/.style ={ellipse, draw, minimum width = 0.8 cm}, baseline=+6mm]
    
        \node[draw=none,fill=none] at (0,0) {(T3)};
        \node[state] (1) at (2.3, 0) {$1, 3; 2$};
        \node[state] (2) at (5.7, 0) {$2, 4; 3$};
        \draw (1) -- (2) node[midway, above] { $1, 4; 2, 3$ };
    \end{tikzpicture}
    \caption{A D-vine tree sequence on 4 elements.}
    \label{picture:dvine4_notation}
\end{figure}
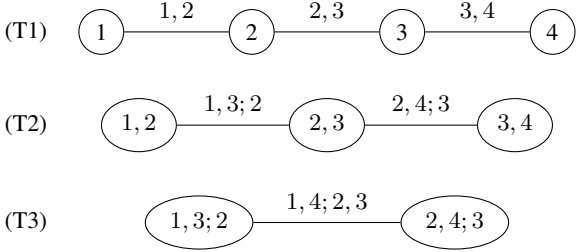

We write $c^t(\mathbf{u}; \bm{\eta})$ as a shorthand for the product of pair copulas belonging to tree $t$ in the D-vine, $c^t(\mathbf{u}(\mathbf{z}; \bm{\lambda}, \bm{\eta}^1, ..., \bm{\eta}^{t-1}); \bm{\eta}^t)$, and denote a $\tau$-truncated D-vine copula as $\prod_{t=1}^{\tau} c^t(\mathbf{u}; \bm{\eta}^t)$. Note that $c^t(\mathbf{u}; \bm{\eta})$ can be written:
\begin{align*}
     &\prod_{j=1}^{d-t} 
      c_{j, j+t; j+1 : j+t-1} \Big( F_{j|j+1 : j+t-1}(z_j | \mathbf{z}_{j+1 : j+t-1}), \nonumber  \\
    &\;  \cdot F_{j+t|j+1 : j+t-1}(z_{j+t} | \mathbf{z}_{j+1 : j+t-1}) ; \eta_{j, j+t; j+1 : j+t-1} \Big) \; ,
\end{align*}
where $\bm{\eta}^t=[\eta_{1, t+1; 2 : t},\ldots, \eta_{d-t, d; d-t+1 : d-1}]^T \in\mathbb{R}^{d-t}$ is the vector of the parameters of pair copulas in tree $t$. $\mathbf{u}=[\bm{F}_{1, t+1|2 : t}^T,\ldots,\bm{F}_{d-t,d|d-t+1 : d-1}^T]^T\in \mathbb{R}^{(d-t)\times 2} $ is the matrix of input data, so-called copula data, to the copulas of tree $t$, with the $j$-th row containing the entries $F_{j|j+1 : j+t-1}(z_j | \mathbf{z}_{j+1 : j+t-1})$ and $F_{j+t|j+1 : j+t-1}(z_{j+t} | \mathbf{z}_{j+1 : j+t-1})$.
For an extended introduction to vine copulas please consult Appendix \ref{app:intro_to_vines}.

\section{Stepwise Variational Inference with Vine Copulas}\label{sec:stepwiseVIvines}
We propose a D-vine distribution as the variational model:
\begin{align*}
    q(\mathbf{z} ; \bm{\lambda}, \bm{\eta}) &:= \prod_{j=1}^d q_j(z_j; \lambda_j) \nonumber \\
    &\quad \cdot \prod_{t=1}^{d-1} c^t(\mathbf{u}(\mathbf{z}; \bm{\lambda}, \bm{\eta}^1, ..., \bm{\eta}^{t-1}); \bm{\eta}^t) \; ,
\end{align*}

with marginal distributions $q'(\mathbf{z}; \bm{\lambda}) := \prod_{j=1}^d q_j(z_j; \lambda_j)$, i.e. the MF.

The D-vine tree structure and order of the random variables, as well as the pair copula family associated with each edge, are assumed to be given. We propose to optimize the parameters of the variational D-vine model, using the VR-IWAE bound with reparameterization for variance reduction. Here $\alpha$ is set to a low value, for which we show in a simulation study in Appendix \ref{app:simulation_study_alpha} that it overall works well across a range of simulation settings and examples. We perform the optimization of the variational parameters in a stepwise manner, tree by tree, as described in Algorithm \ref{algo:stepwiseVIvinesPseudoCode}.
The truncation level $\tau \leq d-1$ is inferred using a global stopping criterion based on the latent dependence that is present in the data. This yields a variational model with $d(d-1)/2 - (d - \tau)(d - \tau - 1)/2$ parameters contrary to estimating $d(d-1)/2$ parameters by default in a full D-vine. As a local stopping criterion for when the current variational parameter estimates have converged, we follow \citet{dhaka2020robust} and use $\hat{R}$: The trajectory of variational parameter estimates is regarded as a Markov chain (MC) and convergence of the latter is detected with the $\hat{R}$ statistics used in Markov-Chain Monte Carlo (MCMC) methods \cite{gelman2013bayesian}. Our proposed method is presented in Algorithm \ref{algo:stepwiseVIvinesPseudoCode}. Implementation details can be found in Appendix \ref{app:implementation}.

\paragraph{Theoretical justification for low $\alpha$}
An important justification for using the R\'enyi $\alpha$-divergence with a rather low value of $\alpha$, is that when the true posterior is Gaussian, it is an approximation of the forward KL divergence $KL(p||q)$. In the non-Gaussian case the $\alpha$ parameter allows to control the amount of weight put on the true posterior in the R\'enyi $\alpha$-divergence \citep{daudel2023alpha}. 
Using the forward KL divergence $KL(p||q)$ as VI objective, the approximate posterior recovers the correct parameters of the true posterior when the latter is Gaussian, as we show in Theorem \ref{thm:forwardKL}.
However, this is generally infeasible. On the other hand, VI based on the standard backward KL divergence $KL(q||p)$ does not result in variational parameter estimates that match the true posterior parameters unless the true posterior exhibits latent independence. This follows from Theorem \ref{thm:backwardKL} below.

Assume that the true posterior distribution $p(\Bb{z}|\Bb{x})$ is the multivariate Gaussian distribution $\mathcal{N}(\B{\mu},\B{\Sigma})$, which we approximate with $q(\Bb{z}; \bm{\phi})$, the multivariate Gaussian distribution $\mathcal{N}(\B{\nu},\B{\Psi})$ with $\bm{\phi} := (\bm{\nu}, \B{\Psi})$, constructed from Gaussian univariate marginals combined with a Gaussian D-vine, a D-vine with only Gaussian copulas. Further, let $\Bb{R}_{\Sigma}$ and $\Bb{R}_{\Psi}$ be the correlation matrices corresponding to $\B{\Sigma}$ and
$\B{\Psi}$, respectively, so that $\B{\Sigma} = \Bb{D}_{\Sigma}\Bb{R}_{\Sigma}\Bb{D}_{\Sigma}$
and $\B{\Psi} = \Bb{D}_{\Psi}\Bb{R}_{\Psi}\Bb{D}_{\Psi}$, where $\Bb{D}_{\Sigma}$ and 
$\Bb{D}_{\Psi}$ are the $d\times d$ diagonal matrices with diagonal elements $\sqrt{\Sigma_{ii}}$ and
$\sqrt{\Psi_{ii}}$, respectively. The following theorem establishes that by minimizing the forward KL-divergence the approximate posterior recovers true posterior.
 
\begin{theorem}\label{thm:forwardKL}
    The parameters $\bm{\phi} = (\B{\nu},\Bb{D}_{\Psi},\Bb{R}_{\Psi})$ of $q(\Bb{z}; \bm{\phi})$ obtained when minimizing the forward KL in the proposed stepwise manner are the true parameters, i.e $\B{\nu}=\B{\mu}$, $\Bb{D}_{\Psi}=\Bb{D}_{\Sigma}$ and $\Bb{R}_{\Psi}=\Bb{R}_{\Sigma}$.    
\end{theorem}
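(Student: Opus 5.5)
We minimize the forward KL divergence $KL(p\|q)$ step by step, first over the marginal (MF) parameters and then tree by tree. The key fact is that for fixed $p$,
\[
KL(p\|q_{\bm{\phi}}) = \text{const} - E_p[\log q(\Bb{Z};\bm{\phi})].
\]
With the D-vine factorization of $q$, this becomes
\[
-E_p[\log q] = -\sum_{j=1}^d E_p[\log q_j(Z_j;\lambda_j)] - \sum_{t=1}^{d-1} E_p\big[\log c^t(\mathbf{u}(\Bb{Z};\bm{\lambda},\bm{\eta}^{1:t-1});\bm{\eta}^t)\big].
\]
The forward KL is thus a sum of terms, each involving only a subset of the parameters. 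The dependence is triangular: the marginal terms depend only on $\bm{\lambda}$, and the tree-$t$ term depends on $\bm{\eta}^t$ and, through the copula data, on $\bm{\lambda},\bm{\eta}^1,\dots,\bm{\eta}^{t-1}$. So we argue by induction along the steps of Algorithm \ref{algo:stepwiseVIvinesPseudoCode}.

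\emph{Step 0 (marginals).} In the first step only the MF $q'(\mathbf{z};\bm{\lambda})$ is fitted. Then $KL(p\|q') = \sum_j KL(p_j\|q_j) + \text{const}$, where $p_j=\mathcal{N}(\mu_j,\Sigma_{jj})$ is the true marginal. Each term is minimized at $q_j=p_j$, which gives $\B{\nu}=\B{\mu}$ and $\Bb{D}_\Psi=\Bb{D}_\Sigma$. This is the standard moment-matching property of forward KL for Gaussians.

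\emph{Inductive step (tree $t$).} Suppose $\bm{\lambda}$ and $\bm{\eta}^1,\dots,\bm{\eta}^{t-1}$ equal the true Gaussian D-vine parameters of $p$. These true parameters are the partial correlations $\rho_{j,j+s;j+1:j+s-1}$ computed from $\Bb{R}_\Sigma$. Then the conditional distribution functions $F_{j|j+1:j+t-1}$ and $F_{j+t|j+1:j+t-1}$ used in $q$ coincide with those of $p$. Under $\Bb{Z}\sim p$, the $j$-th row of the copula data $\mathbf{u}$ is therefore distributed exactly according to the true pair copula of $p$. That copula is the Gaussian copula with the true partial correlation, by the Gaussian D-vine decomposition of $\mathcal{N}(\B{\mu},\B{\Sigma})$. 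Since $c^t$ is a product over edges, the objective in $\bm{\eta}^t$ separates:
\[
-\sum_{j=1}^{d-t} E_{p}\big[\log c_{j,j+t;j+1:j+t-1}(U_j,V_j;\eta_{j,j+t;\cdot})\big].
\]
Each summand equals $KL(c^{\text{true}}_{j}\|c(\cdot;\eta))$ plus a constant. It is minimized, uniquely by identifiability of the Gaussian copula family, at $\eta=$ the true partial correlation. This completes the induction.

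\emph{Conclusion.} After all trees have been fitted, the partial correlations on the D-vine equal those of $\Bb{R}_\Sigma$. The map from the $d(d-1)/2$ partial correlations on a regular vine to the correlation matrix is a bijection (Bedford and Cooke; cf. Appendix \ref{app:intro_to_vines}), so $\Bb{R}_\Psi=\Bb{R}_\Sigma$. The stopping criterion is also consistent: if all true partial correlations in tree $t$ vanish, the fitted copulas are independence copulas. In that case $\Bb{R}_\Sigma$ is itself represented by the truncated vine, and the same conclusion holds.

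The main obstacle is the inductive step. Each stepwise optimization is a separate problem, so we must check that the earlier parameters, once fixed at their true values, make the copula-data distribution under $p$ equal to the true pair-copula distribution. This rests on the conditional distribution functions being the h-functions of the true Gaussian copulas, i.e. on the simplifying assumption, which holds exactly for Gaussians. One also needs that the triangular structure lets us ignore the dependence of earlier terms on $\bm{\eta}^t$; this is immediate, since earlier terms do not involve $\bm{\eta}^t$. I would write this step out carefully using the recursive h-function formula for the Gaussian copula.
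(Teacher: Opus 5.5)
Your proof is correct, and it takes a genuinely different route from the paper's.

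The paper works in closed form. After the mean-field step (for which it cites Margossian et al.), it writes $KL(p\|q)=\tfrac12\bigl(\mathrm{tr}(\Bb{R}_\Psi^{-1}\Bb{R}_\Sigma)-d+\log|\Bb{R}_\Psi|-\log|\Bb{R}_\Sigma|\bigr)$. It factors $|\Bb{R}_\Psi|$ into terms $1-\eta^2$ over the vine edges. It then computes the diagonal of $\Bb{R}_\Psi^{-1}\Bb{R}_\Sigma$ entry by entry with Cramer's rule and Laplace expansions, imposing that the partial correlations of later trees vanish. This is done explicitly for trees one to three, and the general tree is stated by analogy. Each time the objective collapses to a sum over edges of $\frac{1-\rho\eta}{1-\eta^2}+\tfrac12\log(1-\eta^2)$. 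Its derivative $\frac{(1+\eta^2)(\eta-\rho)}{(1-\eta^2)^2}$ vanishes only at $\eta=\rho$.

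Your decomposition $KL(p\|q)=\mathrm{const}-E_p[\log q]$ explains this collapse. Once the earlier trees are correct, the tree-$t$ copula data under $p$ follow the true pair copula. Up to an additive constant, that per-edge expression is exactly $-E[\log c(U,V;\eta)]$ with $(U,V)$ drawn from the Gaussian copula with parameter $\rho$.

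Your route buys several things:
\begin{itemize}
\item It gives a uniform induction over all trees without matrix algebra, closing the general step that the paper only asserts.
\item It is more general. It only uses that the true marginals lie in the marginal family and that the truth is a simplified vine with the given structure, whose pair copulas come from identifiable model families. So it extends beyond the Gaussian case.
\item It shows at once why the backward KL fails: $E_q[\log q-\log p]$ also depends on $\bm{\eta}^t$ through the sampling distribution, so the triangular separation is lost.
\end{itemize}
The paper's computation, in turn, yields explicit expressions for the implied correlations such as $R_{\Psi,i,i+2}$. It also reuses the same machinery for the backward-KL analysis in Theorem \ref{thm:backwardKL}.

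Drop your side remark about the stopping criterion, because it is false as stated. If all true partial correlations in tree $t$ vanish, those in later trees need not. For $d=4$ you can take $\rho_{1,3;2}=\rho_{2,4;3}=0$ and $\rho_{1,4;2,3}\neq0$, since any partial correlations in $(-1,1)$ on a vine give a valid correlation matrix. So $\Bb{R}_\Sigma$ need not be represented by the truncated vine. The theorem concerns the full, untruncated procedure. The paper's Remark after the proof notes that under truncation the higher-order correlations are only approximated.
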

When instead minimizing the backward KL divergence, the approximate posterior will only recover the mean of the true posterior, and can only recover the correlation matrix in a special case.

\begin{theorem}\label{thm:backwardKL}
    If the parameters $\bm{\phi} = (\B{\nu},\Bb{D}_{\Psi},\Bb{R}_{\Psi})$ of $q(\Bb{z}; \bm{\phi})$ are obtained by minimizing the backward KL in the proposed stepwise manner, the true mean vector is recovered $\B{\nu}=\B{\mu}$, but the standard deviations $\Bb{D}_{\Psi}$ and correlation matrix $\Bb{R}_{\Psi}$ will \textbf{not} be equal to the true values, unless all correlations are $0$.    
\end{theorem}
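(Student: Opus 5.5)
The plan is to use the closed form of the backward KL between two Gaussians and follow the stepwise procedure step by step: first the mean-field (MF) step, then the trees of the Gaussian D-vine one at a time. Up to constants,
\begin{align*}
 KL(q\|p)=\tfrac12\big[\operatorname{tr}(\B{\Sigma}^{-1}\B{\Psi})-\log\det\B{\Psi}+(\B{\nu}-\B{\mu})^T\B{\Sigma}^{-1}(\B{\nu}-\B{\mu})\big].
\end{align*}
Three features of this formula drive the proof:
\begin{itemize}
\item $\B{\nu}$ enters only through a positive definite quadratic form, so at every step the minimiser is $\B{\nu}=\B{\mu}$.
\item The covariance $\B{\Psi}=\Bb{D}_{\Psi}\Bb{R}_{\Psi}\Bb{D}_{\Psi}$ separates from the mean.
\item $\log\det\B{\Psi}=2\log\det\Bb{D}_{\Psi}+\log\det\Bb{R}_{\Psi}$ splits additively.
\end{itemize}

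\emph{Step 1: the MF step.} The first step fixes the copula to independence, so $\Bb{R}_{\Psi}=\Bb{I}$. Minimising $\sum_i\big[(\B{\Sigma}^{-1})_{ii}\Psi_{ii}-\log\Psi_{ii}\big]$ gives $\Psi_{ii}=1/(\B{\Sigma}^{-1})_{ii}$. By the Schur-complement (or Cauchy--Schwarz) inequality, $\Sigma_{ii}(\B{\Sigma}^{-1})_{ii}\ge1$, with equality iff $\Sigma_{ij}=0$ for all $j\neq i$. Hence $\Bb{D}_{\Psi}\neq\Bb{D}_{\Sigma}$ unless all correlations vanish. In the stepwise scheme these marginal parameters $\B{\lambda}$ stay fixed in later trees, so the standard-deviation claim follows directly. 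This settles the mean and the standard deviations.

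\emph{Step 2: the correlation matrix.} With $\Bb{D}:=\Bb{D}_{\Psi}$ fixed, each later step minimises
\begin{align*}
 f(\Bb{R})=\operatorname{tr}(\B{\Lambda}\Bb{R})-\log\det\Bb{R},\qquad \B{\Lambda}:=\Bb{D}\B{\Sigma}^{-1}\Bb{D},
\end{align*}
over the partial correlations $\B{\eta}^t$ of the current tree, with earlier trees held fixed. By Step 1, $\Lambda_{ii}=1$. Differentiating with respect to an off-diagonal entry gives $\partial f/\partial R_{ij}\propto\Lambda_{ij}-(\Bb{R}^{-1})_{ij}$.

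Suppose, for contradiction, that the procedure returned $\Bb{R}=\Bb{R}_{\Sigma}$. Then $(\Bb{R}_{\Sigma}^{-1})_{ij}=\sqrt{\Sigma_{ii}\Sigma_{jj}}\,(\B{\Sigma}^{-1})_{ij}$, whereas $\Lambda_{ij}=D_{ii}D_{jj}(\B{\Sigma}^{-1})_{ij}$. Here $D_{ii}\le\sqrt{\Sigma_{ii}}$, with strict inequality for every $i$ that has a nonzero correlation. So the Euclidean gradient at $\Bb{R}_\Sigma$ is the matrix $(\Bb{D}^2\circ\B{\Sigma}^{-1}\text{-type}) - \Bb{R}_\Sigma^{-1}$, with off-diagonal entries $(D_{ii}D_{jj}-\sqrt{\Sigma_{ii}\Sigma_{jj}})(\B{\Sigma}^{-1})_{ij}$. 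These are nonzero whenever $(\B{\Sigma}^{-1})_{ij}\ne0$. Such a pair exists unless $\B{\Sigma}$ is diagonal, because $\B{\Sigma}^{-1}$ diagonal forces $\B{\Sigma}$ diagonal.

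To transfer this to the vine parameters I would use two facts. The map from D-vine partial correlations to $\Bb{R}$ is a diffeomorphism onto the correlation matrices. Moreover, the partial correlation of the last tree, $\eta_{1,d;2:d-1}$, moves $R_{1d}$ (and with it the $(1,d)$ entry of $\Bb{R}^{-1}$) along a one-dimensional curve. Stationarity in that parameter forces $(\Bb{R}^{-1})_{1d}=\Lambda_{1d}$, up to the Jacobian factor. By reordering via the tree structure, the same argument applies to each tree's parameters given the previous ones.

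A cleaner global route avoids tracking the D-vine Jacobian. The unconstrained minimiser of $f$ over positive definite matrices is $\B{\Lambda}^{-1}$. Its diagonal satisfies $(\B{\Lambda}^{-1})_{ii}\ge1/\Lambda_{ii}=1$, with equality for all $i$ iff $\B{\Lambda}$ is diagonal. So whenever correlations are present, $f$ is never stationary at $\Bb{R}_\Sigma$ along the correlation manifold. I would verify this by showing that $\Bb{R}_\Sigma^{-1}-\B{\Lambda}$ is not a diagonal matrix. A diagonal matrix is exactly the normal direction of the unit-diagonal constraint, since the tangent space consists of symmetric matrices with zero diagonal. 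This is the argument I would actually write, applying it at the last fitted tree $\tau$.

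\emph{Main obstacle: early truncation.} The step I expect to be delicate is the case where the procedure stops at $\tau<d-1$. If $\tau<d-1$, then $\Bb{R}_\Psi$ has the truncated-vine structure: the partial correlations beyond $\tau$ are zero. So if $\Bb{R}_\Sigma$ lacks that structure, the claim is immediate. If $\Bb{R}_\Sigma$ does have it, I would restrict the tangent argument to the directions spanned by trees $1,\dots,\tau$ and show the gradient has a nonzero component there. For this I would use a pair $(j,j+1)$ from the first tree with $(\B{\Sigma}^{-1})_{j,j+1}\ne0$, or otherwise the lowest tree carrying a nonzero partial correlation.
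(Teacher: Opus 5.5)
Your treatment of $\boldsymbol{\nu}$ and $\mathbf{D}_\Psi$ is correct and follows the paper's first step, which cites Margossian et al.\ for $D_{\Psi,ii}=1/\sqrt{(\boldsymbol{\Sigma}^{-1})_{ii}}$. Your Schur-complement inequality makes explicit why this differs from $D_{\Sigma,ii}$ whenever row $i$ has a nonzero correlation; if the theorem is read as saying only that the pair $(\mathbf{D}_\Psi,\mathbf{R}_\Psi)$ differs from the truth, this already proves it.

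The gap is in the correlation part. The stepwise output is \emph{not} a stationary point of $f$ on the correlation manifold. The tree-$t$ parameters are stationary only in the $d-t$ coordinates $\boldsymbol{\eta}^t$, and only at the $t$-truncated matrix $\mathbf{R}^{(t)}$ built from the returned $\boldsymbol{\eta}^1,\dots,\boldsymbol{\eta}^t$ with all higher trees at independence. So showing that $\mathbf{R}_\Sigma^{-1}-\boldsymbol{\Lambda}$ is not diagonal contradicts nothing.

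A concrete case: take $d=3$ with $\rho_{13;2}=0$ and $R_{\Sigma,12}R_{\Sigma,23}\neq 0$. Then $(\boldsymbol{\Sigma}^{-1})_{13}=0$, so the derivative in the only tree-2 coordinate vanishes at $\mathbf{R}_\Sigma$. Yet $\mathbf{R}_\Sigma^{-1}-\boldsymbol{\Lambda}$ has nonzero $(1,2)$ and $(2,3)$ entries, so applying your argument ``at the last fitted tree'' fails here.

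Your truncation fallback has the same defect. A nonzero gradient along tree-1 directions at $\mathbf{R}_\Psi$ is harmless, because tree 1 was fitted at $\mathbf{R}^{(1)}$, not at $\mathbf{R}_\Psi$. Moreover, in a truncated model a tree-$t$ parameter also moves the fill-in entries $R_{kl}$ with $|k-l|>t$. Its directional derivative is therefore not a single entry of $\boldsymbol{\Lambda}-\mathbf{R}^{-1}$.

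What is missing is a downward induction over the trees, close to the one-dimensional ``first route'' you sketched and then abandoned. Suppose $\mathbf{R}_\Psi=\mathbf{R}_\Sigma$ with final level $\tau$. By bijectivity of the partial-correlation parametrization, the returned parameters are the true partial correlations and $\mathbf{R}_\Sigma$ is $\tau$-truncated. Hence $\boldsymbol{\Sigma}^{-1}$, $\boldsymbol{\Lambda}$ and $\mathbf{R}_\Sigma^{-1}$ are banded with bandwidth $\tau$, and the fill-in terms drop out at $\mathbf{R}_\Sigma$:
\[
\frac{\partial f}{\partial \eta_{j,j+\tau;j+1:j+\tau-1}}\Big|_{\mathbf{R}_\Sigma}=2\big(\boldsymbol{\Lambda}-\mathbf{R}_\Sigma^{-1}\big)_{j,j+\tau}\,\frac{\partial R_{j,j+\tau}}{\partial \eta_{j,j+\tau;j+1:j+\tau-1}},
\]
where the Jacobian factor is strictly positive. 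The induction then runs as follows.
\begin{enumerate}
\item Your entrywise formula forces $(\boldsymbol{\Sigma}^{-1})_{j,j+\tau}=0$ for all $j$. Under bandwidth $\tau$ this is equivalent to $\rho_{j,j+\tau;j+1:j+\tau-1}=0$.
\item So $\mathbf{R}_\Sigma$ is $(\tau-1)$-truncated and equals $\mathbf{R}^{(\tau-1)}$. Tree $\tau-1$ is stationary at that matrix, so the same argument applies to tree $\tau-1$.
\item Descending to tree 1 gives $\boldsymbol{\Sigma}^{-1}$ diagonal.
\end{enumerate}

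For comparison, the paper only writes down the first-tree stationarity equations, in terms of the full partial correlations $\rho_{i,j|\text{rest}}=-\Lambda_{ij}$. It then asserts that their solution is generally not the truth. The induction above would also make that step rigorous.
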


The proofs of Theorems \ref{thm:forwardKL} and \ref{thm:backwardKL} can be found in Appendix \ref{app:fw_bw_KL_divergence_Gaussian_setting}.
Note that the marginal distributions estimated in tree 0 of Algorithm \ref{algo:stepwiseVIvinesPseudoCode} are used to transform the data to the copula scale, i.e. to obtain copula data for the following steps, but that is not the only reason why the stepwise procedure with the backward KL fails to recover the true correlations. As shown in the Proposition \ref{prop:backward_KL_known_std} in Appendix \ref{app:backwardKL_proof_and_proposition}, the stepwise procedure does not recover the true correlations even when the true standard deviations are known, unless a Gaussian D-vine with only one tree is the true model, so that all the vine-copula parameters are optimized simultaneously in the second step of the procedure.

\paragraph{VI with VR-IWAE bound and reparameterization}
We performed a simulation study to assess the effect of $\alpha \in (0,1)$ in the VR-IWAE on the approximation capacity of the D-vine and the MF as variational distributions. We found that an $\alpha =0.1$ consistently yields good performance in several scenarios and examples, see Appendix \ref{app:simulation_study_alpha}. We apply the reparameterization trick for variance reduction. \citet{tran2015copula} note that the reparameterization trick is always applicable for continuous $\mathbf{z}$ following a vine distribution, as it can be expressed as a deterministic transformation, the inverse marginal cdf, of $\mathbf{u} \sim \text{Unif}([0,1]^d)$.

\paragraph{Global stopping criterion}
The parameter $\eta$ of a Gaussian pair copula in tree $t$ of the D-vine is the (partial) correlation $\text{Cor}(Y_j ,  Y_{j+t}) = \eta \in [-1,1]$ where $Y_{j} := \Phi^{-1}\big( F(Z_j | \mathbf{z}_{j+1 : j+t-1}) \big)$\footnote{N.B.: Here, $\Phi(\cdot)$ is not a Gaussian marginal distribution, but comes from the definition of the Gaussian pair copula, which can be found in Equation \eqref{def:Gaussian_copula}.}
with $j \in [d-t]$, 
and therefore has an interpretable scale. If $| \eta | < 0.1$ for all pair copulas in the current tree $t$ in the variational D-vine model , we can assume that there is no more latent dependence to capture. We stop adding further trees to the variational model, and consider the $t-1$-truncated D-vine as our final variational model. For pair copula families other than the Gaussian, we propose to use the Kendall's $\tau$ rank correlation coefficient, which again has an interpretable scale.

\paragraph{$\hat{R}$ as local stopping criterion}
\citet{dhaka2020robust} points out that the standard stopping criterion $\Delta$ ELBO $< \epsilon$ (or any other lower bound) for optimization is flawed. As the scale of the ELBO changes with the parametrization of the model for the observed data, the choice of $\epsilon$ decides whether the optimization is stopped prematurely or the stopping criterion will ever be invoked. Instead, they propose to view the sequence of variational parameter estimates as a MC and use a MCMC diagnostic tool to assess convergence. They propose to use the rank-normalized $\hat{R}$ \citep{vehtari2021rank} as a stopping criterion for optimization. Due to its missing implementation in \verb|pyro|, which we will leave for future work, we resort to the split-$\hat{R}$ proposed by \citet{gelman2013bayesian}, to which we will simply refer as $\hat{R}$. It is defined as the square root of the ratio of between- ($\mathbb{V}$) and within-chain ($\mathbb{W}$) variance: $\hat{R} := \sqrt{\mathbb{V}/\mathbb{W}}$.  

\begin{algorithm}[tb]
   \caption{Stepwise VI with vine copulas}
   \label{algo:stepwiseVIvinesPseudoCode}
        \begin{algorithmic}
           \STATE {\bfseries Input:} $\gamma$, $R>1$, $STOP=$False, $t=1$, $\tau \leq d-1$ 
           \STATE {\bfseries Output:} $\hat{\bm{\lambda}}, \hat{\bm{\eta}}^1, \dots, \hat{\bm{\eta}}^{\tau}$
           \STATE \textit{Tree 0 (MF):} 
           \WHILE{$\hat{R} > R$}
               \STATE \[ \hat{\bm{\lambda}} \leftarrow \hat{\bm{\lambda}} + \gamma \widehat{\nabla_{\bm{\lambda}} l_N^{(\alpha)}(\hat{\bm{\lambda}}; \mathbf{x})}  \; .\]
               \STATE Compute $\hat{R}$.
           \ENDWHILE
           \WHILE{$STOP$ is False \AND $t \leq \tau$}
                \STATE \textit{Tree $t$:} Current variational model is:
                \[ q(\mathbf{z}; \bm{\lambda}, \bm{\eta}^1, \dots, \bm{\eta}^t) := q'(\mathbf{z}; \bm{\lambda}) \prod_{l=1}^t c^l(\mathbf{u}; \bm{\eta}^l) \; .\]
                with $\hat{\bm{\lambda}}, \hat{\bm{\eta}}^1, \dots, \hat{\bm{\eta}}^{t-1}$ fixed.
                \WHILE{$\hat{R} > R$}
                   \STATE \[ \hat{\bm{\eta}}^t \leftarrow \hat{\bm{\eta}}^t + \gamma \widehat{\nabla_{\bm{\eta}^t} l_N^{(\alpha)}(\hat{\bm{\eta}}^t; \mathbf{x})} \]
                   \STATE Compute $\hat{R}$.
                \ENDWHILE
                \STATE $STOP \leftarrow$ True \textbf{if} for all pair copulas of tree $t$ it is $|\rho| < 0.1$, \textbf{else} $STOP \leftarrow False$.
                \STATE
                $ t \leftarrow t+1$
            \ENDWHILE
            % \RETURN $\hat{\bm{\lambda}}, \hat{\bm{\eta}}^1, \dots, \hat{\bm{\eta}}^{\tau}$
        \end{algorithmic}
\end{algorithm}

\section{Results}

\subsection{Competitor models}\label{sec:competitor_models}
We compare stepwise VI with vine copulas to the following competitors: Gaussian MF (MF) with transformations for constrained latent variables (this corresponds to ADVI \citep{kucukelbir2017automatic} used as the default variational approximation in Stan \citep{carpenter2017stan});  Gaussian copula VI (GC-VI) as proposed by \citet{tran2015copula}; and lastly to masked auto-regressive flows (MAF) \citep{papamakarios2017masked}. We also compare these methods to samples from the true posterior, obtained with the \texttt{pyro} implementation of the No-U-Turn Sampler (NUTS). See Appendix \ref{app:competitor_models} for details on and a discussion of the competitor models.

\subsection{Simulated Examples}\label{sec:simulated_examples_results}

\paragraph{Stepwise D-vine recovers MF as correct posterior}
We start with an example where the true posterior is a MF and evaluate the degree to which stepwise VI with vines can correctly infer the complexity of the true posterior. That is, recover the latent independence without implicitly specifying independence as a hyperparameter in the variational model. 

We set this up with a regression example following \citet{shen2025wild}. First we sample $n=50$ i.i.d. draws from $(X_1, ..., X_4)^T \sim \mathcal{N}(\bm{0}, C)$ where $C := I_4$ and set:
\begin{align*}
    Y := \beta_1 X_1 + \beta_2 X_2 + \beta_3 X_3 + \beta_4 X_4 \;
\end{align*}
where the true values of the coefficients are set to $(\beta_1, ..., \beta_4)^T := (10, -10, 5, 3)$. We take $p(\bm{\beta})=\mathcal{N}(\bm{0},I_4)$ as the prior distribution, and $\prod_{i=1}^np(y_i | \mathbf{x}_i, \bm{\beta})=\prod_{i=1}^n\mathcal{N}(\mathbf{x}_i^T \bm{\beta}, 1)$ as the likelihood.

Our stepwise VI procedure correctly invoked its stopping criteria at tree 1, and recovered the contour plots obtained with NUTS, see Figure \ref{fig:first_page} and Figure \ref{fig:example_independence_pairsplots} in Appendix \ref{app:simulated_examples_results}. While the GC-VI and MAF also recover the contour plots, both methods incorrectly estimate a slight posterior correlation greater than 0 in absolute value. Only stepwise VI with vines with the global stopping criterion correctly invoking at tree 1, and MF recover \textit{exact} posterior independence.

% Even if we do not run GC-VI until convergence and decrease the number of MC samples by a factor of $100$ compared to what is proposed by \citet{tran2015copula}, GC-VI runs 8 times longer than stepwise VI with vines. 

% \begin{figure}[!t]
%     \centering
%     \includegraphics[width=\linewidth]{images/Dvine_recovers_MF/pairsplots.pdf}
%     \includegraphics[width=\linewidth]{images/Neelde/pairsplots.pdf}
%     \caption{Contour plots of latent vector comparing NUTS (black) as ground truth to variational approximations obtained with stepwise VI with vines (blue), MFVI (orange), GC-VI (green) and MAF (red) and an independence example (top) and needle example (bottom).}
%     \label{fig:example_independence_pairsplots}
% \end{figure}

\paragraph{Needle Example}
When the data only provides information about the sum or difference of two latent variables, the latent variables are only weakly identifiable and show a needle-shaped dependence. This is the case in linear regression with collinearity as proposed by \citet{shen2025wild} in their needle example.
We simulate a data set in the same way as for the previous example, except for setting:
\begin{align*}
    C &:= \begin{pmatrix}
        1&  0.9&  0.14& -0.85 \\
        0.9&  1 & -0.2& -0.9 \\
        0.14& -0.2&  1 & -0.1 \\
        -0.85& -0.9& -0.1 &  1
    \end{pmatrix}, \; 
\end{align*}
when simulating $X_1,\ldots,X_4$. We use the same likelihood and prior, and the same number of observations.

The results are displayed on the bottom row of Figure \ref{fig:first_page}. We see that regular MF and GC-VI fail to capture the dependency structure in the posterior, underestimating both variance and covariance, while our model can correctly identify both. Similarly MAFs did well in this setting, as can be seen in Appendix \ref{app:simulated_examples_results}.

% GC-VI runs 7 times slower than our proposed approach. Our approach and MAF recover the correct posterior dependence, while MFVI and GC-VI fail to do so. MAF was taking the least time to achieve this. \eg{Should we report time, if we do not report if of MF and MAF?} \lr{It is nice to note that we do better than the main competitor though, even if we have to make a note that MAFs are very fast.}

% \paragraph{Banana}
% When the data only provides information about the product of two latent variables, the latent variables are only weakly identifiable and show a banana-shaped dependence. \citet{shen2025wild} note that this is the case in an N-mixture model used by ecologists to count unmarked animals. Data $y_{i, t}$ is simulated as follows: set the latent variables $\lambda=30$ and $p=0.1$, sample $N_i \sim \mathrm{Poisson} (\lambda)$ for $i \in [20]$ and draw $y_{i, t} \sim \mathrm{Binomial}(N_i,p)$ for $t \in [5]$. 

\subsection{Inducing Points of Gaussian Processes}
We also apply our method to the setting of learning \textit{inducing points} in sparse Gaussian Process regression (SGPR) \cite{Titsias2009, hensman2013gaussian}. 

We consider a Gaussian process (GP) regression model with inputs $\mathbf{x}_i\in\mathcal{X}\subset\mathbb{R}^d$, and noisy observations $y_i=f(\mathbf{x}_i)+\epsilon_i$, with $f\sim\mathcal{GP}(0,\kappa_\theta)$, $\epsilon_i \stackrel{iid}{\sim}\mathcal{N}(0,\sigma^2)$ and $\kappa_\theta:\mathcal{X}\times\mathcal{X}\to\mathbb{R}$ a positive-definite covariance function, or \textit{kernel}, parametrized by some hyperparameters $\boldsymbol{\theta}$. For a finite set of input locations, the latent vector of function evaluations $\mathbf{f}=\{f(\mathbf{x}_i)\}_{i=1}^n$ takes on a multivariate normal distribution $\mathbf{f}\sim\mathcal{N}(\mathbf{0},K)$, where $K$ is the $n\times n$ matrix with entries $K_{ij}=\kappa(\mathbf{x}_i,\mathbf{x}_j)$. Conditioning on data yields Gaussian posterior and predictive distributions, but inference and tuning of the hyperparameters $\{\boldsymbol{\theta},\sigma^2\}$ scale cubically in the number of datapoints.

In SGPR, a set of \textit{inducing variables} $\mathbf{v}=\{f(\mathbf{z}_j)\}_{j=1}^m$ are introduced, which correspond to function evaluations at a new set of input locations $\{\mathbf{z}_j\}_{j=1}^m$, where $\mathbf{z}_j\in\mathcal{X}$ and crucially $m \ll n$. The joint probability model in the SGPR framework takes the form $p(\mathbf{y},\mathbf{f},\mathbf{v}\vert \boldsymbol{\theta})=p(\mathbf{y}\vert\mathbf{f},\boldsymbol{\theta})p(\mathbf{f}\vert\mathbf{v},\boldsymbol{\theta})p(\mathbf{v}\vert\boldsymbol{\theta})$, and variational SGPR approximates the posterior over the unknown latent function evaluations $p(\mathbf{f},\mathbf{v}\vert\mathbf{y})\approx q(\mathbf{f},\mathbf{v})=p(\mathbf{f}\vert\mathbf{v})q(\mathbf{v})$, with $q(\mathbf{v})=\mathcal{N}(\mathbf{m},S)$, with mean and covariance matrix $\{\mathbf{m},S\}$ learned from data as variational parameters alongside the inducing point locations $\{\mathbf{z}_j\}_{j=1}^m$ and hyperparameters $\{\boldsymbol{\theta},\sigma^2\}$.

While other distributions can be used for $q$ rather than the Gaussian, this choice is optimal for the ELBO defined using the KL-divergence, and one can derive closed form expressions for the ELBO, as well as the optimal values of $\{\mathbf{m},S\}$ \cite{Titsias2009}. Because of this optimality, most of the literature has focused on the structure of $S$, the two most common parameterizations being a full-rank Cholesky $S=LL^T$, where $L$ is an $m \times m$ lower-triangular matrix, or the mean-field $S=\text{diag}\{s_1^2,\ldots,s_m^2\}$. We employ our sequential VI framework to this problem, parameterizing $q$ as a Gaussian vine copula, starting from a MF and sequentially adding more dependencies as more trees are added.

We evaluate our model on a real-world benchmark dataset; the \texttt{pumadyn32nm} dataset consisting of 7168 training samples and 1024 test samples, with 32 features. We use the RBF kernel, with automatic relevance determination, i.e. $\kappa(\mathbf{x},\mathbf{x}')=\sigma_0^2\exp(-\sum_{k=1}^d \vert x_k-x_k'\vert^2 /2\ell_k^2)$. The hyperparameters $\{\sigma_0^2,\boldsymbol{\ell}\}$ and the noise variance $\sigma^2$ are fixed at values obtained from an initial (non-sparse) GP fit. This follows the setup in \cite{lazaro2009inter, snelson2005sparse} and is done to better showcase the properties of the variational posterior rather than hyperparameter tuning.

For the vine copula, we first fit the MF allowing all variational parameters to move freely. Then, moving on to the first tree level, in addition to fixing the MF parameters $\{s_1^2,\ldots,s_m^2\}$ and $\mathbf{m}$, we also fix the inducing point locations $\{\mathbf{z}_j\}_{j=1}^m$, as once $\mathbf{m}$ is fixed, it makes little sense to move them about. Because the inducing point inputs are allowed to move freely in the MF, computation of $\hat{R}$ in this first step can suffer due to "label-switching". We therefore opt to compute our convergence criterion $\hat{R}$ in this step based on $\Vert\mathbf{m}\Vert_2$ and $\Vert S \Vert_F$ instead of $\mathbf{m}$ and $\{s_1^2,\ldots,s_m^2\}$ directly, where $\Vert \cdot\Vert_2$ denotes the $L_2$ norm and $\Vert\cdot\Vert_F$ the Frobenius norm. This allows us to sidestep the "label-switching" issue as the $L_2$ norm is invariant to the ordering of the elements in $\mathbf{m}$, and the Frobenius norm invariant to a reordering of rows and columns of the covariance matrix. Once inducing point locations are fixed we run a greedy nearest-neighbor algorithm on the inducing point locations to set the tree-structure, ensuring that the covariance structure is built iteratively from nearest neighbors in the inducing point space. And we compute $\hat{R}$ again directly using copula parameters, $\boldsymbol{\eta}$.

Denoting the test dataset by $\mathbf{y}_*$ and the predictive mean and covariance by $\hat{\mathbf{y}}_*$ and $K_{**}$ respectively, we compute the root mean squared error (RMSE) $\sum_{i=1}^{n_{test}}(\mathbf{y}_{*,i}-\hat{\mathbf{y}}_{*,i})^2 / n_{test}$ and the negative log-predictive density (NLPD) $- \log \mathcal{N}(\mathbf{y}_*| \hat{\mathbf{y}},K_{**}+\sigma^2I)/n_{test}$.

We compare our stepwise procedure against full-rank (SGPR) and mean-field SGPR (MF-SGPR) methods, and display our results in Figure \ref{fig:GP_results_nlpd} and Figure \ref{fig:GP_results_rmse} in Appendix \ref{app:GP_results}. We note that only small improvements were seen in our experiments past tree one, and thus for visibility we limit our figures to this case. Compared to MF-SGPR and SGPR our method is equivalent in terms of RMSE, but the NLPD shows that our method interpolates between these two extremes. We further display in Figure \ref{fig:GP_cov} the evolution of the correlation matrix associated with the covariance matrix $S$ for our stepwise vine at different tree levels for the setting of $50$ inducing points. Our global stopping criterion did not trigger until $t=46$, indicating perhaps that the greedy procedure we used to set the tree structure was sub-optimal.

\begin{figure}[h]
\centering\includegraphics[width=\linewidth]{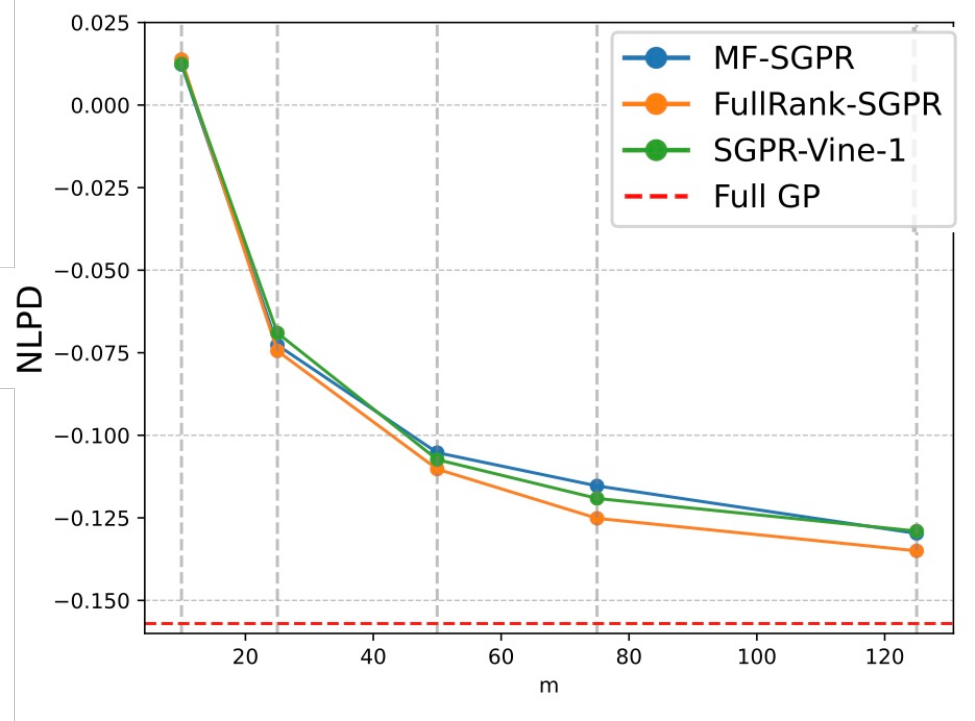}
    \caption{NLPD across a range of inducing point values for the \texttt{pumadyn32nm} dataset. We compare the mean-field SGPR (MF-SGPR), a full rank SGPR (FullRank-SGPR), and the vine at tree level $\tau=1$ to a full (non-sparse) GP fit (red dashed line).}
    \label{fig:GP_results_nlpd}
\end{figure}
\begin{figure}[h]
    \centering\includegraphics[width=\linewidth]{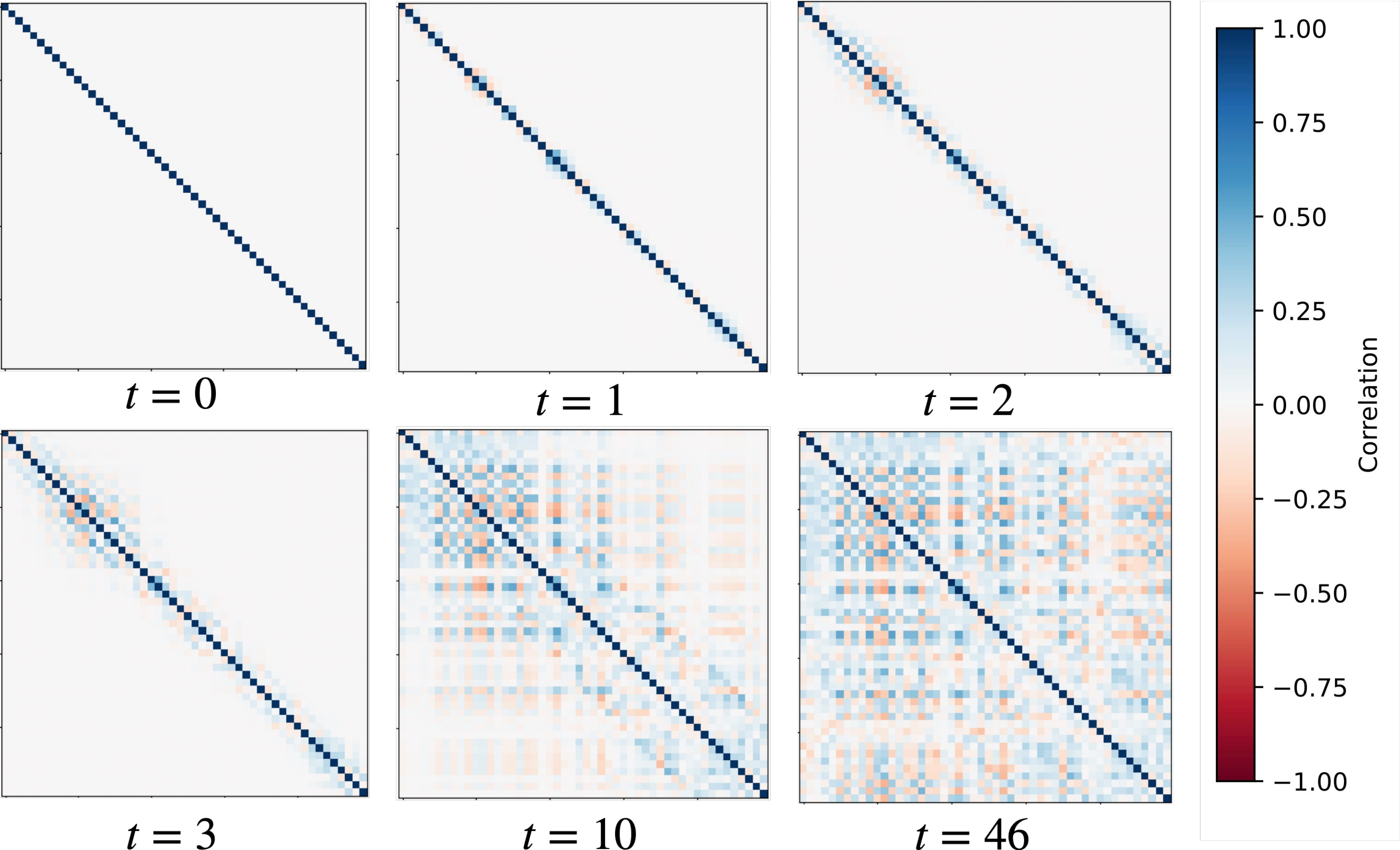}
    \caption{Evolution of the corresponding correlation matrix associated with the variational matrix $S$ at different tree levels $t$ of vine copula approximate posterior, for the \texttt{pumadyn32nm} example using 50 inducing points.}
    \label{fig:GP_cov}
\end{figure}

\section{Conclusion}
In this paper, we have proposed a novel general-purpose framework for variational inference based on vine copulas. Paired with a stepwise procedure for estimating variational parameters, our model captures MFVI as a special case, but is able to adapt, and automatically select the complexity of the approximating distribution. We illustrate our approach on real and simulated examples, and show that our model provides a middle ground between full-rank variational methods and MFVI.

\paragraph{Future Work}
Our work motivates several directions for future research. Learning the vine tree structure, which \citet{tran2015copula} unsuccessfully attempted, and selecting pair copula types would make stepwise VI with vines even more flexible. Our approach can be extended to jointly model continuous and discrete latent variables with the use of discrete copulas \citep{panagiotelis2012pair}, which is difficult within other frameworks such as ADVI \citep{kucukelbir2015automatic} or normalizing flows. In the GP application, analytic forms of the R\'enyi divergence based lower bound could be exploited as done by \citet{yue2019renyi}, which would give tighter bounds and potentially faster convergence. One could also explore other procedures for setting the tree-structure based on inducing-point location.

\FloatBarrier

\section*{Impact statement} 
This paper is a contribution to the field of variational inference and proposes a flexible and parsimonious variational family and estimation procedure. More accurate approximate posterior inference that is parsimonious in parameters, allows more accurate and scalable uncertainty quantification, which positively impacts decision making under uncertainty in fields like healthcare or climate modeling.

% This paper presents work whose goal is to advance the field of machine learning. There are many potential societal consequences of our work, none of which we feel must be specifically highlighted here.

% for camera-ready
\section*{Acknowledgment}
This work was supported by the Research Council of Norway, Integreat - Norwegian centre for knowledge-driven machine learning, project number 332645. Leiv Rønneberg was supported by the European Union’s Horizon Europe research and innovation programme under the Marie Sklodowska-Curie grant agreement No. 101126636. The work of Claudia Czado is supported in part by the Deutsche Forschungsgemeinschaft (DFG CZ-86/6-1 CZ-86/10-1).

\bibliography{references}
\bibliographystyle{icml2026}

\newpage
\appendix
\onecolumn
%%%%%%%%

\section{Notation}\label{app:notation}

Please find commonly used notation in Table \ref{table:notation}. In general, bold letters $\bm{v}$ denote vectors; capital letters such as $U$, $\mathbf{X}$ or $M$ denote random variables ($U$), random vectors ($\mathbf{X}$) or matrices ($M$); small letters denote (if not specified otherwise) real scalars ($x$) or vectors ($\mathbf{x} \in \mathbb{R}^d$).

\begin{table}[H]
  \caption{Notation used throughout the paper.}
  \label{table:notation}
  \begin{center}
    \begin{small}
      \begin{sc}
        \begin{tabular}{lll}
          \toprule
          Notation & Definition & Description  \\
          \midrule
          $[d], \; d \in \mathbb{N}$ & $[d] := \{1, ...., d \}, \; d \in \mathbb{N}$ & natural numbers from 1 to $d$ \\
          $j:k, \; j,k \in \mathbb{N}, \; j<k$ & $j:k := \{j, j+1, ..., k \}$ & natural numbers from $j$ to $k$ \\
          $X$ & -- & 1-dimensional random variable $X$ \\
          $\mathbf{X} \in \mathbb{R}^d, \; d \in \mathbb{N}$ & $\mathbf{X} := (X_1, ..., X_d)^T$ & $d$-dimensional random vector \\
          $\mathbf{X} =\mathbf{x}$ & -- & $d$-dimensional random vector $\mathbf{X}$ takes on value $\mathbf{x} \in \mathbb{R}^d$ \\
          $\mathbf{x}_{<j}, \; j \leq d$ & $\mathbf{x}_{<j} := (x_1, ..., x_{j-1})^T$ & $j-1$-dimensional sub-vector of $\mathbf{x} \in \mathbb{R}^d$ \\
          $\mathbf{x}_{\geq j}, \; j \leq d$ & $\mathbf{x}_{\geq j} := (x_j, ..., x_d)^T$ & $(d-j+1)$-dimensional sub-vector of $\mathbf{x} \in \mathbb{R}^d$ \\
          $\mathbf{x}_{j:k}, \; 1 \leq j < k \leq d$ & $\mathbf{x}_{j:k} := (x_j, x_{j+1}, ...,  x_k)^T$ & $(k-j+1)$-dimensional sub-vector of $\mathbf{x} \in \mathbb{R}^d$ \\
          $I_d$ & $I_d := diag(1, ..., 1) \in \mathbb{R}^d$ & $d$-dimensional identity matrix \\
          \bottomrule
        \end{tabular}
      \end{sc}
    \end{small}
  \end{center}
  \vskip -0.1in
\end{table}

\section{Extended Literature Review}\label{app:extended_literature_review}

Stepwise VI with vines is a copula based approach to VI, that refines the variational approximation in each step.

\paragraph{Copula based VI methods} Copula based VI methods have been discussed in the main part of this paper.

\paragraph{Boosting VI}

Boosting VI (BVI) \citep{guo2016boosting} approximates the posterior with a finite mixture of parametric base distributions in a stepwise manner: Borrowing ideas from gradient boosting, \citet{guo2016boosting} add a new component to the mixture in each boosting step. The result is a flexible variational distribution that can approximate various shapes of posteriors. In detail, in each step $t$ \citet{guo2016boosting} propose to mix the current variational distribution $q_{t-1}$ with $h_t$, a member from the chosen base distribution class with some mixing weight $\alpha_t$, i.e. $q_t := (1 - \alpha_t) q_{t-1} + \alpha_t h_t$. Here, $h_t$ is chosen with (Laplacian) gradient boosting and, keeping $h_t$ fixed, $\alpha_t$ is chosen with SGD. This is repeated  for a fixed number of steps. The authors focus on the Gaussian base distribution class and provide a closed form solution for the parameters of $h_t$ based on a heuristic. If the dimension $d$ of the latent space is high, \citet{guo2016boosting} are restricted to a diagonal $\Sigma$ due to computational burden.
As \citet{guo2016boosting}, \citet{miller2017variational} propose a finite Gaussian mixture as the variational model and formulate the reparametrization trick for lower variance ELBO gradient estimation for mixtures. This allows them to treat the mixing weight as well as the component distribution parameters as variational parameters and optimize for them with SGD. Despite the reparametrization trick, \citet{miller2017variational} need many samples - 400 in their experiments - to reliably estimate the ELBO gradients. \citet{miller2017variational} allow to model posterior correlation to varying degree by utilizing a low-rank plus diagonal covariance in the mixture components. As a stopping criterion for when to include higher-rank components into the mixture they propose to monitor the average absolute change in marginal variance.
By showing that the KL-divergence has bounded curvature on the set of mixtures of truncated, non-degenerate distributions, \citet{locatello2018aboostingopt} draw a connection between boosting VI and the functional Frank-Wolfe algorithm. Through this they provide a proof of why approximating the posterior with VI in a boosting style works for mixtures of truncated, non-degenerate distributions and give convergence rates. Additionally, they propose a variant of the Frank-Wolfe algorithm for boosting VI that updates all mixture weights in each iteration, leading to a higher computational load but fast convergence. 
Assuming a truncated support of the base distribution class in \citet{locatello2018aboostingopt} leads to in irreducible error term for the variational approximation to the true posterior. Additionally, special care needs to be taken during the optimization to avoid degenerate component distributions prohibiting the use of black-box VI methods. \citet{locatello2018boosting} mitigate these limitations. They prove that it suffices to have a bounded parameter space (instead of truncated support of the base distribution class) for convergence of the Frank-Wolfe algorithm and propose a Residual ELBO with regularization to guarantee non-degenerate component densities. Finally, they propose a stopping criterion based on the duality from the Frank-Wolfe algorithm.
\citet{campbell2019universal} show that regularization in boosting VI to avoid degeneracy of the approximating posterior \citep{guo2016boosting, locatello2018boosting} can lead to un-intuitive behavior of the approximation or loss of convergence guarantees. Instead, \citet{campbell2019universal} propose VI method using a Hellinger distance based  objective termed universal boosting VI. Through this they avoid degeneracy and the difficult joint optimization of mixture component and weight of \citet{miller2017variational} without any need for hyperparameter tuning while providing theoretical convergence guarantees. The authors refine their variational distribution for a fixed number of steps.

% \paragraph{Structured MF}
% \begin{itemize}
%     \item \citet{saul1995exploiting}
%     \item \citet{ghahramani1997structured}
%     \item \cite{ranganath2016hierarchical}: not really boosting, not really structured VI, something in between
%     \item 
% \end{itemize}

% \paragraph{Automatic VI methods}
% \citet{kucukelbir2017automatic} propose Automatic Differentiation VI (ADVI), a method for automatic VI on continuous latent variables implemented in Stan \citep{carpenter2017stan}, where only the joint model, but no variational model needs to be provided as input. ADVI transforms any constrained latent variables to an unconstrained space and performs VI with a Gaussian full-rank distribution or Gaussian MF as variational distribution there. Due to the Jacobian adjustment, the variational model in the constrained space is non-Gaussian. With the automatic differentiation and list of transformations and corresponding Jacobians in Stan, mini-batching, adaptive step-size as well as the reparametrization trick to reduce ELBO gradient variance, full-rank Gaussian ADVI performs on par with NUTS \citep{hoffman2014no} in terms of predictive accuracy and MF ADVI outperforms sampling based approaches in terms of speed. Stepwise VI with a D-vine and Gaussian pair copulas as the variational model can be understood as a more flexible version of ADVI.

% \citet{ambrogioni2021automatic}

\section{Introduction to Vine Copulas}\label{app:intro_to_vines}
 This introduction to vine copulas is based on \citet{czado2019analyzing}. More details can be found there or for example in \cite{nelsen2006introduction} and \citet{joe2014dependence}.

\subsection{Copulas}

Vine copulas build on the concept of copulas, which represent a distribution class with a specific support and specific marginals.

\begin{definition}
Let $d \in \mathbb{N}$. The function $C: [0,1]^d \rightarrow [0,1]^d$ is a \textit{d-dimensional copula} if it is a $d$-dimensional cumulative distribution function with uniform marginal distributions $U[0,1]$.
\end{definition}

Sklar's Theorem, \cite{sklar1959fonctions} provides the link between copulas and distributions: Any $d$-dimensional probability distribution of a random vector $(X_1, \dots, X_d)$ can be expressed as its corresponding $d$-dimensional copula. 

\begin{theorem}[Sklar's Theorem] \label{thm:Sklar}
    Let $\mathbf{X}$ be a $d$-dimensional random vector with distribution function $F$ and marginal distributions $F_1, \dots F_d$. Then $F$ can be expressed as:
    \begin{align} \label{thm:Sklar1}
        F(x_1, \dots, x_d) = C(F_1(x_1), \dots, F_d(x_d)) \; , \quad  (x_1, \dots, x_d) \in \mathbb{R}^d \; . 
    \end{align} 
    where $C$ is a copula. If $F$ is absolutely continuous, the copula $C$ is unique. We then say that the copula $C$ is corresponding to the distribution $F$. In the case of absolute continuity all densities exist and we can express the joint density $f$ of $\mathbf{X}$ as:
    \begin{align} \label{thm:Sklar1_density}
        f(x_1, \dots, x_d) = c(F_1(x_1), \dots, F_d(x_d)) \cdot f_1(x_1) \cdot ... \cdot f_d(x_d) \; . 
    \end{align} 
    Conversely, let $C$ be the $d$-dimensional copula corresponding to the joint distribution function $F$ of $\mathbf{X}$ with marginal distributions $F_1, \dots F_d$. Then we can express $C$ as:
    \begin{align}\label{thm:Sklar2}
        C(u_1, \dots, u_d) = F(F_1^{-1}(u_1), \dots, F_d^{-1}(u_d)) 
    \end{align}
    with copula density:
    \begin{align}\label{thm:Sklar2_density}
        c(u_1, \dots, u_d) = \frac{f(F_1^{-1}(u_1), \dots, F_d^{-1}(u_d))}{f_1(F_1^{-1}(u_1)) \cdot ... \cdot f_d(F_d^{-1}(u_d))} \; .
    \end{align}
\end{theorem}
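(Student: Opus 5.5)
The plan is to prove the theorem via the probability integral transform: build the copula as the joint distribution of the transformed coordinates $U_j := F_j(X_j)$, then read off uniqueness, the density formula and the converse from this construction.

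\textbf{Existence.} First take the case of continuous marginals $F_1,\dots,F_d$. Each $U_j = F_j(X_j)$ is then $U[0,1]$, and we define $C$ as the joint distribution function of $(U_1,\dots,U_d)$, which is a copula by construction. To obtain \eqref{thm:Sklar1}, I would show that the events $\{X_j \le x_j\}$ and $\{F_j(X_j) \le F_j(x_j)\}$ differ only by a null set. The inclusion $\subseteq$ holds by monotonicity of $F_j$. The difference set is contained in $\{x_j < X_j,\ F_j(X_j) = F_j(x_j)\}$, i.e.\ $X_j$ lies in an interval on which $F_j$ is flat, and this has probability zero. Intersecting over $j$ gives $F(\mathbf{x}) = C(F_1(x_1),\dots,F_d(x_d))$.

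For general marginals with atoms, I would use the distributional transform
\[
U_j = F_j(X_j-) + V_j\bigl(F_j(X_j) - F_j(X_j-)\bigr),
\]
with $V_j \sim U[0,1]$ independent of $\mathbf{X}$. One checks that $U_j$ is uniform and that $X_j = F_j^{-1}(U_j)$ almost surely, using the generalized inverse. Then $\{X_j \le x_j\} = \{U_j \le F_j(x_j)\}$ almost surely, and the same argument goes through. I expect this general-case step to be the main technical obstacle. Since the theorem only claims uniqueness and densities under absolute continuity, an acceptable fallback is to cite \citet{nelsen2006introduction} for the discrete part.

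\textbf{Uniqueness.} Equation \eqref{thm:Sklar1} determines $C$ on $\operatorname{Ran}F_1\times\dots\times\operatorname{Ran}F_d$. If $F$ is absolutely continuous, each marginal is absolutely continuous and hence continuous, so each $\operatorname{Ran}F_j \supseteq (0,1)$. Any two copulas satisfying \eqref{thm:Sklar1} therefore agree on $(0,1)^d$, and by continuity of copulas (they are Lipschitz) they agree on $[0,1]^d$.

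\textbf{Converse and densities.} For \eqref{thm:Sklar2}, substitute $x_j = F_j^{-1}(u_j)$ into \eqref{thm:Sklar1} and use $F_j(F_j^{-1}(u)) = u$, which holds for continuous $F_j$. For the density \eqref{thm:Sklar1_density}, differentiate \eqref{thm:Sklar1} with $\partial^d/\partial x_1\cdots\partial x_d$ and apply the chain rule: each $u_j = F_j(x_j)$ contributes a factor $f_j(x_j)$. This yields the density formula almost everywhere, given that $C$ is absolutely continuous. Absolute continuity of $C$ follows because $C$ is the distribution of $(F_j(X_j))_j$, whose density is obtained via the change of variables $u_j = F_j(x_j)$ on the set where $f_j > 0$.

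The second density formula \eqref{thm:Sklar2_density} follows by rearranging \eqref{thm:Sklar1_density} at $x_j = F_j^{-1}(u_j)$. Equivalently, differentiate \eqref{thm:Sklar2} using $\frac{d}{du}F_j^{-1}(u) = 1/f_j(F_j^{-1}(u))$. This requires $f_j(F_j^{-1}(u_j)) > 0$, which holds for almost every $u_j$ because the set where $f_j = 0$ carries no mass. That is sufficient, since densities are only defined up to null sets.
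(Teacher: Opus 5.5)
The paper does not prove this statement. It quotes it as background in the appendix, attributed to \citet{sklar1959fonctions} and following \citet{czado2019analyzing}, so there is no internal argument to compare against. Your proof is correct and takes the standard probabilistic route: the probability integral transform for continuous margins, and R\"uschendorf's distributional transform when the $F_j$ have atoms. The classical argument (see \citet{nelsen2006introduction}) instead defines a subcopula on $\operatorname{Ran}F_1\times\dots\times\operatorname{Ran}F_d$ via \eqref{thm:Sklar1} and extends it to $[0,1]^d$ by multilinear interpolation. Your route avoids that extension lemma, at the cost of checking that the transformed variables are uniform and satisfy $F_j^{-1}(U_j)=X_j$ almost surely. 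Your uniqueness argument also shows that continuous margins already suffice, which is weaker than the absolute continuity of $F$ assumed in the statement.

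The one soft spot is the density part. Differentiating $C(F_1(x_1),\dots,F_d(x_d))$ pointwise by the chain rule is only heuristic when $C$ and the $F_j$ are merely absolutely continuous. Your justification that $C$ is absolutely continuous is also only sketched. Both density formulas follow rigorously, and at once, from facts you already use. Put $T(\mathbf{x})=(F_1(x_1),\dots,F_d(x_d))$ and $G(\mathbf{u})=(F_1^{-1}(u_1),\dots,F_d^{-1}(u_d))$. Then:
\begin{itemize}
\item $T$ pushes $\bigotimes_j P_{X_j}$ forward to Lebesgue measure on $[0,1]^d$;
\item $P_{\mathbf{X}}$ has density $f/\prod_j f_j$ with respect to $\bigotimes_j P_{X_j}$, because it is carried by $\prod_j\{f_j>0\}$, where that product measure and Lebesgue measure are equivalent;
\item $G\circ T=\mathrm{id}$ almost everywhere with respect to $\bigotimes_j P_{X_j}$.
\end{itemize}
Hence $P(T(\mathbf{X})\in A)=\int_A f(G(\mathbf{u}))/\prod_j f_j(F_j^{-1}(u_j))\,d\mathbf{u}$ for every Borel set $A$. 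This gives absolute continuity of $C$ together with \eqref{thm:Sklar2_density}. Composing with $T$ then yields \eqref{thm:Sklar1_density} almost everywhere.
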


Equation \eqref{thm:Sklar1_density} illustrates how the joint density $f$ of a random vector $(X_1, \dots, X_d)$ can be split into the joint copula density, which captures the dependence structure of $X_1, \dots X_d$, and the marginal densities $f_1, \dots f_d$.

The inverse Sklar's Theorem \ref{thm:Sklar} gives the construction of the \textit{elliptical copulas}, to which the Gaussian copula belongs.

\begin{definition}[bivariate Gaussian copula]
    Let $\Phi_2 ( \cdot, \cdot; \rho)$ be the $2$-dimensional standard normal distribution with mean vector $\bm{\mu} = 0$ and correlation parameter $\rho \in (0, 1)$, and let $\Phi^{-1}(\cdot)$ be the inverse of the univariate standard normal distribution. Then by Sklar's Theorem  \ref{thm:Sklar} we obtain the \textit{bivariate Gaussian copula} by:
    \begin{align}\label{def:Gaussian_copula}
        C(u_1, u_2; \rho) = \Phi_2(\Phi^{-1}(u_1), \Phi^{-1}(u_2); \rho) \; .
    \end{align}
\end{definition}

Another class of copulas, the Archimedean copulas, is defined through generator functions and has members such as the Clayton, Gumbel, Frank or Joe copulas. Please find more details in \cite{nelsen2006introduction}.

\subsection{From Copulas to Vines: Pair Copula Decomposition and Construction}
Equation \eqref{thm:Sklar1_density} of Sklar's Theorem \ref{thm:Sklar} provides a recipe to estimate flexible multivariate densities by modeling $d$-dimensional dependence and marginals separately. However, estimating a $d$-dimensional copula is challenging. Additionally, (parametric) copula families, such as elliptical or Archimedean, do not allow combine different types of dependence, e.g. upper, lower tail dependence or both. They are thus limited in their modeling capacity.

\citet{aas2009pair}, which the rest of this section is based on, decompose a multivariate density by using a cascade of pair copulas as bivariate building blocks. This decomposition can then be reversed in order to construct multivariate copulas and distribution functions respectively. These are flexible and their construction is simple. This is the idea of \textit{pair copula construction}.

We define the following notation:

\begin{definition}\label{def:notation_copula_cond_dist}
    Let $\mathbf{X}_D \in \mathbb{R}^d$ be a random vector and $\mathbf{x}_D \in \mathbb{R}^d$, let $i, j, d \in \mathbb{N}$ and $D \subset \mathbb{N}$ with $i, j \notin D$ and $| D | = d$. Let $F_{i j \, | \, D}(\cdot, \cdot \, | \, \mathbf{X}_D = \mathbf{x}_D)$ be the conditional distribution of $(X_i, X_j)$ given that $\mathbf{X}_D = \mathbf{x}_D $. The copula distribution associated with $F_{i j \, | \, D}(\cdot, \cdot \, | \, \mathbf{X}_D = \mathbf{x}_D)$ is denoted by:
    \begin{align*}
        C_{i j ; D}( \cdot, \cdot ; \mathbf{x}_D) \; .
    \end{align*}
    If existing, its corresponding density is denoted by:
    \begin{align*}
        c_{i j ; D}( \cdot, \cdot ; \mathbf{x}_D) \; .
    \end{align*}
\end{definition}

We make a 3-dimensional example to illustrate a pair copula decomposition.

\begin{example}[Pair copula decomposition]
    Let $\mathbf{X} = (X_1, X_2, X_3)$ be a random vector with joint density function $f_{1 2 3}$ and marginal density functions $f_1, f_2$ and $f_3$. Using conditioning we can rewrite the joint density function:
    
    \begin{align} \label{pcc:start_term}
        f_{1 2 3}(x_1, x_2, x_3) = f_{1 | 2 3}(x_1 \, | \, x_2, x_3) f_{2 | 3}(x_2 \, | \, x_3) f_3(x_3) \; ,
    \end{align}
    
    with:
    
    \begin{align}
        f_{2 | 3}(x_2 \, | \, x_3) &= \frac{f_{2 3}(x_2, x_3)}{f_3(x_3)} \; , \label{pcc:umformung1}\\
        f_{1 | 2 3}(x_1 \, | \, x_2, x_3) &= \frac{f_{1 2 3}(x_1, x_2, x_3)}{f_{2 3}(x_2, x_3)} = \frac{f_{1 3 | 2}(x_1, x_3 \, | \, x_2)}{f_{3 | 2}(x_3 \, | \, x_2)}\; . \label{pcc:umformung2}
    \end{align}
    
    By Sklar's Theorem \ref{thm:Sklar} we know, that:
    
    \begin{align*}
       f_{2 3}(x_2, x_3) = c_{2 3} (F_2(x_2), F_3(x_3)) f_2(x_2) f_3(x_3) \; , 
    \end{align*}
    
    and thus \eqref{pcc:umformung1} becomes:
    
    \begin{equation}\label{pcc:zwischen_ergebnis1}
        \begin{aligned}[b]
            f_{2 | 3}(x_2 \, | \, x_3) &:= \frac{f_{2 3}(x_2, x_3)}{f_3(x_3)} = c_{2 3} (F_2(x_2), F_3(x_3)) f_2(x_2) \; . 
        \end{aligned}
    \end{equation}
    
    In the same manner we obtain (\ref{pcc:umformung2}):
    
    \begin{equation}\label{pcc:zwischen_ergebnis2}
        \begin{aligned}[b]
            f_{1 | 2 3}(x_1 \, | \, x_2, x_3) &= \frac{f_{1 3 | 2}(x_1, x_3 \, | \, x_2)}{f_{3 | 2}(x_3 \, | \, x_2)} \\
            &= \frac{c_{1 3; 2}(F(x_1 \, | \, x_2), F(x_3 \, | \, x_2); x_2) f_{1 | 2}(x_1 \, | \, x_2) f_{3 | 2}(x_3 \, | \, x_2)}{f_{3 | 2}(x_3 \, | \, x_2)} \\
            &= c_{1 3; 2}(F(x_1 \, | \, x_2), F(x_3 \, | \, x_2); x_2) f_{1 | 2}(x_1 \, | \, x_2) \\
            &= c_{1 3; 2}(F(x_1 \, | \, x_2), F(x_3 \, | \, x_2); x_2) c_{1 2} (F_1(x_1), F_2(x_2)) f_1(x_1) \; . 
        \end{aligned}
    \end{equation}
    
    Combining (\ref{pcc:zwischen_ergebnis1}) and (\ref{pcc:zwischen_ergebnis2}) we can decompose \eqref{pcc:start_term} into a product of pair copulas and marginal distributions:
    
    \begin{equation}
        \begin{aligned}[b]
            f_{1 2 3}(x_1, x_2, x_3) = \, & c_{1 3; 2}(F(x_1 \, | \, x_2), F(x_3 \, | \, x_2); x_2) \\
            & c_{1 2} (F_1(x_1), F_2(x_2)) \, c_{2 3} (F_2(x_2), F_3(x_3)) \\
            & f_1(x_1) \, f_2(x_2) \, f_3(x_3) \; . \label{pcc:decomposition}
        \end{aligned}
    \end{equation}
\end{example}

The decomposition with conditioning in \eqref{pcc:start_term} is not unique. Neither is therefore \eqref{pcc:decomposition}. As a second remark, we note, that $c_{1 3; 2}(\cdot, \cdot; x_2)$, the pair copula associated with the conditional distribution of $(X_1, X_3)$ given $X_2 = x_2$ depends on the value $x_2$ of $X_2$. If we ignore this dependence, i.e.:
\begin{align*}
    \forall x_2 \in \mathbb{R}:& \quad c_{1 3; 2}(u_1, u_3; x_2) = c_{1 3; 2}(u_1, u_3), \quad u_1 \in [0,1], \; u_3 \in [0,1] \ ,
\end{align*}
we make the \textit{simplifying assumption}: Copulas associated with conditional distributions do not depend on the value(s) of the conditioning variable(s).
With the simplifying assumption, the decomposition of \eqref{pcc:decomposition} can be used as a construction of the three dimensional density $f_{123}$ from pair copula densities, conditional distributions and marginal densities. In this case we speak of \textit{pair copula construction}.
The construction of the 3-dimensional example above can be generalized to $d$ dimensions.

\subsection{Regular Vines}
For a $d$-dimensional probability distribution there exist several pair copula constructions. \citet{bedford2001probability} and \citet{bedford2002vines} introduced \textit{regular vines (R-vines)} and the \textit{R-vine specification} to efficiently represent the pair copula constructions. The \textit{R-vine specification} captures the structure of the pair copula construction: Each bivariate copula is associated with an edge in a tree in a sequence of nested trees, the \textit{R-vine tree sequence}. This compact notation facilitates the estimation and sampling procedures on R-vines. \citet{bedford2001probability} and \citet{bedford2002vines} also show, that each R-vine specification represents a unique $d$-dimensional distribution $F$.

\begin{definition}[(Regular) Vine tree sequence] \label{def:rvine}
    A set of trees $\mathcal{V} = (T_1, ..., T_{d-1})$ is a (regular) vine tree sequence (R-vine tree sequence) on $d$ elements if:
    \begin{enumerate}
        \item[(i)] $T_1$ is a tree with edge set $E_1$ and node set $V_1 = \{1, ..., d\}$.
        \item[(ii)] For $t \in \{2, ..., (d-1)\}$ it holds that $T_t$ is a tree with edge set $E_t$ and node set $V_t = E_{t-1}$.
        \item[(iii)] For $t \in \{2, ..., (d-1)\}$ and $\{a, b\} \in E_t$ with $a = \{a_1, a_2\}$ and $b = \{b_1, b_2\}$ we have that $|a \cap b| = 1$ (proximity condition).
    \end{enumerate}
\end{definition}

The proximity condition ensures that nodes $a$ and $b$ are only then joined by an edge in tree $T_t$ if they share a common node in tree $T_{t-1}$, where $a, b \in E_{t-1}$. 

R-vines can be divided into sub-classes depending on the shape of each tree in the vine tree sequence. One of these sub-classes is the class of D-vines.

\begin{definition}[D-vine]
    An R-vine tree sequence $\mathcal{V}$ on $d$ elements is called a \textit{D-vine}, if for each node $v$ of each tree $T_t \in \mathcal{V} , \; t \in [d-1]$ it holds that $deg(v) \leq 2$, i.e. each tree $T_t \in \mathcal{V} , \; t \in [d-1]$ is a path.
\end{definition}

Depicted in Figure \ref{picture:dvine4_notation2} is an example of a 4-dimensional D-vine using a notation consistent with Definition \ref{def:rvine}.

\vspace{0.2cm}
\begin{figure}[H]
    % \centering
    \begin{tikzpicture}[font = \footnotesize, node distance=5mm, main/.style = {draw, shape = circle}, baseline=+6mm]
        
        \node[draw=none,fill=none] at (0,0) {(T1)};
        \node[main] (1) at (4.5, 0) {1};
        \node[main] (2) at (6.5, 0) {2};
        \node[main] (3) at (8.5, 0) {3};
        \node[main] (4) at (10.5, 0) {4};
        \draw (1) -- (2) node[midway, above] {$1, 2$};
        \draw (2) -- (3) node[midway, above] {$2, 3$};
        \draw (3) -- (4) node[midway, above] {$3, 4$};
    \end{tikzpicture}

\vspace{0.3cm}

    \begin{tikzpicture}[font = \footnotesize, node distance=5mm, main/.style = {draw}, state/.style ={ellipse, draw, minimum width = 0.8 cm}, baseline=+6mm]
    
        \node[draw=none,fill=none] at (0,0) {(T2)};
        \node[state] (1) at (5, 0) {$1, 2$};
        \node[state] (2) at (7.5, 0) {$2, 3$};
        \node[state] (3) at (10, 0) {$3, 4$};
        \draw (1) -- (2) node[midway, above] {$1, 3; 2$};
        \draw (2) -- (3) node[midway, above] {$2, 4; 3$};
    \end{tikzpicture}
 
 \vspace{0.3cm}
 
    \begin{tikzpicture}[font = \footnotesize, node distance=5mm, main/.style = {draw}, state/.style ={ellipse, draw, minimum width = 0.8 cm}, baseline=+6mm]
    
        \node[draw=none,fill=none] at (0,0) {(T3)};
        \node[state] (1) at (5.8, 0) {$1, 3; 2$};
        \node[state] (2) at (9.2, 0) {$2, 4; 3$};
        \draw (1) -- (2) node[midway, above] { $1, 4; 2, 3$ };
    \end{tikzpicture}
    \caption{A D-vine tree sequence on 4 elements.}
    \label{picture:dvine4_notation2}
\end{figure}
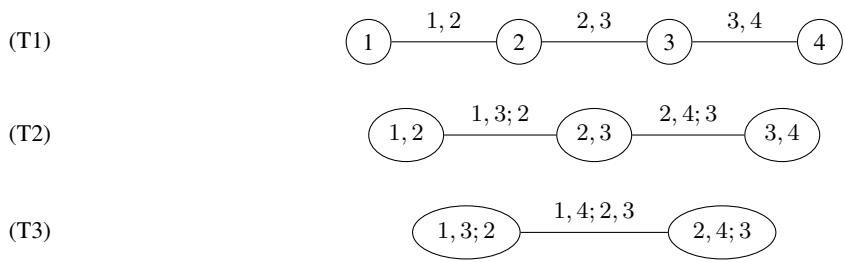

To draw the final link between the vine tree sequence and regular vine construction to obtain regular vine distributions, some notation is needed. 
% By \cite{bedford2002vines} and results of \cite{kurowicka2003parameterization} the edges of each tree of the vine tree sequence $\mathcal{V}$ can be uniquely identified by two \textit{conditioned nodes} and a set of \textit{conditioning nodes}.

\begin{definition}[Complete union, conditioning set, conditioned set] \label{def:complete_union}
    Let $\mathcal{V}$ be an vine tree sequence. The \textit{complete union} $U_e$ of the edge $e \in E_t$ is defined as:
    \begin{align*}
        U_e := \{ j \in V_1 \ | \ \exists e_1 \in E_1, ..., e_{t-1} \in E_{t-1} \quad  \text{s.th.} \quad  j \in e_1 \in ... \in e_{t-1} \in e \} .
    \end{align*}
    The set:
    \begin{align*}
        D_e := U_a \cap U_b
    \end{align*}
    is called \textit{conditioning set} $D_e$ of an edge $e = \{a, b\}$ and the \textit{conditioned sets} $\mathcal{C}_{e, a}$, $\mathcal{C}_{e, b}$ and $\mathcal{C}_e$ are given by:
    \begin{align*}
        \mathcal{C}_{e, a} := U_a \setminus D_e \, , \quad  \mathcal{C}_{e, b} := U_b \setminus D_e \quad \text{and}  \quad \mathcal{C}_e  := \mathcal{C}_{e, a} \cup \mathcal{C}_{e, b} \ .
    \end{align*}
\end{definition}

% \begin{definition}[Constraint set] \label{def:constraint_set}
%     The \textit{constraint set} $\mathcal{C} \mathcal{V}$ for the R-vine tree sequence $\mathcal{V}$ is defined as:
%     \begin{align}
%         \mathcal{C} \mathcal{V} := \big\{ ( \mathcal{C}_{e, a}, \mathcal{C}_{e, b} ; D_e ) \ | \ e = \{a, b\}, \ e \in E_i \quad \text{for} \quad i \in  [d-1] \big\} \ .
%     \end{align}
%     Here the edge $e = ( \mathcal{C}_{e, a}, \mathcal{C}_{e, b} ; D_e )$ of the R-vine tree sequence will often be abbreviated by $e = (e_a, e_b ; D_e)$.
% \end{definition}

Finally, we can piece everything together.

\begin{definition}[R-vine specification]
The triple $(\bm{F}, \mathcal{V}, B)$ is called \textit{R-vine specification} if:

\begin{enumerate}
    \item[(i)] $\bm{F} = (F_1, ..., F_d)$ is a vector of continuous and invertible distribution functions,
    \item[(ii)] $\mathcal{V}$ is an R-vine tree sequence on $d$ elements and
    \item[(iii)] $B := \big\{C_e \ | \ e \in E_t \quad \text{for} \quad t \in [d-1] \big\}$ is the set of bivariate copulas $C_e$ with $E_t$ the edge set of tree $T_t$ of the R-vine tree sequence $\mathcal{V}$.
\end{enumerate}

\end{definition}

By this definition each edge $e \in E_t$ of a tree $T_t$ in $\mathcal{V}$ corresponds to a bivariate copula $C_e$.

\begin{definition}[Regular vine distribution]\label{def:regular_vine_distribution}

A joint distribution $F$ of the random vector $\mathbf{X} = (X_1, ..., X_d)$ has a \textit{regular vine distribution}, if it realizes an R-vine specification $(\bm{F}, \mathcal{V}, B)$, i.e. if $C_e$ is the bivariate copula of $X_{\mathcal{C}_{e, a}}$ and $X_{\mathcal{C}_{e, b}}$ given $\mathbf{X}_{D_e}$ for each edge $e = \{a, b\} \in E_t$ and the marginal distribution of $X_j$ is $F_j$ for $j \in [d]$.

\end{definition}

\begin{theorem} \label{thm:bedford_cooke}

Let $(\bm{F}, \mathcal{V}, B)$ be an R-vine specification on $d$ elements where all pair copulas $C_e \in B$ satisfy the simplifying assumption and have densities $c_e$. There is a unique distribution $F$ that realizes this R-vine specification with density:
\begin{align*}
f_{1, ...d}(x_1, ..., x_d) &= \prod_{j = 1}^d f_j(x_j) \; \cdot \\
& \quad \ \prod_{t = 1}^{d-1} \prod_{e \in E_t} c_{ \mathcal{C}_{e, a}, \mathcal{C}_{e, b} ; D_e } \big( F_{\mathcal{C}_{e, a} | D_e}(x_{\mathcal{C}_{e, a}} | \mathbf{x}_{D_e} ), F_{\mathcal{C}_{e, b} | D_e}(x_{\mathcal{C}_{e, b}} | \mathbf{x}_{D_e}) \big) \; , 
\end{align*}
 
where $f_j$ denote the densities of $F_j$.
 
\end{theorem}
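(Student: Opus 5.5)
The plan is to prove the statement by induction on the dimension $d$, generalizing the $3$-dimensional pair copula decomposition in the Example to arbitrary R-vines. The case $d=2$ is Sklar's Theorem~\ref{thm:Sklar} itself: $f_{12}=c_{12}(F_1,F_2)f_1f_2$, and both existence and uniqueness are immediate there.

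\textbf{Combinatorial step.} For the induction step I first need a purely combinatorial lemma about the tree sequence $\mathcal{V}$ (Definition~\ref{def:rvine}). Take the single edge $e^*\in E_{d-1}$ and pick one of its conditioned variables, say $a$. I would show two things.
\begin{itemize}
\item[(i)] In every tree $T_t$ there is exactly one edge $e_t$ with $a\in\mathcal{C}_{e_t}$. Write its other conditioned variable as $b_t$ and its conditioning set as $D_{e_t}$. Then $D_{e_1}=\emptyset$ and $D_{e_{t+1}}=D_{e_t}\cup\{b_t\}$, so $\{b_1,\dots,b_{d-1}\}=[d]\setminus\{a\}$.
\item[(ii)] Deleting $a$, and every edge whose conditioned set contains $a$, leaves a regular vine tree sequence on the $d-1$ elements $[d]\setminus\{a\}$. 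Its complete unions, conditioning sets and conditioned sets (Definition~\ref{def:complete_union}) are unchanged.
\end{itemize}
Both parts would be proved by downward induction over the trees, using the proximity condition and the fact that complete unions of edges in $T_t$ have size $t+1$. I expect this lemma to be the main obstacle: it is where the nested structure is actually used. The D-vine case is easy (take $a=1$ or $a=d$), but the general R-vine case needs care to rule out $a$ appearing in two conditioned sets of the same tree.

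\textbf{Existence.} Given (i) and (ii), define $f$ by the product formula in the statement. Split it as $f=g\cdot h$. Here $g$ is the product over the reduced vine on $[d]\setminus\{a\}$, which by the induction hypothesis is the density of a distribution realizing the reduced specification. The remaining factor is
\[
h(x_a\mid \mathbf{x}_{-a})=f_a(x_a)\prod_{t=1}^{d-1} c_{a,b_t;D_{e_t}}\big(F_{a|D_{e_t}},F_{b_t|D_{e_t}}\big).
\]
By recursive application of Sklar's formula with the simplifying assumption, exactly as in the Example, each partial product $f_a\prod_{s\le t}c_{a,b_s;D_{e_s}}$ is the conditional density of $x_a$ given $\mathbf{x}_{D_{e_{t+1}}}$. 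Hence $h$ integrates to one in $x_a$, and $f$ is a density whose $(-a)$-marginal is $g$. The arguments $F_{\cdot|D}$ fed into the copulas are defined recursively through $h$-functions (conditional distribution functions of the pair copulas). I must check that they coincide with the true conditional distribution functions of $f$. This should follow from the same telescoping, since each $F_{a|D_{e_t}}$ is obtained by integrating the partial product above.

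\textbf{Realization and uniqueness.} To see that $f$ realizes the specification (Definition~\ref{def:regular_vine_distribution}), I split the edges into two kinds. For edges not involving $a$, the property is inherited from $g$ by induction. For the edge $e_t$, the joint density of $(X_a,X_{b_t})$ given $\mathbf{X}_{D_{e_t}}$ equals $c_{a,b_t;D_{e_t}}(F_{a|D},F_{b_t|D})\,f_{a|D}\,f_{b_t|D}$, so its copula is $c_{e_t}$. For uniqueness, let $F'$ be any distribution realizing the specification. Apply the chain rule $f'=f'_{-a}\cdot f'_{a|-a}$ and expand $f'_{a|-a}$ along the chain $D_{e_1}\subset\dots\subset D_{e_{d-1}}$ using the prescribed conditional copulas. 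This reproduces $h$. The induction hypothesis then forces $f'_{-a}=g$, so $F'=F$.
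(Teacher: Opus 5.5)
The paper gives no argument of its own for this theorem: its proof only points to Bedford and Cooke. Your proposal is therefore a genuinely self-contained alternative, and it is correct.

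The standard argument for the cited result is symmetric in the top edge $e^*$, which has conditioned set $\{a,b\}$ and conditioning set $D$. The two nodes of $T_{d-1}$ generate sub-vines on $[d]\setminus\{b\}$ and $[d]\setminus\{a\}$ that overlap in a sub-vine on $D$. Induction gives unique distributions on both with a common marginal on $D$, and one glues them by coupling $F_{a|D}$ and $F_{b|D}$ with $C_{a,b;D}$, i.e.\ $f=f_{[d]\setminus\{b\}}\,f_{[d]\setminus\{a\}}\,c_{a,b;D}(F_{a|D},F_{b|D})/f_D$. You use only one of the two sub-vines and peel $a$ off along the chain $\emptyset=D_{e_1}\subset\cdots\subset D_{e_{d-1}}$. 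This costs you the leaf-removal lemma, but it reduces existence, realization and uniqueness to the one-variable telescoping of the Example, with no gluing of overlapping sub-vines.

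The lemma does hold. The clean way to prove it is to track $S_t:=\{e\in E_t: a\in U_e\}$ rather than conditioned sets.
Suppose $S_{t+1}=\{e'\}$ with $e'=\{m,m'\}$ and $a\in\mathcal{C}_{e'}$. Every element of $S_t$ is a node of $T_{t+1}$ all of whose incident edges lie in $S_{t+1}$, so, $T_{t+1}$ being connected, it equals $m$ or $m'$. It cannot be both, since otherwise $a\in U_m\cap U_{m'}=D_{e'}$. Hence $S_t=\{m\}$ and $m$ is a leaf of $T_{t+1}$.
Moreover $a\in\mathcal{C}_m$. Otherwise both nodes of $m$ would lie in $S_{t-1}$, and connectedness would force $T_t$ to be the single edge $m$, which is impossible for $t\le d-2$.
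Starting from $S_{d-1}=\{e^*\}$, this gives (i). It also gives the stronger fact that $a$ lies in no other complete union at all, not merely in no other conditioned set. You need that stronger fact in (ii), so that your factor $g$ does not depend on $x_a$.

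Finally, writing $f'=f'_{-a}f'_{a|-a}$ in the uniqueness step presupposes that $F'$ has a density. This follows from the same recursion over the pairs $(X_a,X_{b_t})\mid\mathbf{X}_{D_{e_t}}$, so the gap is only presentational.
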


\begin{proof}
The proof of theorem can be found in \cite{bedford2001probability} and \cite{bedford2002vines}.
\end{proof}

\begin{definition}[Regular vine copula]
    A \textit{(regular) vine copula} is a regular vine distribution, where all margins are uniformly distributed on [0, 1].
\end{definition}

Note that for brevity, we often use the term vine copula when we mean a vine distribution as of Definition \ref{def:regular_vine_distribution}.

Vine copulas can be simplified by setting all pair copulas above a certain tree level to independence. This is called truncation.

\begin{definition}[Truncation of a vine copula at level $\tau$]
    The vine copula truncated at truncation level $\tau \in [d-1]$ is defined as: 
    \begin{align*}
        \prod_{t = 1}^{\tau} \prod_{e \in E_t} c_{ \mathcal{C}_{e, a}, \mathcal{C}_{e, b} ; D_e } \big( F_{\mathcal{C}_{e, a} | D_e}(x_{\mathcal{C}_{e, a}} | \mathbf{x}_{D_e} ), F_{\mathcal{C}_{e, b} | D_e}(x_{\mathcal{C}_{e, b}} | \mathbf{x}_{D_e}) \big) \; .
    \end{align*}
\end{definition}

\section{R\'enyi Divergence VI in the Gaussian Setting}\label{app:Renyi_divergence_Gaussian_setting}

For a slightly different $\alpha$-divergence, namely $D_{\alpha}(p||q) := \frac{1}{\alpha(\alpha-1)} \int \big(\frac{p(z)^{\alpha}}{q(z)^{\alpha}} -1 \big) q(z) dz$, \citet{margossian2024variational} prove the following results for $p(z) = p(z|x) = \mathcal{N}(\bm{\mu}, \Sigma)$ and $q(\mathbf{z}) = \mathcal{N}(\bm{\nu}, \Psi)$ with $\Psi=diag(\Psi_{jj})$: The $q$ that minimizes $D_{\alpha}(p||q)$ matches the mean of $p$ (Proposition 6), has finite and strictly positive variances $\Psi_{jj}$ (Proposition 7) and the covariance matrix $\Psi$ of $q$ satisfies the fixed point equations $\Psi_{jj} = [\alpha \Sigma^{-1}  + (1 - \alpha) \Psi^{-1}]^{-1}_{jj}$ (Proposition 8).

We prove the results of \citet{margossian2024variational} for the R\'enyi $\alpha$-divergence, which are relevant for the MF, that is part of the D-vine.

\begin{proposition}[Mean matching]\label{prop:mean_matching_Ralpah}
    Let $\alpha \in (0,1)$ and $p$ and $q$ be given as in Proposition 6 of \citet{margossian2024variational}. If $q$ minimizes $R_{\alpha}(q || p)$, then it matches the mean of $p$, i.e. $\bm{\nu} = \bm{\mu}$.
\end{proposition}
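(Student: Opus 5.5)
Since $\alpha\in(0,1)$ gives $\frac{1}{\alpha-1}<0$, minimizing $R_\alpha(q\|p)$ over $\bm{\nu}$ (for fixed diagonal $\Psi$) is the same as maximizing
\begin{align*}
I(\bm{\nu},\Psi) := \int q(\mathbf{z})^{\alpha} p(\mathbf{z})^{1-\alpha}\, d\mathbf{z}.
\end{align*}
So the plan is to compute $I$ in closed form as a function of $\bm{\nu}$ and show it is maximized exactly at $\bm{\nu}=\bm{\mu}$. The diagonal structure of $\Psi$ plays no role in this step; only Gaussianity of both $p$ and $q$ is used.

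The integrand $q^\alpha p^{1-\alpha}$ is proportional to $\exp(-\tfrac12 Q(\mathbf{z}))$, where
\begin{align*}
Q(\mathbf{z}) = \alpha(\mathbf{z}-\bm{\nu})^T\Psi^{-1}(\mathbf{z}-\bm{\nu}) + (1-\alpha)(\mathbf{z}-\bm{\mu})^T\Sigma^{-1}(\mathbf{z}-\bm{\mu}).
\end{align*}
The prefactor $(2\pi)^{-d/2}|\Psi|^{-\alpha/2}|\Sigma|^{-(1-\alpha)/2}$ does not depend on $\bm{\nu}$. Set $A := \alpha\Psi^{-1}+(1-\alpha)\Sigma^{-1}$. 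This matrix is positive definite because it is a convex combination of positive definite matrices, so the integral is finite.

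Completing the square is the standard identity for a weighted sum of two quadratics. It gives
\begin{align*}
Q(\mathbf{z}) = (\mathbf{z}-\mathbf{m})^T A (\mathbf{z}-\mathbf{m}) + (\bm{\nu}-\bm{\mu})^T B (\bm{\nu}-\bm{\mu}),
\end{align*}
where $\mathbf{m} := A^{-1}\bigl(\alpha\Psi^{-1}\bm{\nu}+(1-\alpha)\Sigma^{-1}\bm{\mu}\bigr)$ and $B := \alpha(1-\alpha)\Psi^{-1}A^{-1}\Sigma^{-1}$. An equivalent form is $B = \bigl(\tfrac{1}{1-\alpha}\Psi+\tfrac{1}{\alpha}\Sigma\bigr)^{-1}$. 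Integrating over $\mathbf{z}$ then yields
\begin{align*}
I(\bm{\nu},\Psi) = K(\Psi,\Sigma,\alpha)\,\exp\!\Bigl(-\tfrac12(\bm{\nu}-\bm{\mu})^T B(\bm{\nu}-\bm{\mu})\Bigr),
\end{align*}
with $K>0$ independent of $\bm{\nu}$.

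The main step, where care is needed, is verifying that $B$ is symmetric positive definite. The cleanest route is the second form: $\tfrac{1}{1-\alpha}\Psi+\tfrac{1}{\alpha}\Sigma$ is a positive combination of positive definite matrices, hence positive definite, and so is its inverse. The equality of the two forms of $B$ follows from the identity $\Psi^{-1}(\alpha\Psi^{-1}+(1-\alpha)\Sigma^{-1})^{-1}\Sigma^{-1} = (\alpha\Sigma+(1-\alpha)\Psi)^{-1}$, which is checked by inverting both sides.

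It follows that for every fixed $\Psi$, $I$ is uniquely maximized in $\bm{\nu}$ at $\bm{\nu}=\bm{\mu}$. Now let $(\bm{\nu}^*,\Psi^*)$ be a minimizer of $R_\alpha(q\|p)$. If $\bm{\nu}^*\neq\bm{\mu}$, replacing $\bm{\nu}^*$ by $\bm{\mu}$ while keeping $\Psi^*$ strictly increases $I$, and therefore strictly decreases $R_\alpha$. This contradicts minimality, so $\bm{\nu}=\bm{\mu}$.
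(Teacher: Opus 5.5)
Your proof is correct and is essentially the paper's argument. The paper quotes the closed form of $R_\alpha$ between two Gaussians from \citet{gil2013renyi}, in which $\bm{\nu}$ enters only through $\frac{\alpha}{2}(\bm{\nu}-\bm{\mu})^T[\alpha\Sigma+(1-\alpha)\Psi]^{-1}(\bm{\nu}-\bm{\mu})$. Your completing-the-square computation derives exactly this term (apply $\frac{1}{\alpha-1}\log$ to your expression for $I$) and also spells out the strictness that excludes minimizers with $\bm{\nu}\neq\bm{\mu}$.

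One small slip: your ``equivalent form'' of $B$ has the weights on $\Psi$ and $\Sigma$ swapped. Your own inversion identity gives $B=\alpha(1-\alpha)\bigl(\alpha\Sigma+(1-\alpha)\Psi\bigr)^{-1}=\bigl(\tfrac{1}{\alpha}\Psi+\tfrac{1}{1-\alpha}\Sigma\bigr)^{-1}$, not $\bigl(\tfrac{1}{1-\alpha}\Psi+\tfrac{1}{\alpha}\Sigma\bigr)^{-1}$. The correct form is still a positive combination of positive definite matrices, so positive definiteness and your conclusion are unaffected.
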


\begin{proof}
    \citet{gil2013renyi} give an explicit expression for the R\'enyi $\alpha$-divergence of two multivariate Gaussian distributions. Hence, we obtain:
    \begin{align}
        &\arg \min_{q \in Q} R_{\alpha}(q || p) = \arg \min_{q \in Q} \frac{\alpha}{2}(\bm{\nu} - \bm{\mu})^T [\alpha \Sigma + (1-\alpha)\Psi]^{-1} (\bm{\nu} - \bm{\mu}) - \frac{1}{2(\alpha - 1)} \log \Bigg( \frac{|\alpha \Sigma + (1-\alpha)\Psi|}{|\Psi|^{1 - \alpha} |\Sigma|^{\alpha}} \Bigg) \; . \label{eq:prop6_Ralpha}
    \end{align}
    The matrix $[\alpha \Sigma + (1-\alpha)\Psi]$ is positive definite and thus also its inverse. This makes the first summand of \eqref{eq:prop6_Ralpha} greater than or equal to 0; it is minimized at 0, which is the case for $\bm{\nu} = \bm{\mu}$.
\end{proof}

\begin{proposition}[Variance bounds]\label{prop:variance_bounds_Ralpah}
    Let $\alpha \in (0,1)$ and let $p$ and $q$ be given as in Proposition 7 of \citet{margossian2024variational}. If $q$ minimizes $R_{\alpha}(q || p)$, then its variances are strictly positive and finite, i.e. $0 < \Psi_{jj} < \infty$ for all $j$.
\end{proposition}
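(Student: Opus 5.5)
We follow the same route as in the proof of Proposition \ref{prop:mean_matching_Ralpah}. We use the closed form of \citet{gil2013renyi}, and by that proposition we may already set $\bm{\nu}=\bm{\mu}$. Here $\Psi=\mathrm{diag}(\Psi_{11},\dots,\Psi_{dd})$ with each $\Psi_{jj}\in(0,\infty)$, and the closed form is only meaningful when $\alpha\Sigma+(1-\alpha)\Psi$ is positive definite, which holds automatically for $\alpha\in(0,1)$. The objective to be minimized then reduces to the log-determinant term
\begin{align*}
    G(\Psi) := \frac{1}{2(1-\alpha)} \Big( \log|\alpha\Sigma+(1-\alpha)\Psi| - (1-\alpha)\log|\Psi| - \alpha\log|\Sigma| \Big).
\end{align*}
This is a continuous function of $(\Psi_{11},\dots,\Psi_{dd})\in(0,\infty)^d$. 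It is also nonnegative, since $R_\alpha\ge 0$ and equivalently by concavity of $\log\det$. The claim amounts to showing that a minimizer cannot have any $\Psi_{jj}$ on the boundary, $0$ or $\infty$. Equivalently, $G\to\infty$ whenever some coordinate tends to $0$ or to $\infty$, while $G$ is finite at interior points, for instance $\Psi=I_d$. Coercivity also yields existence of a minimizer in the interior. Any putative minimizer with a degenerate variance has infinite objective and therefore cannot minimize.

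Step 1 handles the limit $\Psi_{jj}\to\infty$. Write $A:=\alpha\Sigma$, which is positive definite. Since $A+(1-\alpha)\Psi\succeq(1-\alpha)\Psi$, we have the lower bound $\log|A+(1-\alpha)\Psi|\ge d\log(1-\alpha)+\log|\Psi|$. Hence
\begin{align*}
    2(1-\alpha)G \ge \alpha\log|\Psi| + \text{const}.
\end{align*}
This only controls the total $\sum_k\log\Psi_{kk}$, so other coordinates shrinking could compensate. To handle that, we combine it with a second bound. Since $A\succeq\lambda_{\min}(A)I$, we also have $|A+(1-\alpha)\Psi|\ge\prod_k(\lambda_{\min}(A)+(1-\alpha)\Psi_{kk})$ by the monotonicity of the determinant on the PSD order. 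This gives the coordinatewise bound
\begin{align*}
    2(1-\alpha)G \ge \sum_k h(\Psi_{kk}) - \alpha\log|\Sigma|, \qquad h(s):=\log(\lambda_{\min}(A)+(1-\alpha)s)-(1-\alpha)\log s.
\end{align*}

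Step 2 analyses the scalar function $h$. As $s\to\infty$, $h(s)\sim\alpha\log s\to\infty$. As $s\to0^+$, $h(s)\sim-(1-\alpha)\log s\to\infty$. Moreover $h$ is continuous, so it is bounded below on $(0,\infty)$. Thus $\sum_k h(\Psi_{kk})$ tends to infinity as soon as any single coordinate goes to $0$ or $\infty$, with the remaining terms staying bounded below. This makes $G$ coercive on $(0,\infty)^d$ in the required sense. The sublevel set $\{G\le G(I_d)\}$ is therefore a compact subset of $(0,\infty)^d$, which contains every minimizer, so $0<\Psi_{jj}<\infty$ for all $j$.

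The main obstacle is Step 1's naive bound, which is why we switch to the per-coordinate bound via $\lambda_{\min}(A)$. One must check that the determinant monotonicity $B\succeq C\succ0\Rightarrow|B|\ge|C|$ is applied correctly to $A+(1-\alpha)\Psi\succeq\lambda_{\min}(A)I+(1-\alpha)\Psi$. The latter matrix is diagonal, so its determinant is the product used above. Everything else is routine asymptotics of $h$.
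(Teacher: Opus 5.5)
Your proof is correct and follows the paper's route. You invoke mean matching to remove the quadratic term in the Gil et al.\ closed form, then show that the remaining term $\log|\alpha\Sigma+(1-\alpha)\Psi|-(1-\alpha)\log|\Psi|$ diverges whenever some $\Psi_{jj}\to 0$ or $\Psi_{jj}\to\infty$. The paper only sketches this one coordinate at a time, by reference to Margossian et al.\ Your separable bound $|\alpha\Sigma+(1-\alpha)\Psi|\ge\prod_k(\lambda_{\min}(\alpha\Sigma)+(1-\alpha)\Psi_{kk})$ makes the divergence rigorous when several coordinates degenerate simultaneously. It also shows the net growth $\alpha\log\Psi_{jj}$ that the paper's ``due to $|\alpha\Sigma+(1-\alpha)\Psi|$'' glosses over, and gives existence of an interior minimizer.
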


\begin{proof}
    Let $q$ minimize $R_{\alpha}(q || p)$. Then by Proposition \ref{prop:mean_matching_Ralpah} $\bm{\nu} = \bm{\mu}$ and the first term of \eqref{eq:prop6_Ralpha} vanishes. As $\alpha \in (0,1)$, $-\frac{1}{2(\alpha-1)}$ is positive and it suffices to consider the expression:
    \begin{align} \label{eq:variance_bounds_Ralpha_proof}
        &\log \Bigg( \frac{|\alpha \Sigma + (1-\alpha)\Psi|}{|\Psi|^{1 - \alpha} |\Sigma|^{\alpha}} \Bigg) = \log \big( C \cdot |\alpha \Sigma + (1-\alpha)\Psi| \cdot |\Psi|^{\alpha - 1} \big) \; ,
    \end{align}
    with $C := |\Sigma|^{-\alpha}$ constant. With similar arguments as in the proof of Proposition 7 in \citet{margossian2024variational} Equation \eqref{eq:variance_bounds_Ralpha_proof} diverges if any $\Psi_{jj} \rightarrow \infty$ due to $|\alpha \Sigma + (1-\alpha)\Psi|$, and likewise it diverges if any $\Psi_{jj} \rightarrow 0$ due to $|\Psi|^{\alpha - 1}$ and the fact that $\alpha - 1 < 0$.
\end{proof}

\begin{proposition}[Fixed-point equations]\label{prop:fixedpoint_equ_Ralpah}
    Let $\alpha \in (0,1)$ and let $p$ and $q$ be given as in Proposition 8 of \citet{margossian2024variational}. Then the $R_{\alpha}(q || p)$ is minimized when $\bm{\nu} = \bm{\mu}$ and the estimated variances from $\Psi$ satisfy the fixed-point equation:
    \begin{align*}
        diag(\Psi) = diag(\Phi_{\alpha}^{-1})
    \end{align*}
    with $\Phi_{\alpha} := \alpha \Psi^{-1} + (1-\alpha) \Sigma^{-1}$.
\end{proposition}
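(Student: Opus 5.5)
The plan is to work in the mean-field setting of Proposition 8 of \citet{margossian2024variational}: $p=\mathcal{N}(\bm{\mu},\Sigma)$ and $q=\mathcal{N}(\bm{\nu},\Psi)$ with $\Psi=\mathrm{diag}(\Psi_{jj})$. I will reduce the problem to a first-order condition in the diagonal entries of $\Psi$ and then rewrite that condition algebraically.

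\emph{Step 1: reduce to the covariance term.} By Proposition \ref{prop:mean_matching_Ralpah}, any minimizer has $\bm{\nu}=\bm{\mu}$, so the quadratic term in \eqref{eq:prop6_Ralpha} vanishes. Since $-\frac{1}{2(\alpha-1)}>0$ for $\alpha\in(0,1)$, it remains to minimize, over $\Psi_{11},\dots,\Psi_{dd}>0$,
\begin{align*}
g(\Psi) := \log|\alpha\Sigma+(1-\alpha)\Psi| - (1-\alpha)\log|\Psi| ,
\end{align*}
up to an additive constant. By Proposition \ref{prop:variance_bounds_Ralpah}, $g$ diverges as any $\Psi_{jj}\to 0$ or $\Psi_{jj}\to\infty$. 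Since $g$ is continuous on $(0,\infty)^d$, a minimizer therefore exists in the open orthant. At such an interior minimizer the gradient with respect to each $\Psi_{jj}$ must vanish.

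\emph{Step 2: stationarity.} I will use $\partial_{\Psi_{jj}}\log|A(\Psi)| = \mathrm{tr}(A^{-1}\partial_{\Psi_{jj}}A)$, where $\partial_{\Psi_{jj}}\Psi = \mathbf{e}_j\mathbf{e}_j^T$. This gives
\begin{align*}
(1-\alpha)\big[(\alpha\Sigma+(1-\alpha)\Psi)^{-1}\big]_{jj} - (1-\alpha)\Psi_{jj}^{-1} = 0 ,\quad j\in[d].
\end{align*}

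\emph{Step 3: algebraic rewriting.} This is the only step requiring care, namely matching the condition to the stated form with $\Phi_\alpha$. I factor
\begin{align*}
\alpha\Sigma+(1-\alpha)\Psi = \Sigma\big(\alpha\Psi^{-1}+(1-\alpha)\Sigma^{-1}\big)\Psi = \Sigma\Phi_\alpha\Psi ,
\end{align*}
so that $(\alpha\Sigma+(1-\alpha)\Psi)^{-1}=\Psi^{-1}\Phi_\alpha^{-1}\Sigma^{-1}$.

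Because $\Psi$ is diagonal, the $(j,j)$ entry of this inverse is $\Psi_{jj}^{-1}(\Phi_\alpha^{-1}\Sigma^{-1})_{jj}$. The stationarity condition then becomes $(\Phi_\alpha^{-1}\Sigma^{-1})_{jj}=1$.

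Next I substitute $(1-\alpha)\Sigma^{-1}=\Phi_\alpha-\alpha\Psi^{-1}$, which gives
\begin{align*}
\Phi_\alpha^{-1}\Sigma^{-1}=\tfrac{1}{1-\alpha}\big(I-\alpha\Phi_\alpha^{-1}\Psi^{-1}\big).
\end{align*}
Its $(j,j)$ entry is $\frac{1}{1-\alpha}\big(1-\alpha(\Phi_\alpha^{-1})_{jj}/\Psi_{jj}\big)$. Setting this equal to $1$ and using $\alpha>0$ yields $\Psi_{jj}=(\Phi_\alpha^{-1})_{jj}$ for every $j$, that is, $\mathrm{diag}(\Psi)=\mathrm{diag}(\Phi_\alpha^{-1})$.

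Combining the three steps, the minimizer has $\bm{\nu}=\bm{\mu}$ and its variances satisfy the stated fixed-point equation.
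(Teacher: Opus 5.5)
Your proof is correct and follows essentially the same route as the paper: mean matching removes the quadratic term, and you set the derivative of the log-determinant objective in the diagonal variances to zero via Jacobi's formula. The paper first rewrites the objective as $\log|\Phi_\alpha|-\alpha\log|\Psi^{-1}|$ and differentiates in $\Psi^{-1}_{jj}$, so $\alpha[\Phi_\alpha^{-1}-\Psi]_{jj}=0$ drops out directly, whereas you differentiate in $\Psi_{jj}$ and apply the same identity $\alpha\Sigma+(1-\alpha)\Psi=\Sigma\Phi_\alpha\Psi$ afterwards; your existence argument is an addition the paper omits, and you can make it airtight (including simultaneous limits in several coordinates) via the separable coercive bound $|\alpha\Sigma+(1-\alpha)\Psi|\ge\prod_j(\alpha\lambda_{\min}(\Sigma)+(1-\alpha)\Psi_{jj})$.
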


\begin{proof}
    With the same arguments as in the proof of Proposition \ref{prop:variance_bounds_Ralpah}, it suffices to consider Equation \eqref{eq:variance_bounds_Ralpha_proof} to find the $\Psi$ that minimizes $R_{\alpha}(q || p)$. Hence we consider:
    \begin{align*}
        &\arg \min_{\Psi} \log \big( C \cdot |\alpha \Sigma + (1-\alpha)\Psi| \cdot |\Psi|^{\alpha - 1} \big) \\
        &= \arg \min_{\Psi} \log \big( |\alpha \Sigma + (1-\alpha)\Psi| \big) + (\alpha - 1)\log \big( |\Psi| \big) \\
        % &= \arg \min_{\Psi} \log \big( |\alpha \Sigma + (1-\alpha)\Psi| \big) + (1 -\alpha) \log \big( |\Psi^{-1}| \big) \\
        &= \arg \min_{\Psi} \log \big( |\alpha \Sigma + (1-\alpha)\Psi| \big) + (1 -\alpha) \log \big( |\Psi^{-1}| \big) + \log \big( |\Psi^{-1}| \big) - \log \big( |\Psi^{-1}| \big)\\
        &= \arg \min_{\Psi^{-1}} \log \big( |\alpha \Sigma \Psi^{-1} + (1-\alpha)| \big) - \alpha \log \big( |\Psi^{-1}| \big) \\
        &= \arg \min_{\Psi^{-1}} \log \big( |\alpha \Psi^{-1} + (1-\alpha) \Sigma^{-1}| \big) - \alpha \log \big( |\Psi^{-1}| \big)
    \end{align*}
    where we have exploited that $\log\big( |\Sigma^{-1}| \big)$ is an additive constant. Solving for the minimum we find:
    \begin{align*}
        0 &= \frac{\partial}{\partial \Psi_{jj}^{-1}} \log \big( |\alpha \Psi^{-1} + (1-\alpha) \Sigma^{-1}| \big) - \alpha \log \big( |\Psi^{-1}| \big) = \alpha \big[ \Phi_{\alpha}^{-1} - \Psi \big]_{jj}
    \end{align*}
    with $\Phi_{\alpha} := \alpha \Psi^{-1} + (1-\alpha) \Sigma^{-1}$. Here we have used Jacobi's formula \citep{magnus2019matrix} for derivatives of determinants:
    \begin{align*}
        \frac{\partial}{\partial M_{jj}} \log |M| = \frac{1}{|M|} \frac{\partial |M|}{\partial M_{jj}} = \frac{1}{|M|} adj(M)_{jj} = \frac{1}{|M|} |M| (M^{-1})_{jj} \; ,
    \end{align*}
    where we have used in the last step that for an invertible matrix $M$ that $adj(M) = |M| M^{-1}$.
\end{proof}

This also explicitly proves the claim of \citet{margossian2024variational} that in the Gaussian setup of Propositions \ref{prop:mean_matching_Ralpah}, \ref{prop:variance_bounds_Ralpah} and \ref{prop:fixedpoint_equ_Ralpah}, $D_{\alpha}$ and $R_{\alpha}$ give the same VI optimization.\footnote{Note that we flipped $p$ and $q$ in the argument of the divergences. This is the reason for the flipped $\Sigma$ and $\Psi$ in $\Phi_{\alpha}$ compared to \citet{margossian2024variational}.} 
% This notably also means that as we decrease $\alpha$ towards 0, optimizing $R_{\alpha}(q||p)$ comes closer to optimizing $KL(p||q)$. 

\section{Forward and Backward Divergence VI in the Gaussian Setting}\label{app:fw_bw_KL_divergence_Gaussian_setting}

\subsection{Forward KL: Proof of Theorem \ref{thm:forwardKL}}\label{app:forwardKL_proof}

\begin{proof}[Proof of Theorem \ref{thm:forwardKL}]
First note that when $p$ and $q$ are Gaussian with the parameters specified before 
Theorem \ref{thm:forwardKL}, the forward KL divergence is given by 
\begin{align}
\begin{split}
KL(p||q) = &\frac{1}{2}\left(tr\left(\B{\Psi}^{-1}\B{\Sigma}\right)-d-\log|\B{\Sigma}|+\log|\B{\Psi}|+(\B{\mu}-\B{\nu})^{T}\B{\Psi}^{-1}(\B{\mu}-\B{\nu})\right).
\end{split}
\label{eqn:KLpqgauss}
\end{align}
Also note that since $q$ is a Gaussian D-vine, the parameters $\eta_{i,i+1}$, $i=1,\ldots,d-1$ 
of its first tree are the correlations $R_{\Psi,i,i+1}$ and the parameters 
$\eta_{i,i+t|i+1,\ldots,i+t-1}$ of tree $t$ are given by the partial correlations
of $(Z_{i},Z_{i+t})$ given $Z_{i+1},\ldots,Z_{i+t-1}$, for $t=2,\ldots,d-1$, 
$i=1,\ldots,d-j$, and the determinant of $\Bb{R}_{\Psi}$ is given by \citep{cooke2006}
\begin{align}
|\Bb{R}_{\Psi}| = \prod_{j=1}^{d-2}\prod_{i=1}^{d-j}\left(1-\eta_{i,i+j|i+1,\ldots,i+j-1}^{2}\right).
\label{eqn:detRpsi}
\end{align}

In the \textbf{first step}, the marginal parameters $\B{\nu}$ and $\Bb{D}_{\Psi}$ are 
estimated by mean-field, which consists in minimising the forward KL, assuming
independence between $Z_{1},\ldots,Z_{d}$. As shown by 
\cite{margossian2024variational}, one then obtains $\B{\nu}=\B{\mu}$  and
$\Bb{D}_{\Psi}=\Bb{D}_{\Sigma}$. The forward KL then reduces to
\[
KL(p||q) = \frac{1}{2}\left(tr\left(\Bb{R}_{\Psi}^{-1}\B{\Bb{R}_{\Sigma}}\right)-d-\log|\Bb{R}_{\Sigma}|+\log|\Bb{R}_{\Psi}|\right).
\]

The \textbf{second step} is to find the parameters of the first tree of the D-vine that 
maximize the forward KL when all the remaining $d-2$ trees are set to independence. The 
parameters of the first tree are $\eta_{i,i+1}=R_{\Psi,i,i+1}$, $i=1,\ldots,d-1$. Setting 
the copulas in the remaining trees to independence is the same as setting the 
partial correlations $\eta_{i,i+j|i+1,\ldots,i+j-1}$ to $0$, for $j=2,\ldots,d-2$, 
$i=1,\ldots,d-j$. The remaining elements of $\Bb{R}_{\Psi}$ are the constrained to be 
$R_{\Psi,i,i+j}=\prod_{k=1}^{j}R_{\Psi,i+k-1,i+k}$, for $j=2,\ldots,d-1$, $i=1,\ldots,d-i$.
Further, according to \eqref{eqn:detRpsi}, the determinant of $\Bb{R}_{\Psi}$ 
becomes
\[
|\Bb{R}_{\Psi}| = \prod_{i=1}^{d-1}\left(1-\eta_{i,i+1}^{2}\right).
\]
Let $\Bb A = \Bb{R}_{\Psi}^{-1}\B{\Bb{R}_{\Sigma}}$.The first term of the KL divergence is 
then given by
\[
\frac{1}{2}tr\left(\Bb{R}_{\Psi}^{-1}\B{\Bb{R}_{\Sigma}}\right) =  \frac{1}{2}\sum_{i=1}^{d}A_{i,i},
\]
where the $A_{i,i}$s are the diagonal elements of $\Bb A$, which are given by
\[
A_{i,i}=\left(\Bb{R}_{\Psi}^{-1}\right)_{i,i}+\sum_{j\neq i}R_{\Sigma,i,j}\left(\Bb{R}_{\Psi}^{-1}\right)_{i,j},
\]
where, according to Cramer's rule, 
$\left(\Bb{R}_{\Psi}^{-1}\right)_{i,j}=\frac{(-1)^{i+j}}{|\Bb{R}_{\Psi}|}|\Bb{R}_{\Psi,-j,-i}|$,
$\Bb{R}_{\Psi,-j,-i}$ being $\Bb{R}_{\Psi}$ with row $j$ and column $i$ deleted. It
is straightforward to show that 
$|\Bb{R}_{\Psi,-1,-1}| = \prod_{i=2}^{d-1}\left(1-\eta_{i,i+1}^{2}\right)$, so
that $\left(\Bb{R}_{\Psi}^{-1}\right)_{1,1}=\frac{1}{1-\eta_{12}^{2}}$, and correspondingly
$\left(\Bb{R}_{\Psi}^{-1}\right)_{d,d}=\frac{1}{1-\eta_{d-1,d}^{2}}$, and also
that for $i=2,\ldots,d-1$,\\
$|\Bb{R}_{\Psi,-i,-i}| = (1-\eta_{12}^{2})\cdot\ldots\cdot(1-\eta_{i-2,i-1}^{2})\cdot(1-\eta_{i-1,i}^{2})\cdot(1-\eta_{i,i+1}^{2})\cdot(1-\eta_{i+1,i+2}^{2})\cdot\ldots\cdot(1-\eta_{d-1,d}^{2})$, so that
$\left(\Bb{R}_{\Psi}^{-1}\right)_{i,i}=\frac{1}{1-\eta_{12}^{2}}=\frac{1-\eta_{i-1,i}^{2}\eta_{i,i+1}^{2}}{(1-\eta_{i-1,i}^{2})(1-\eta_{i,i+1}^{2})}$. Further, we have
\begin{align*}
|\Bb{R}_{\Psi,-2,-1}| = |\Bb{R}_{\Psi,-1,-2}| = &\eta_{12}|\Bb{R}_{\Psi,-1,-1}| = \eta_{12}\prod_{i=2}^{d-1}\left(1-\eta_{i,i+1}^{2}\right),
\end{align*}
so that $\left(\Bb{R}_{\Psi}^{-1}\right)_{1,2}=-\frac{\eta_{12}}{1-\eta_{1,2}^{2}}$,
and correspondingly
$\left(\Bb{R}_{\Psi}^{-1}\right)_{d-1,d}=-\frac{\eta_{d-1,d}}{1-\eta_{d-1,d}^{2}}$.
Moreover, using the Laplace expansion along the first row, we have
\begin{scriptsize}
\begin{align*}
&|\Bb{R}_{\Psi,-3,-2}| = |\Bb{R}_{\Psi,-2,-3}|\\ 
= &\eta_{23}|\Bb{R}_{\Psi,-(1,2),-(1,2)}|-\eta_{,23}\eta_{12}^{2}|\Bb{R}_{\Psi,-(1,2),-(1,2)}|,
\end{align*}
\end{scriptsize}
where the remaining terms becomne $0$, as all the corresponding determinants are
taken of matrices where the second column is propotional to the first by the factor
$\eta_{12}$. This gives
\[
|\Bb{R}_{\Psi,-3,-2}| = \eta_{2,3}(1-\eta_{1,2}^{2})\prod_{i=3}^{d-1}\left(1-\eta_{i,i+1}^{2}\right),
\]
such that $\left(\Bb{R}_{\Psi}^{-1}\right)_{2,3}=-\frac{\eta_{2,3}}{1-\eta_{2,3}^{2}}$,
and correspondingly for $i=3,\ldots,d-2$,
$\left(\Bb{R}_{\Psi}^{-1}\right)_{i,i+1}=-\frac{\eta_{i,i+1}}{1-\eta_{i,i+1}^{2}}$.
Finally, as $\eta_{i,i+j|i+1,\ldots,i+j-1}= 0$ for $j=1,\ldots,d-2$, $i=1,\ldots,d-j$,
$\left(\Bb{R}_{\Psi}^{-1}\right)_{i,j}=\left(\Bb{R}_{\Psi}^{-1}\right)_{j,i}=0$ for
$j=1,\ldots,d-2$, $i=1,\ldots,d-j$. Putting all this together, we obtain
\begin{align*}
A_{1,1} = &\frac{1-R_{\Sigma,1,2}\eta_{1,2}}{1-\eta_{1,2}^{2}}\\
A_{d,d} = &\frac{1-R_{\Sigma,d-1,d}\eta_{d-1,d}}{1-\eta_{d-1,d}^{2}}\\
A_{i,i} = &\frac{1-R_{\Sigma,i-1,i}\eta_{i-1,i}}{1-\eta_{i-1,i}^{2}}+\frac{1-R_{\Sigma,i,i+1}\eta_{i,i+1}}{1-\eta_{i,i+1}^{2}}, \quad i=2,\ldots,d-1,
\end{align*}
which gives
\begin{align*}
KL(p||q) = & \sum_{i=1}^{d-1}\frac{1-R_{\Sigma,i,i+1}\eta_{i,i+1}^{2}}{1-\eta_{i,i+1}^{2}}-\frac{d-2}{2}-\frac{1}{2}\log|\Bb{R}_{\Sigma}|+\frac{1}{2}\sum_{i=1}^{d-1}\log\left(1-\eta_{i,i+1}^{2}\right)
\end{align*}
Hence, the derivative of the KL divergence with respect to the parameters of the first 
tree are given by
\[
\frac{\partial KL(p||q)}{\partial \eta_{i,i+1}} = \frac{1+\eta_{i,i+1}^{2}}{(1-\eta_{i,i+1}^2)^2}(\eta_{i,i+1}-R_{\Sigma,i,i+1}), \quad i=1,\ldots,d-1
\]
which means that the minimum is obtained for 
$R_{\Psi,i,i+1}=\eta_{i,i+1}=R_{\Sigma,i,i+1}$, $i=1,\ldots,d-1$.

The \textbf{third step} (if $d \geq 3$, otherwise the second step is the last) is to find the 
parameters $\eta_{i,i+2|i+1}$, $i=1,\ldots,d-2$ of the second tree of the D-vine that 
maximize the forward KL when all the remaining $d-3$ trees are set to independence, 
which corresponds to setting $\eta_{i,i+j|i+1,\ldots,i+j-1}=0$, for $j=2,\ldots,d-1$, 
$i=1,\ldots,d-j$. This means that 
$\eta_{i,i+2}=R_{\Sigma,i,i+1}R_{\Sigma,i+1,i+2}+\eta_{i,i+2|i+1}\sqrt{(1-R_{\Sigma,i,i+1}^{2})(1-R_{\Sigma,i+1,i+2}^{2})}$, $i=1,\ldots,d-2$.
Further, we have
\[
|\Bb{R}_{\Psi}| = \prod_{i=1}^{d-1}\left(1-R_{\Sigma,i,i+1}^{2}\right)\cdot\prod_{k=1}^{d-2}\left(1-\eta_{k,k+2|k+1}^{2}\right)
\]
and
\begin{align*}
A_{1,1} = &\left(\Bb{R}_{\Psi}^{-1}\right)_{1,1}+R_{\Sigma,1,2}\left(\Bb{R}_{\Psi}^{-1}\right)_{1,2}+R_{\Sigma,1,3}\left(\Bb{R}_{\Psi}^{-1}\right)_{1,3}\\
A_{d,d} = &\left(\Bb{R}_{\Psi}^{-1}\right)_{d,d}+R_{\Sigma,d-1,d}\left(\Bb{R}_{\Psi}^{-1}\right)_{d-1,d}+R_{\Sigma,d-2,d}\left(\Bb{R}_{\Psi}^{-1}\right)_{d-2,d}\\
A_{2,2} = & \left(\Bb{R}_{\Psi}^{-1}\right)_{2,2}+R_{\Sigma,1,2}\left(\Bb{R}_{\Psi}^{-1}\right)_{1,2}+R_{\Sigma,2,3}\left(\Bb{R}_{\Psi}^{-1}\right)_{2,3}+I(d > 3)R_{\Sigma,2,4}\left(\Bb{R}_{\Psi}^{-1}\right)_{2,4}\\
A_{d-1,d-1} = & \left(\Bb{R}_{\Psi}^{-1}\right)_{d-1,d-1}+I(d > 3)R_{\Sigma,d-3,d-1}\left(\Bb{R}_{\Psi}^{-1}\right)_{d-3,d-1}\\
&+R_{\Sigma,d-2,d-1}\left(\Bb{R}_{\Psi}^{-1}\right)_{d-2,d-1}+R_{\Sigma,d-1,d}\left(\Bb{R}_{\Psi}^{-1}\right)_{d-1,d}\\
A_{i,i} = & \left(\Bb{R}_{\Psi}^{-1}\right)_{i,i}+R_{\Sigma,i-2,i}\left(\Bb{R}_{\Psi}^{-1}\right)_{i-2,i}+R_{\Sigma,i-1,i}\left(\Bb{R}_{\Psi}^{-1}\right)_{i-1,i}+R_{\Sigma,i,i+1}\left(\Bb{R}_{\Psi}^{-1}\right)_{i,i+1}\\
&+R_{\Sigma,i,i+2}\left(\Bb{R}_{\Psi}^{-1}\right)_{i,i+2}, \quad i=3,\ldots,d-2, \mbox{ for } d > 3.
\end{align*}
Once more using Cramer's rule combined with Laplace expansion, as well as the
constraints imposed by the partial correlations set to $0$, we obtain
\begin{align*}
\left(\Bb{R}_{\Psi}^{-1}\right)_{1,1} = &\frac{1}{(1-R_{\Sigma,1,2}^{2})(1-\eta_{1,3|2}^{2})}\\
\left(\Bb{R}_{\Psi}^{-1}\right)_{d,d} = &\frac{1}{(1-R_{\Sigma,d-1,d}^{2})(1-\eta_{d-2,d|d-1}^{2})}\\
\left(\Bb{R}_{\Psi}^{-1}\right)_{2,2} = &\frac{R_{\Sigma,1,2}^{2}}{(1-R_{\Sigma,1,2}^{2})(1-\eta_{1,3|2}^{2})}-\frac{2R_{\Sigma,1,2}R_{\Sigma,2,3}(\eta_{1,3}-R_{\Sigma,1,2}R_{\Sigma,2,3})}{(1-R_{\Sigma,1,2}^{2})(1-R_{\Sigma,2,3}^{2})(1-\eta_{1,3|2}^{2})}\\
&+I(d = 3)\frac{1-(1-R_{\Sigma,2,3}^{2})\eta_{1,3|2}^{2}}{(1-R_{\Sigma,2,3}^{2})(1-\eta_{1,3|2}^{2})}\\
&+I(d > 3)\frac{(1-R_{\Sigma,2,3}^{2})(1-\eta_{1,3|2}^{2})+R_{\Sigma,2,3}^{2}(1-\eta_{1,3|2}^{2}\eta_{2,4|3}^{2})}{(1-R_{\Sigma,2,3}^{2})(1-\eta_{1,3|2}^{2})(1-\eta_{2,4|3}^{2})}\\
\left(\Bb{R}_{\Psi}^{-1}\right)_{1,2} = &-\frac{R_{\Sigma,1,2}}{(1-R_{\Sigma,1,2}^{2})(1-\eta_{1,3|2}^{2})}+\frac{R_{\Sigma,2,3}(\eta_{1,3}-R_{\Sigma,1,2}R_{\Sigma,2,3})}{(1-R_{\Sigma,1,2}^{2})(1-R_{\Sigma,2,3}^{2})(1-\eta_{1,3|2}^{2})}\\
\left(\Bb{R}_{\Psi}^{-1}\right)_{d-1,d} = &-\frac{R_{\Sigma,d-1,d}}{(1-R_{\Sigma,d-1,d}^{2})(1-\eta_{d-2,d|d-1}^{2})}+\frac{R_{\Sigma,d-2,d-1}(\eta_{d-2,d}-R_{\Sigma,d-2,d-1}R_{\Sigma,d-1,d})}{(1-R_{\Sigma,d-2,d-1}^{2})(1-R_{\Sigma,d-1,d}^{2})(1-\eta_{d-2,d|d-1}^{2})}\\
\left(\Bb{R}_{\Psi}^{-1}\right)_{i,i+2} = &-\frac{\eta_{i,i+2}-R_{\Sigma,i,i+1}R_{\Sigma,i+1,i+2}}{(1-R_{\Sigma,i,i+1}^{2})(1-R_{\Sigma,i+1,i+2}^{2})(1-\eta_{i,i+2|i+1}^{2})}, \quad i=1,\ldots,d-2.
\end{align*}
and for $d > 3$
\begin{align*}
\left(\Bb{R}_{\Psi}^{-1}\right)_{d-1,d-1} = &\frac{R_{\Sigma,d-1,d}^{2}}{(1-R_{\Sigma,d-1,d}^{2})(1-\eta_{d-2,d|d-1}^{2})}-\frac{2R_{\Sigma,d-2,d-1}R_{\Sigma,d-1,d}(\eta_{d-2,d}-R_{\Sigma,d-2,d-1}R_{\Sigma,d-1,d})}{(1-R_{\Sigma,d-2,d-1}^{2})(1-R_{\Sigma,d-1,d}^{2})(1-\eta_{d-2,d|d-1}^{2})}\\
&\frac{(1-R_{\Sigma,d-2,d-1}^{2})(1-\eta_{d-2,d|d}^{2})+R_{\Sigma,d-2,d-1}^{2}(1-\eta_{d-3,d-1|d-2}^{2}\eta_{d-2,d|d-1}^{2})}{(1-R_{\Sigma,d-2,d-1}^{2})(1-\eta_{d-3,d-1|d-2}^{2})(1-\eta_{d-2,d|d-1}^{2})}\\
\left(\Bb{R}_{\Psi}^{-1}\right)_{i,i} = & \frac{(1-R_{\Sigma,i-1,i}^{2})(1-\eta_{i-1,i+1|i}^{2})+R_{\Sigma,i-1,i}^{2}(1-\eta_{i-2,i|i-1}^{2}\eta_{i-1,i+1|i}^{2})}{(1-R_{\Sigma,i-1,i}^{2})(1-\eta_{i-2,i|i-1}^{2})(1-\eta_{i-1,i+1|i}^{2})}\\
&+\frac{(1-R_{\Sigma,i,i+1}^{2})(1-\eta_{i-1,i+1|i}^{2})+R_{\Sigma,i,i+1}^{2}(1-\eta_{i-1,i+1|i}^{2}\eta_{i,i+2|i+1}^{2})}{(1-R_{\Sigma,i,i+1}^{2})(1-\eta_{i-1,i+1|i}^{2})(1-\eta_{i,i+2|i+1}^{2})}\\
&-\frac{2R_{\Sigma,i-1,i}R_{\Sigma,i,i+1}(\eta_{i-1,i}-R_{\Sigma,i-1,i}R_{\Sigma,i,i+1})}{(1-R_{\Sigma,i-1,i}^{2})(1-R_{\Sigma,i,i+1}^{2})(1-\eta_{i-1,i+1|i}^{2})}, \quad i=3,\ldots,d-2\\
\left(\Bb{R}_{\Psi}^{-1}\right)_{i,i+1} = &-\frac{R_{\Sigma,i-1,i}(1-\eta_{i-1,i+1|i}^2\eta_{i,i+2|i+1}^2)}{(1-R_{\Sigma,i,i+1}^{2})(1-\eta_{i-1,i+1|i}^{2})(1-\eta_{i,i+2|i+1}^{2})}\\
&+\frac{R_{\Sigma,i-1,i}(\eta_{i-1,i+1}-R_{\Sigma,i-1,i}R_{\Sigma,i,i+1})}{(1-R_{\Sigma,i-1,i}^{2})(1-R_{\Sigma,i,i+1}^{2})(1-\eta_{i-1,i+1|i}^{2})}\\
&+\frac{R_{\Sigma,i+1,i+2}(\eta_{i,i+2}-R_{\Sigma,i,i+1}R_{\Sigma,i+1,i+2})}{(1-R_{\Sigma,i,i+1}^{2})(1-R_{\Sigma,i+1,i+2}^{2})(1-\eta_{i,i+2|i+1}^{2})}, \quad i=2,\ldots,d-2.
\end{align*}
Now, let $\rho_{i,i+j|i+1,\ldots,i+j-1}$ be the partial correlation between $Z_{i}$
and $Z_{i+j}$, given $Z_{i+1},\ldots,Z_{i+j-1}$, from the correlation matrix 
$\Bb R_{\Sigma}$ of the true posterior. Then, we get 
\begin{align*}
A_{11} = &\frac{1-\rho_{1,3|2}\eta_{1,3|2}}{1-\eta_{1,3|2}^{2}}\\
A_{dd} = &\frac{1-\rho_{d-2,d|d-1}\eta_{d-2,d|d-1}}{1-\eta_{d-2,d|d-1}^{2}}\\
A_{22} = &\begin{cases}
1, & d = 3\\
\frac{1-\rho_{2,4|3}\eta_{2,4|3}}{1-\eta_{2,4|3}^{2}}, & d > 3
\end{cases}\\
A_{d-1,d-1} = & \frac{1-\rho_{d-3,d-1|d-2}\eta_{d-3,d-1|d-2}}{1-\eta_{d-3,d-1|d-2}^{2}}, \quad d > 3\\
A_{i,i} = & \frac{1-\rho_{i-2,i|i-1}\eta_{i-2,i|i-1}}{1-\eta_{i-2,i|i-1}^{2}}+\frac{1-\rho_{i,i+2|i+1}\eta_{i,i+2|i+1}}{1-\eta_{i,i+2|i+1}^{2}}, \quad i = 3,\ldots,d-2,
\end{align*}
which gives 
\begin{align*}
KL(p||q) = & \sum_{i=1}^{d-2}\frac{1-\rho_{i,i+2|i+1}\eta_{i,i+2|i+1}}{1-\eta_{i,i+2|i+1}^{2}}-\frac{d-4}{2}-\frac{1}{2}\log|\Bb{R}_{\Sigma}|+\frac{1}{2}\sum_{i=1}^{d-1}\log\left(1-R_{\Sigma,i,i+1}^{2}\right)\\
&+\frac{1}{2}\sum_{i=1}^{d-2}\log\left(1-\eta_{i,i+2|i+1}^{2}\right)
\end{align*}
Hence, the derivative of the KL divergence with respect to the parameters of the second 
tree are given by
\[
\frac{\partial KL(p||q)}{\partial \eta_{i,i+1}} = \frac{1+\eta_{i,i+2|i+1}^{2}}{(1-\eta_{i,i+2|i+1}^2)^2}(\eta_{i,i+2|i+1}-\rho_{i,i+2|i+1}), \quad i=1,\ldots,d-1
\]
which means that the minimum is obtained for 
$\eta_{i,i+2|i+1}=\rho_{i,i+2|i+1}$, $i=1,\ldots,d-2$, which again means that
\[
R_{\Psi,i,i+2}=R_{\Sigma,i,i+1}R_{\Sigma,i+1,i+2}+\rho_{i,i+2|i+1}\sqrt{(1-R_{\Sigma,i,i+1}^{2})(1-R_{\Sigma,i+1,i+2}^{2})}=R_{\Sigma,i,i+2}, \quad i=1,\ldots,d-2.
\]
The \textbf{fourth step} (if $d \geq 4$, otherwise the third step is the last) is to find the 
parameters $\eta_{i,i+3|i+1,i+2}$, $i=1,\ldots,d-3$ of the third tree of the D-vine that 
maximize the forward KL when all the remaining $d-4$ trees are set to independence, 
which corresponds to setting $\eta_{i,i+j|i+1,\ldots,i+j-1}=0$, for $j=3,\ldots,d-2$, 
$i=1,\ldots,d-i$. We have (consult for instance \cite{baba2004})
\begin{small}
\begin{align*}
&\eta_{i,i+3|i+1,i+2}\\
= & \frac{R_{\Psi,i,i+3}-\Bb R_{\Psi,(i+1,i+2),i}^{T}\Bb R_{\Psi,(i+1,i+2),(i+1,i+2)}^{-1}\Bb R_{\Psi,(i+1,i+2),i+3}}{\sqrt{(1- \Bb R_{\Psi,(i+1,i+2),i}^{T}\Bb R_{\Psi,(i+1,i+2),(i+1,i+2)}^{-1}\Bb R_{\Psi,(i+1,i+2),i})(1-\Bb R_{\Psi,(i+1,i+2),i+3}^{T}\Bb R_{\Psi,(i+1,i+2),(i+1,i+2)}^{-1}\Bb R_{\Psi(i+1,i+2),i+3})}}\\
= &\frac{R_{\Psi,i,i+3}-\Bb R_{\Sigma,(i+1,i+2),i}^{T}\Bb R_{\Sigma,(i+1,i+2),(i+1,i+2)}^{-1}\Bb R_{\Sigma,(i+1,i+2),i+3}}{\sqrt{(1-\Bb R_{\Sigma,(i+1,i+2),i}^{T}\Bb R_{\Sigma,(i+1,i+2),(i+1,i+2)}^{-1}\Bb R_{\Sigma,(i+1,i+2),i})(1-\Bb R_{\Sigma(i+1,i+2),i+3}^{T}\Bb R_{\Sigma,(i+1,i+2),(i+1,i+2)}^{-1}\Bb R_{\Sigma,(i+1,i+2),i+3})}},
\end{align*}
\end{small}
and
\begin{small}
\begin{align*}
&\eta_{i,i+4|i+1 \ldots i+3}\\
= & \frac{R_{\Psi,i,i+4}-\Bb R_{\Psi,i+1 \ldots i+3,i}^{T}\Bb R_{\Psi,i+1 \ldots i+3,i+1 \ldots i+3}^{-1}\Bb R_{\Psi,i+1\ldots i+3,i+4}}{\sqrt{(1- \Bb R_{\Psi,i+1 \ldots i+3,i}^{T}\Bb R_{\Psi,i+1 \ldots i+3,i+1 \ldots i+3}^{-1}\Bb R_{\Psi,i+1 \ldots i+3,i})(1-\Bb R_{\Psi,i+1 \ldots i+3,i+4}^{T}\Bb R_{\Psi,i+1 \ldots i+3,i+1 \ldots i+3}^{-1}\Bb R_{\Psi,i+1 \ldots i+3,i+4})}}\\
& = 0,
\end{align*}
\end{small}
so that
\begin{scriptsize}
\begin{align*}
&R_{\Psi,i,i+3} =\\ 
&\Bb R_{\Sigma,(i+1,i+2),i}^{T}\Bb R_{\Sigma,(i+1,i+2),(i+1,i+2)}^{-1}\Bb R_{\Sigma,(i+1,i+2),i+3}\\
&+\eta_{i,i+3|i+1,i+2}\sqrt{(1-\Bb R_{\Sigma,(i+1,i+2),i}^{T}\Bb R_{\Sigma,(i+1,i+2),(i+1,i+2)}^{-1}\Bb R_{\Sigma,(i+1,i+2),i})(1-\Bb R_{\Sigma(i+1,i+2),i+3}^{T}\Bb R_{\Sigma,(i+1,i+2),(i+1,i+2)}^{-1}\Bb R_{\Sigma,(i+1,i+2),i+3})}
\end{align*}
\end{scriptsize}
and
\begin{align*}
R_{\Psi,i,i+4} = &\Bb R_{\Psi,i+1 \ldots i+3,i}^{T}\Bb R_{\Psi,i+1 \ldots i+3,i+1 \ldots i+3}^{-1}\Bb R_{\Psi,i+1 \ldots i+3,i+4}, \quad i = 1,\ldots,d-4.
\end{align*}
Based on this it is straightforward to find that
and also the following relationships, that are 
straightforward to obtain
\begin{align*}
(1-R_{\Sigma,i,i+1}^2)(1-\rho_{i,i+2|i+1}^2) = &1-\Bb R_{\Sigma,(i+1,i+2),i}^{T}\Bb R_{\Sigma,(i+1,i+2),(i+1,i+2)}^{-1}\Bb R_{\Sigma,(i+1,i+2),i}\\
(1-R_{\Sigma,i+1,i+2}^2)(1-\rho_{i,i+2|i+1}^2) = &1-\Bb R_{\Sigma,(i,i+1),i+2}^{T}\Bb R_{\Sigma,(i,i+1),(i,i+1)}^{-1}\Bb R_{\Sigma,(i,i+1),i+2}.
\end{align*}
Further, we have
\[
|\Bb{R}_{\Psi}| = \prod_{i=1}^{d-1}\left(1-R_{\Sigma,i,i+1}^{2}\right)\cdot\prod_{k=1}^{d-2}\left(1-\rho_{k,k+2|k+1}^{2}\right)\cdot\prod_{k=1}^{d-3}\left(1-\eta_{k,k+3|k+1,k+2}^{2}\right)
\]
and
\begin{small}
\begin{align*}
A_{1,1} = &\left(\Bb{R}_{\Psi}^{-1}\right)_{1,1}+R_{\Sigma,1,2}\left(\Bb{R}_{\Psi}^{-1}\right)_{1,2}+R_{\Sigma,1,3}\left(\Bb{R}_{\Psi}^{-1}\right)_{1,3}+R_{\Sigma,1,4}\left(\Bb{R}_{\Psi}^{-1}\right)_{1,4}\\
A_{d,d} = &\left(\Bb{R}_{\Psi}^{-1}\right)_{d,d}+R_{\Sigma,d-1,d}\left(\Bb{R}_{\Psi}^{-1}\right)_{d-1,d}+R_{\Sigma,d-2,d}\left(\Bb{R}_{\Psi}^{-1}\right)_{d-2,d}+R_{\Sigma,d-3,d}\left(\Bb{R}_{\Psi}^{-1}\right)_{d-3,d}\\
A_{2,2} = & \left(\Bb{R}_{\Psi}^{-1}\right)_{2,2}+R_{\Sigma,1,2}\left(\Bb{R}_{\Psi}^{-1}\right)_{1,2}+R_{\Sigma,2,3}\left(\Bb{R}_{\Psi}^{-1}\right)_{2,3}+R_{\Sigma,2,4}\left(\Bb{R}_{\Psi}^{-1}\right)_{2,4}+I(d > 4)R_{\Sigma,2,5}\left(\Bb{R}_{\Psi}^{-1}\right)_{2,5}\\
A_{d-1,d-1} = & \left(\Bb{R}_{\Psi}^{-1}\right)_{d-1,d-1}+I(d > 4)R_{\Sigma,d-4,d-1}\left(\Bb{R}_{\Psi}^{-1}\right)_{d-4,d-1}+R_{\Sigma,d-3,d-1}\left(\Bb{R}_{\Psi}^{-1}\right)_{d-3,d-1}\\
&+R_{\Sigma,d-2,d-1}\left(\Bb{R}_{\Psi}^{-1}\right)_{d-2,d-1}+R_{\Sigma,d-1,d}\left(\Bb{R}_{\Psi}^{-1}\right)_{d-1,d}\\
A_{3,3} = & \left(\Bb{R}_{\Psi}^{-1}\right)_{3,3}+R_{\Sigma,1,3}\left(\Bb{R}_{\Psi}^{-1}\right)_{1,3}+R_{\Sigma,2,3}\left(\Bb{R}_{\Psi}^{-1}\right)_{2,3}+R_{\Sigma,3,4}\left(\Bb{R}_{\Psi}^{-1}\right)_{3,4}\\
&+I(d > 4)R_{\Sigma,3,5}\left(\Bb{R}_{\Psi}^{-1}\right)_{3,5}+I(d > 5)R_{\Sigma,3,6}\left(\Bb{R}_{\Psi}^{-1}\right)_{3,6}\\
A_{d-2,d-2} = & \left(\Bb{R}_{\Psi}^{-1}\right)_{d-2,d-2}+I(d > 5)R_{\Sigma,d-5,d-2}\left(\Bb{R}_{\Psi}^{-1}\right)_{d-5,d-2}+I(d > 4)R_{\Sigma,d-4,d-2}\left(\Bb{R}_{\Psi}^{-1}\right)_{d-4,d-2}\\
&+R_{\Sigma,d-3,d-2}\left(\Bb{R}_{\Psi}^{-1}\right)_{d-3,d-2}+R_{\Sigma,d-2,d-1}\left(\Bb{R}_{\Psi}^{-1}\right)_{d-2,d-1}+R_{\Sigma,d-2,d}\left(\Bb{R}_{\Psi}^{-1}\right)_{d-2,d}\\
A_{i,i} = & \left(\Bb{R}_{\Psi}^{-1}\right)_{i,i}+R_{\Sigma,i-3,i}\left(\Bb{R}_{\Psi}^{-1}\right)_{i-3,i}+R_{\Sigma,i-2,i}\left(\Bb{R}_{\Psi}^{-1}\right)_{i-2,i}+R_{\Sigma,i-1,i}\left(\Bb{R}_{\Psi}^{-1}\right)_{i-1,i}\\
&+R_{\Sigma,i,i+1}\left(\Bb{R}_{\Psi}^{-1}\right)_{i,i+1}+R_{\Sigma,i,i+2}\left(\Bb{R}_{\Psi}^{-1}\right)_{i,i+2}+R_{\Sigma,i,i+3}\left(\Bb{R}_{\Psi}^{-1}\right)_{i,i+3}, \quad i=4,\ldots,d-3, \mbox{ for } d > 6.
\end{align*}
\end{small}
Again, we use Cramer's rule combined with Laplace expansion, as well as the constraints 
imposed by the partial correlations set to $0$ to obtain
\begin{tiny}
\begin{align*}
&\left(\Bb{R}_{\Psi}^{-1}\right)_{1,1} = \frac{1}{(1-\Bb R_{\Sigma,(2,3),1}^{T}\Bb R_{\Sigma,(2,3),(2,3)}^{-1}\Bb R_{\Sigma,(2,3),1})(1-\eta_{1,4|2,3}^{2})}\\
&\left(\Bb{R}_{\Psi}^{-1}\right)_{d,d} = \frac{1}{(1-\Bb R_{\Sigma,(d-2,d-1),d}^{T}\Bb R_{\Sigma,(d-2,d-1),(d-2,d-1)}^{-1}\Bb R_{\Sigma,(d-2,d-1),d})(1-\eta_{d-3,d|d-2,d-1}^{2})}\\
&\left(\Bb{R}_{\Psi}^{-1}\right)_{1,2} =  -\frac{R_{\Sigma,1,2}-R_{\Sigma,1,3}R_{\Sigma,2,3}}{(1-\Bb R_{\Sigma,(2,3),1}^{T}\Bb R_{\Sigma,(2,3),(2,3)}^{-1}\Bb R_{\Sigma,(2,3),1})(1-R_{\Sigma,2,3}^{2})(1-\eta_{1,4|2,3}^{2})}\\
&\ \ \ \ \ \ \ \ \ \ \ \ \ \ +\frac{(R_{\Sigma,2,4}-R_{\Sigma,2,3}R_{\Sigma,3,4})(\eta_{1,4}-\Bb R_{\Sigma,(2,3),1}^{T}\Bb R_{\Sigma,(2,3),(2,3)}^{-1}\Bb R_{\Sigma,(2,3),4})}{(1-\Bb R_{\Sigma,(2,3),1}^{T}\Bb R_{\Sigma,(2,3),(2,3)}^{-1}\Bb R_{\Sigma,(2,3),1})(1-\Bb R_{\Sigma,(2,3),4}^{T}\Bb R_{\Sigma,(2,3),(2,3)}^{-1}\Bb R_{\Sigma,(2,3),4})(1-R_{\Sigma,2,3}^2)(1-\eta_{1,4|2,3}^{2})}\\
&\left(\Bb{R}_{\Psi}^{-1}\right)_{1,3} =  -\frac{R_{\Sigma,1,3}-R_{\Sigma,1,2}R_{\Sigma,2,3}}{(1-\Bb R_{\Sigma,(2,3),1}^{T}\Bb R_{\Sigma,(2,3),(2,3)}^{-1}\Bb R_{\Sigma,(2,3),1})(1-R_{\Sigma,2,3}^{2})(1-\eta_{1,4|2,3}^{2})}\\
&\ \ \ \ \ \ \ \ \ \ \ \ \ \ +\frac{(R_{\Sigma,3,4}-R_{\Sigma,2,3}R_{\Sigma,2,4})(\eta_{1,4}-\Bb R_{\Sigma,(2,3),1}^{T}\Bb R_{\Sigma,(2,3),(2,3)}^{-1}\Bb R_{\Sigma,(2,3),4})}{(1-\Bb R_{\Sigma,(2,3),1}^{T}\Bb R_{\Sigma,(2,3),(2,3)}^{-1}\Bb R_{\Sigma,(2,3),1})(1-\Bb R_{\Sigma,(2,3),4}^{T}\Bb R_{\Sigma,(2,3),(2,3)}^{-1}\Bb R_{\Sigma,(2,3),4})(1-R_{\Sigma,2,3}^2)(1-\eta_{1,4|2,3}^{2})}\\
&\left(\Bb{R}_{\Psi}^{-1}\right)_{i,i+2} = \\
&\frac{(R_{\Sigma,i+2,i+3}-R_{\Sigma,i+1,i+2}R_{\Sigma,i+1,i+3})(\eta_{i,i+3}-\Bb R_{\Sigma,(i+1,i+2),i}^{T}\Bb R_{\Sigma,(i+1,i+2),(i+1,i+2)}^{-1}\Bb R_{\Sigma,(i+1,i+2),i+3})/(1-\eta_{i,i+3|i+1,i+2}^{2})}{(1-\Bb R_{\Sigma,(i+1,i+2),i}^{T}\Bb R_{\Sigma,(i+1,i+2),(i+1,i+2)}^{-1}\Bb R_{\Sigma,(i+1,i+2),i})(1-\Bb R_{\Sigma,(i+1,i+2),i+3}^{T}\Bb R_{\Sigma,(i+1,i+2),(i+1,i+2)}^{-1}\Bb R_{\Sigma,(i+1,i+2),i+3})(1-R_{\Sigma,i+1,i+2}^2)}\\
&+\frac{(R_{\Sigma,i-1,i}-R_{\Sigma,i-1,i+1}R_{\Sigma,i,i+1})(\eta_{i-1,i+2}-\Bb R_{\Sigma,(i,i+1),i-1}^{T}\Bb R_{\Sigma,(i,i+1),(i,i+1)}^{-1}\Bb R_{\Sigma,(i,i+1),i+2})/(1-\eta_{i-1,i+2|i,i+1}^{2})}{(1-\Bb R_{\Sigma,(i,i+1),i-1}^{T}\Bb R_{\Sigma,(i,i+1),(i,i+1)}^{-1}\Bb R_{\Sigma,(i,i+1),i+2})(1-\Bb R_{\Sigma,(i+1,i+2),i}^{T}\Bb R_{\Sigma,(i+1,i+2),(i+1,i+2)}^{-1}\Bb R_{\Sigma,(i+1,i+2),i})(1-R_{\Sigma,i+1,i+2}^2)}\\
&-\frac{(R_{\Sigma,i,i+2}-R_{\Sigma,i,i+1}R_{\Sigma,i+1,i+2})(1-\eta_{i-1,i+2|i,i+1}^{2}\eta_{i,i+3|i+1,i+2}^{2})}{(1-\Bb R_{\Sigma,(i+1,i+2),i}^{T}\Bb R_{\Sigma,(i+1,i+2),(i+1,i+2)}^{-1}\Bb R_{\Sigma,(i+1,i+2),i})(1-R_{\Sigma,i+1,i+2}^2)(1-\eta_{i-1,i+2|i,i+1}^{2})(1-\eta_{i,i+3|i+1,i+2}^{2})},\\ 
&\quad i=2,\ldots,d-3\\
&\left(\Bb{R}_{\Psi}^{-1}\right)_{i,i+3} = \\
&-\frac{(\eta_{i,i+3}-\Bb R_{\Sigma,(i+1,i+2),i}^{T}\Bb R_{\Sigma,(i+1,i+2),(i+1,i+2)}^{-1}\Bb R_{\Sigma,(i+1,i+2),i+3})/(1-\eta_{i,i+3|i+1,i+2}^{2})}{(1-\Bb R_{\Sigma,(i+1,i+2),i}^{T}\Bb R_{\Sigma,(i+1,i+2),(i+1,i+2)}^{-1}\Bb R_{\Sigma,(i+1,i+2),i})(1-\Bb R_{\Sigma,(i+1,i+2),i+3}^{T}\Bb R_{\Sigma,(i+1,i+2),(i+1,i+2)}^{-1}\Bb R_{\Sigma,(i+1,i+2),i+3})},\\ 
&\quad i=1,\ldots,d-3\\
&\left(\Bb{R}_{\Psi}^{-1}\right)_{d-1,d} = \\ &-\frac{R_{\Sigma,d-1,d}-R_{\Sigma,d-2,d-1}R_{\Sigma,d-2,d}}{(1-\Bb R_{\Sigma,(d-2,d-1),d}^{T}\Bb R_{\Sigma,(d-2,d-1),(d-2,d-1)}^{-1}\Bb R_{\Sigma,(d-2,d-1),d})(1-R_{\Sigma,d-2,d-1}^{2})(1-\eta_{d-3,d|d-2,d-1}^{2})}\\
&+\frac{(R_{\Sigma,d-3,d-1}-R_{\Sigma,d-2,d-1}R_{\Sigma,d-1,d})(\eta_{d-3,d}-\Bb R_{\Sigma,(d-2,d-1),d-3}^{T}\Bb R_{\Sigma,(d-2,d-1),(d-2,d-1)}^{-1}\Bb R_{\Sigma,(d-2,d-1),d})/(1-\eta_{d-3,d|d-2,d-1}^{2})}{(1-\Bb R_{\Sigma,(d-2,d-1),d-3}^{T}\Bb R_{\Sigma,(d-2,d-1),(d-2,d-1)}^{-1}\Bb R_{\Sigma,(d-2,d-1),d-3})(1-\Bb R_{\Sigma,(d-2,d-1),d}^{T}\Bb R_{\Sigma,(d-2,d-1),(d-2,d-1)}^{-1}\Bb R_{\Sigma,(d-2,d-1),d})(1-R_{\Sigma,d-2,d-1}^2)}\\
&\left(\Bb{R}_{\Psi}^{-1}\right)_{d-2,d} =\\  &-\frac{R_{\Sigma,d-2,d}-R_{\Sigma,d-2,d-1}R_{\Sigma,d-1,d}}{(1-\Bb R_{\Sigma,(d-2,d-1),d}^{T}\Bb R_{\Sigma,(d-2,d-1),(d-2,d-1)}^{-1}\Bb R_{\Sigma,(d-2,d-1),d})(1-R_{\Sigma,d-2,d-1}^{2})(1-\eta_{d-3,d|d-2,d-1}^{2})}\\
&+\frac{(R_{\Sigma,d-1,d}-R_{\Sigma,d-2,d-1}R_{\Sigma,d-3,d-1})(\eta_{d-3,d}-\Bb R_{\Sigma,(d-2,d-1),d-3}^{T}\Bb R_{\Sigma,(d-2,d-1),(d-2,d-1)}^{-1}\Bb R_{\Sigma,(d-2,d-1),d})/(1-\eta_{d-3,d|d-2,d-1}^{2})}{(1-\Bb R_{\Sigma,(d-2,d-1),d-3}^{T}\Bb R_{\Sigma,(d-2,d-1),(d-2,d-1)}^{-1}\Bb R_{\Sigma,(d-2,d-1),d-3})(1-\Bb R_{\Sigma,(d-2,d-1),d}^{T}\Bb R_{\Sigma,(d-2,d-1),(d-2,d-1)}^{-1}\Bb R_{\Sigma,(d-2,d-1),d})(1-R_{\Sigma,d-2,d-1}^2)}\\
\end{align*}
\end{tiny}
For $d=4$, we have
\begin{tiny}
\begin{align*}
&\left(\Bb{R}_{\Psi}^{-1}\right)_{2,2} =\\ 
&\frac{(1-R_{\Sigma,1,3}^2)}{(1-\Bb R_{\Sigma,(2,3),1}^{T}\Bb R_{\Sigma,(2,3),(2,3)}^{-1}\Bb R_{\Sigma,(2,3),1})(1-R_{\Sigma,2,3}^2)(1-\Bb R_{\Sigma,(3,4),2}^{T}\Bb R_{\Sigma,(3,4),(3,4)}^{-1}\Bb R_{\Sigma,(3,4),2})}\\
&-\frac{(\Bb R_{\Sigma,(2,3),4}^{T}\Bb R_{\Sigma,(2,3),(2,3)}^{-1}\Bb R_{\Sigma,(2,3),4}-R_{\Sigma,1,3}R_{\Sigma,3,4})^2}{(1-\Bb R_{\Sigma,(2,3),1}^{T}\Bb R_{\Sigma,(2,3),(2,3)}^{-1}\Bb R_{\Sigma,(2,3),1})(1-R_{\Sigma,2,3}^2)(1-\Bb R_{\Sigma,(2,3),4}^{T}\Bb R_{\Sigma,(2,3),(2,3)}^{-1}\Bb R_{\Sigma,(2,3),4})(1-R_{\Sigma,2,3}^2)(1-\eta_{1,4|2,3}^{2})}\\
&-2\frac{(\Bb R_{\Sigma,(2,3),1}^{T}\Bb R_{\Sigma,(2,3),(2,3)}^{-1}\Bb R_{\Sigma,(2,3),4}-R_{\Sigma,1,3}R_{\Sigma,3,4})(\eta_{1,4}-\Bb R_{\Sigma,(2,3),1}^{T}\Bb R_{\Sigma,(2,3),(2,3)}^{-1}\Bb R_{\Sigma,(2,3),4})}{(1-\Bb R_{\Sigma,(2,3),1}^{T}\Bb R_{\Sigma,(2,3),(2,3)}^{-1}\Bb R_{\Sigma,(2,3),1})(1-R_{\Sigma,2,3}^2)(1-\Bb R_{\Sigma,(2,3),4}^{T}\Bb R_{\Sigma,(2,3),(2,3)}^{-1}\Bb R_{\Sigma,(2,3),4})(1-R_{\Sigma,2,3}^2)(1-\eta_{1,4|2,3}^{2})}\\
&\left(\Bb{R}_{\Psi}^{-1}\right)_{2,3} =\\ 
&\frac{R_{\Sigma,1,3}(R_{\Sigma,1,2}-R_{\Sigma,1,3}R_{\Sigma,2,3})+(1-\Bb R_{\Sigma,(2,3),1}^{T}\Bb R_{\Sigma,(2,3),(2,3)}^{-1}\Bb R_{\Sigma,(2,3),1})\eta_{1,4|2,3}^{2}}{(1-\Bb R_{\Sigma,(2,3),1}^{T}\Bb R_{\Sigma,(2,3),(2,3)}^{-1}\Bb R_{\Sigma,(2,3),1})(1-R_{\Sigma,2,3}^2)(1-\eta_{1,4|2,3}^{2})}\\
&-\frac{(R_{\Sigma,1,2}-R_{\Sigma,1,3}R_{\Sigma,2,3})(R_{\Sigma,3,4}-R_{\Sigma,2,3}R_{\Sigma,2,4})+(1-R_{\Sigma,1,3}^2)(1-R_{\Sigma,2,3}^2)(R_{\Sigma,2,3}-R_{\Sigma,2,4}R_{\Sigma,3,4})}{(1-\Bb R_{\Sigma,(2,3),1}^{T}\Bb R_{\Sigma,(2,3),(2,3)}^{-1}\Bb R_{\Sigma,(2,3),1})(1-\Bb R_{\Sigma,(3,4),2}^{T}\Bb R_{\Sigma,(3,4),(3,4)}^{-1}\Bb R_{\Sigma,(3,4),2})(1-R_{\Sigma,2,3}^2)(1-R_{\Sigma,3,4}^2)(1-\eta_{1,4|2,3}^{2})}\\
&-\frac{((R_{\Sigma,1,3}-R_{\Sigma,1,2}R_{\Sigma,2,3})(R_{\Sigma,2,4}-R_{\Sigma,2,3}R_{\Sigma,3,4})+(R_{\Sigma,1,2}-R_{\Sigma,1,3}R_{\Sigma,2,3})(R_{\Sigma,3,4}-R_{\Sigma,2,3}R_{\Sigma,2,4}))(\eta_{1,4}-\Bb R_{\Sigma,(2,3),1}^{T}\Bb R_{\Sigma,(2,3),(2,3)}^{-1}\Bb R_{\Sigma,(2,3),4})}{(1-\Bb R_{\Sigma,(2,3),1}^{T}\Bb R_{\Sigma,(2,3),(2,3)}^{-1}\Bb R_{\Sigma,(2,3),1})(1-R_{\Sigma,2,3}^2)(1-\Bb R_{\Sigma,(3,4),2}^{T}\Bb R_{\Sigma,(3,4),(3,4)}^{-1}\Bb R_{\Sigma,(3,4),2})(1-R_{\Sigma,2,3}^2)(1-R_{\Sigma,3,4}^2)(1-\eta_{1,4|2,3}^{2})}	
\end{align*}
\end{tiny}
and for $d > 4$, we have
\begin{tiny}
\begin{align*}
&\left(\Bb{R}_{\Psi}^{-1}\right)_{2,2} =\\ 
&\frac{(\Bb R_{\Sigma,(2,3),1}^{T}\Bb R_{\Sigma,(2,3),(2,3)}^{-1}\Bb R_{\Sigma,(2,3),1}-R_{\Sigma,1,3}^2)(1-\eta_{2,5|3,4}^{2})+(1-\Bb R_{\Sigma,(2,3),1}^{T}\Bb R_{\Sigma,(2,3),(2,3)}^{-1}\Bb R_{\Sigma,(2,3),1})(1-\eta_{1,4|2,3}^{2})}{(1-\Bb R_{\Sigma,(2,3),1}^{T}\Bb R_{\Sigma,(2,3),(2,3)}^{-1}\Bb R_{\Sigma,(2,3),1})(1-R_{\Sigma,2,3}^2)(1-\eta_{1,4|2,3}^{2})(1-\eta_{2,5|3,4}^{2})}\\
&+\frac{(\Bb R_{\Sigma,(2,3),4}^{T}\Bb R_{\Sigma,(2,3),(2,3)}^{-1}\Bb R_{\Sigma,(2,3),4}-R_{\Sigma,3,4}^2)(1-\eta_{1,4|2,3}^{2}\eta_{2,5|3,4}^{2})}{(1-\Bb R_{\Sigma,(3,4),2}^{T}\Bb R_{\Sigma,(3,4),(3,4)}^{-1}\Bb R_{\Sigma,(3,4),2})(1-R_{\Sigma,3,4}^2)(1-\eta_{1,4|2,3}^{2})(1-\eta_{2,5|3,4}^{2})}\\
&-2\frac{(\Bb R_{\Sigma,(2,3),1}^{T}\Bb R_{\Sigma,(2,3),(2,3)}^{-1}\Bb R_{\Sigma,(2,3),4}-R_{\Sigma,1,3}R_{\Sigma,3,4})(\eta_{1,4}-\Bb R_{\Sigma,(2,3),1}^{T}\Bb R_{\Sigma,(2,3),(2,3)}^{-1}\Bb R_{\Sigma,(2,3),4})}{(1-\Bb R_{\Sigma,(2,3),1}^{T}\Bb R_{\Sigma,(2,3),(2,3)}^{-1}\Bb R_{\Sigma,(2,3),1})(1-R_{\Sigma,2,3}^2)(1-\Bb R_{\Sigma,(2,3),4}^{T}\Bb R_{\Sigma,(2,3),(2,3)}^{-1}\Bb R_{\Sigma,(2,3),4})(1-R_{\Sigma,2,3}^2)(1-\eta_{1,4|2,3}^{2})}\\	
&\left(\Bb{R}_{\Psi}^{-1}\right)_{2,3} =\\ 
&\frac{(\Bb R_{\Sigma,(2,3),1}^{T}\Bb R_{\Sigma,(2,3),(2,3)}^{-1}\Bb R_{\Sigma,(2,3),1}-R_{\Sigma,1,2}R_{\Sigma,1,3})(1-\eta_{2,5|3,4}^{2})+R_{\Sigma,2,3}(1-\Bb R_{\Sigma,(2,3),1}^{T}\Bb R_{\Sigma,(2,3),(2,3)}^{-1}\Bb R_{\Sigma,(2,3),1})(1-\eta_{1,4|2,3}^{2})}{(1-\Bb R_{\Sigma,(2,3),1}^{T}\Bb R_{\Sigma,(2,3),(2,3)}^{-1}\Bb R_{\Sigma,(2,3),1})(1-R_{\Sigma,2,3}^2)(1-\eta_{1,4|2,3}^{2})(1-\eta_{2,5|3,4}^{2})}\\
&+\frac{(R_{\Sigma,2,4}-R_{\Sigma,2,3}R_{\Sigma,3,4})(R_{\Sigma,3,4}-R_{\Sigma,2,3}R_{\Sigma,2,4})(1-\eta_{1,4|2,3}^{2}\eta_{2,5|3,4}^{2})}{(1-\Bb R_{\Sigma,(3,4),2}^{T}\Bb R_{\Sigma,(3,4),(3,4)}^{-1}\Bb R_{\Sigma,(3,4),2})(1-R_{\Sigma,2,3}^2)(1-R_{\Sigma,3,4}^2)(1-\eta_{1,4|2,3}^{2})(1-\eta_{2,5|3,4}^{2})}\\
&+\frac{(R_{\Sigma,3,5}-R_{\Sigma,3,4}R_{\Sigma,4,5})(\eta_{2,5}-\Bb R_{\Sigma,(3,4),2}^{T}\Bb R_{\Sigma,(3,4),(3,4)}^{-1}\Bb R_{\Sigma,(3,4),5})}{(1-\Bb R_{\Sigma,(3,4),2}^{T}\Bb R_{\Sigma,(3,4),(3,4)}^{-1}\Bb R_{\Sigma,(3,4),2})(1-\Bb R_{\Sigma,(3,4),5}^{T}\Bb R_{\Sigma,(3,4),(3,4)}^{-1}\Bb R_{\Sigma,(3,4),5})(1-R_{\Sigma,3,4}^2)(1-\eta_{2,5|3,4}^{2})}\\
&-\frac{((R_{\Sigma,1,3}-R_{\Sigma,1,2}R_{\Sigma,2,3})(R_{\Sigma,2,4}-R_{\Sigma,2,3}R_{\Sigma,3,4})+(R_{\Sigma,1,2}-R_{\Sigma,1,3}R_{\Sigma,2,3})(R_{\Sigma,3,4}-R_{\Sigma,2,3}R_{\Sigma,2,4}))(\eta_{1,4}-\Bb R_{\Sigma,(2,3),1}^{T}\Bb R_{\Sigma,(2,3),(2,3)}^{-1}\Bb R_{\Sigma,(2,3),4})}{(1-\Bb R_{\Sigma,(2,3),1}^{T}\Bb R_{\Sigma,(2,3),(2,3)}^{-1}\Bb R_{\Sigma,(2,3),1})(1-R_{\Sigma,2,3}^2)(1-\Bb R_{\Sigma,(3,4),2}^{T}\Bb R_{\Sigma,(3,4),(3,4)}^{-1}\Bb R_{\Sigma,(3,4),2})(1-R_{\Sigma,2,3}^2)(1-R_{\Sigma,3,4}^2)(1-\eta_{1,4|2,3}^{2})}.
\end{align*}
\end{tiny}

This leads to 
\begin{align*}
A_{11} = &\frac{1-\rho_{1,4|2,3}\eta_{1,4|2,3}}{1-\eta_{1,4|2,3}^{2}}\\
A_{dd} = &\frac{1-\rho_{d-3,d|d-2,d-1}\eta_{d-3,d|d-2,d-1}}{1-\eta_{d-3,d|d-2,d-1}^{2}}\\
A_{22} = &\begin{cases}
1, & d = 4\\
\frac{1-\rho_{2,5|3,4}\eta_{2,5|3,4}}{1-\eta_{2,5|3,4}^{2}}, & d > 4
\end{cases}\\
A_{d-1,d-1} = & \frac{1-\rho_{d-4,d-1|d-3,d-2}\eta_{d-4,d-1|d-3,d-2}}{1-\eta_{d-4,d-1|d-3,d-2}^{2}}, \quad d > 4\\
A_{33} = &\begin{cases}
1, & d = 4\\
\frac{1-\rho_{1,4|2,3}\eta_{1,4|2,3}}{1-\eta_{1,3|2,3}^{2}}, & d = 5\\
\frac{1-\rho_{3,6|4,5}\eta_{3,6|4,5}}{1-\eta_{3,6|4,5}^{2}}, & d > 5\\
\end{cases}\\
A_{d-2,d-2} = & \frac{1-\rho_{d-5,d-2|d-4,d-3}\eta_{d-5,d-2|d-4,d-3}}{1-\eta_{d-5,d-2|d-4,d-3}^{2}}, \quad d > 5\\
A_{i,i} = & \frac{1-\rho_{i-3,i|i-2,i-1}\eta_{i-3,i|i-2,i-1}}{1-\eta_{i-3,i|i-2,i-1}^{2}}+\frac{1-\rho_{i,i+3|i+1,i+2}\eta_{i,i+3|i+1,i+2}}{1-\eta_{i,i+3|i+1,i+2}^{2}}, \quad i = 4,\ldots,d-3,
\end{align*}
which gives 
\begin{align*}
KL(p||q) = & \sum_{i=1}^{d-3}\frac{1-\rho_{i,i+3|i+1,i+2}\eta_{i,i+3|i+1,i+2}}{1-\eta_{i,i+3|i+1,i+2}^{2}}-\frac{d-6}{2}-\frac{1}{2}\log|\Bb{R}_{\Sigma}|+\frac{1}{2}\sum_{i=1}^{d-1}\log\left(1-R_{\Sigma,i,i+1}^{2}\right)\\
&+\frac{1}{2}\sum_{i=1}^{d-2}\log\left(1-\rho_{i,i+2|i+1}^{2}\right)+\frac{1}{2}\sum_{i=1}^{d-3}\log\left(1-\eta_{i,i+3|i+1,i+2}^{2}\right)
\end{align*}
Hence, the derivative of the KL divergence with respect to the parameters of the third
tree are given by
\[
\frac{\partial KL(p||q)}{\partial \eta_{i,i+3|i+1,i+2}} = \frac{1+\eta_{i,i+3|i+1,i+2}^{2}}{(1-\eta_{i,i+3|i+1,i+2}^2)^2}(\eta_{i,i+3|i+1,i+2}-\rho_{i,i+3|i+1,i+2}), \quad i=1,\ldots,d-1
\]
which means that the minimum is obtained for 
$\eta_{i,i+3|i+1,i+2}=\rho_{i,i+3|i+1,i+2}$, $i=1,\ldots,d-2$, which again means that
\begin{scriptsize}
\begin{align*}
&R_{\Psi,i,i+3} =\\ 
&\Bb R_{\Sigma,(i+1,i+2),i}^{T}\Bb R_{\Sigma,(i+1,i+2),(i+1,i+2)}^{-1}\Bb R_{\Sigma,(i+1,i+2),i+3}\\
&+\rho_{i,i+3|i+1,i+2}\sqrt{(1-\Bb R_{\Sigma,(i+1,i+2),i}^{T}\Bb R_{\Sigma,(i+1,i+2),(i+1,i+2)}^{-1}\Bb R_{\Sigma,(i+1,i+2),i})(1-\Bb R_{\Sigma(i+1,i+2),i+3}^{T}\Bb R_{\Sigma,(i+1,i+2),(i+1,i+2)}^{-1}\Bb R_{\Sigma,(i+1,i+2),i+3})}\\
& = R_{\Sigma,i,i+3},
\end{align*}
\end{scriptsize}
for $i=1,\ldots,d-3$.

The \textbf{$(t+1)$st step} (if $d \geq t+1$, otherwise the $t$th step is the last) is to find the 
parameters $\eta_{i,i+t-1|i+1\ldots i+t-2}$, $i=1,\ldots,d-t-1$ of the $t$th tree of the 
D-vine that maximise the forward KL when all the remaining $d-t-1$ trees are set to 
independence, which corresponds to setting $\eta_{i,i+j|i+1,\ldots,i+j-1}=0$, for 
$j=t,\ldots,d-t+1$, $i=1,\ldots,d-i$. Assume now that the stepwise procedure parameter
values obtained for the first $t$ trees correspond to the values from the true posterior,
so that $R_{\Psi,i,i+j}=R_{\Sigma,i,i+j}$, $j=1,\ldots,t-1$, $i=1,\ldots,d-i$. Then, we get

\begin{align*}
A_{i,i} = & A_{d-t+i,d-t+i} = \frac{1-\rho_{1,t+1|2,\ldots,t}\eta_{1,t+1|2,\ldots,t}}{1-\eta_{1,t+1|2,\ldots,t}^{2}}, \quad i=1,\ldots,t\\
A_{i,i} = & \frac{1-\rho_{i-t,i|i-t+1,\ldots,i-1}\eta_{i-t,i|i-t+1,\ldots,i-1}}{1-\eta_{i-t,i|i-t+1,\ldots,i-1}^{2}}+\frac{1-\rho_{i,i+t|i+1,\ldots,i+t-1}\eta_{i,i+t|i+1,\ldots,i+t-1}}{1-\eta_{i,i+t|i+1,\ldots,i+t-1}^{2}},\\ 
& i = t+1,\ldots,d-t.
\end{align*}
Hence, the forward KL divergence is given by
\begin{align*}
KL(p||q) = & \sum_{i=1}^{d-t+1}\frac{1-\rho_{i,i+t-1|i+1\ldots i+t-2}\eta_{i,i+t-1|i+1\ldots i+t-2}}{1-\eta_{i,i+t-1|i+1\ldots i+t-2}^{2}}-\frac{d-2(t+1)}{2}-\frac{1}{2}\log|\Bb{R}_{\Sigma}|\\
&+\frac{1}{2}\sum_{i=1}^{d-1}\log\left(1-R_{\Sigma,i,i+1}^{2}\right)+\ldots+\frac{1}{2}\sum_{i=1}^{d-t}\log\left(1-\rho_{i,i+t-2|i+1\ldots i+t-3}^{2}\right)\\
&+\frac{1}{2}\sum_{i=1}^{d-t+1}\log\left(1-\eta_{i,i+t-1|i+1\ldots i+t-2}^{2}\right),
\end{align*}
and the derivative of the KL divergence with respect to the parameters of the $t$th 
tree are given by
\[
\frac{\partial KL(p||q)}{\partial\eta_{i,i+t-1|i+1\ldots i+t-2}} = \frac{1+\eta_{i,i+t-1|i+1\ldots i+t-2}^{2}}{(1-\eta_{i,i+t-1|i+1\ldots i+t-2}^2)^2}(\eta_{i,i+t-1|i+1\ldots i+t-2}-\rho_{i,i+t-1|i+1\ldots i+t-2}), 
\]
for $i=1,\ldots,d-t+1$, which means that the minimum is obtained for 
$\eta_{i,i+t-1|i+1\ldots i+t-2}=\rho_{i,i+t-1|i+1\ldots i+t-2}$, which again means that
$R_{\Psi,i,i+t-1} = R_{\Sigma,i,i+t-1}$, $i=1,\ldots,d-t+1$.
\end{proof}

\begin{remark}
If the D-vine is truncated after tree $t$, so that the stepwise procedure stops after the 
$t+1$th step, the correlations $R_{\Psi,i,i+j}$, for $j > t$, will be approximated by
$R_{\Psi,i,i+j} = \Bb R_{\Psi,i+1\ldots i+j-1,i}^{T}\Bb R_{\Sigma,i+1\ldots i+j-1,i+1\ldots i+j-1}^{-1}\Bb R_{\Sigma,i+1\ldots i+j-1,i+j}$, which in practice will be close to
$R_{\Sigma,i,i+j}$ if $t$ is large enough.
\end{remark}

\subsection{Backward KL}\label{app:backwardKL_proof_and_proposition}

\begin{proof}[Proof of Theorem \ref{thm:backwardKL}]
In the \textbf{first step}, the marginal parameters $\B{\nu}$ and $\Bb{D}_{\Psi}$ are 
estimated by mean-field, minimizing the backward KL, assuming independence between 
$Z_{1},\ldots,Z_{d}$. As shown by \cite{margossian2024variational}, one then obtains 
$\B{\nu}=\B{\mu}$  and\\ 
$D_{\Psi,i,i}=1/\sqrt{(\B{\Sigma}^{-1})_{i,i}} = D_{\Sigma,i,i}/\sqrt{(\Bb R_{\Sigma}^{-1})_{i,i}} , i=1,\ldots,d$. 
The backward KL then reduces to
\begin{align*}
KL(q||p) = &\frac{1}{2}\left(tr\left(\Bb R_{\Sigma}^{-1}\Bb{D}_{\Sigma}^{-1}\Bb{D}_{\Psi}\Bb R_{\Psi}\Bb{D}_{\Psi}\Bb{D}_{\Sigma}^{-1}\right)-d+\sum_{i=1}^{d}\log\left((\Bb R_{\Sigma}^{-1})_{i,i}\right)-\log|\Bb R_{\Psi}|+\log|\Bb R_{\Sigma}|\right)
\end{align*}

The \textbf{second step} is to find the parameters 
$\eta_{i,i+1}=R_{\Psi,i,i+1}$, $i=1,\ldots,d-1$ of the first tree of the D-vine that maximize 
the backward KL when all the remaining $d-2$ trees are set to independence, so that 
the remaining elements of $\Bb{R}_{\Psi}$ are the constrained to be 
$R_{\Psi,i,i+j}=\prod_{k=1}^{j}R_{\Psi,i+k-1,i+k}$, for $j=1,\ldots,d-2$, $i=1,\ldots,d-i$,
and 
\[
|\Bb{R}_{\Psi}| = \prod_{i=1}^{d-1}\left(1-\eta_{i,i+1}^{2}\right).
\]

Let $\Bb B = \Bb R_{\Sigma}^{-1}\Bb{D}_{\Sigma}^{-1}\Bb{D}_{\Psi}\Bb R_{\Psi}\Bb{D}_{\Psi}\Bb{D}_{\Sigma}^{-1}$.
As $\Bb{D}_{\Sigma}^{-1}\Bb{D}_{\Psi}\Bb R_{\Psi}=\Bb R_{\Psi}\Bb{D}_{\Psi}\Bb{D}_{\Sigma}^{-1}$
is a diagonal matrix with diagonal entries $1//\sqrt{(\Bb R_{\Sigma}^{-1})_{i,i}}$, we get
\[
\Bb B_{ii} = 1+\sum_{j\neq i}\eta_{i,j}\frac{(\Bb R_{\Sigma}^{-1})_{i,j}}{\sqrt{(\Bb R_{\Sigma}^{-1})_{i,i}(\Bb R_{\Sigma}^{-1})_{j,j}}} = 1-\sum_{j\neq i}\eta_{i,j}\rho_{i,j|\{1,\ldots,d\}\setminus\{i,j\}}
\]
Plugging in the constraints on $\Bb R_{\Psi}$, this gives
\[
tr(\Bb B) = d-2\sum_{j=1}^{d-1}\sum_{i=1}^{d-j}\rho_{i,j|\{1,\ldots,d\}\setminus\{i,j\}}\prod_{k=1}^{j}\eta_{i+k-1,i+k},
\]
so that
\begin{align*}
KL(q||p) = &-\sum_{j=1}^{d-1}\sum_{i=1}^{d-j}\rho_{i,i+j|\{1,\ldots,d\}\setminus\{i,i+j\}}\prod_{k=1}^{j}\eta_{i+k-1,i+k}-\frac{1}{2}\sum_{i=1}^{d-1}\log\left(1-\eta_{i,i+1}^{2}\right)\\
&+\frac{1}{2}\log|\Bb R_{\Sigma}|+\frac{1}{2}\sum_{i=1}^{d}\log\left((\Bb R_{\Sigma}^{-1})_{i,i}\right)
\end{align*}
Hence, the derivatives of the KL divergence with respect to the parameters of the first 
tree are given by
\begin{align*}
\frac{\partial KL(p||q)}{\partial \eta_{i,i+1}} = &-\rho_{i,i+1|\{1,\ldots,d\}\setminus\{i,i+1\}}-\sum_{l=1}^{i-1}\prod_{k=1}^{l}\eta_{i-k,i-k+1}\rho_{i-l,i+1|\{1,\ldots,d\}\setminus\{i-l,i+1\}}\\
&-\sum_{m=2}^{d-i}\prod_{k=1}^{l}\eta_{i+k,i+k+1}\rho_{i+k,i+k+1|\{1,\ldots,d\}\setminus\{i+k,i+k+1\}}\\
&-\sum_{l=1}^{i-1}\prod_{k=1}^{l}\eta_{i-k,i-k+1}\sum_{m=2}^{d-i}\prod_{n=1}^{m}\eta_{i+n,i+n+1}\rho_{i-l+k,i-l+k+1|\{1,\ldots,d\}\setminus\{i-l+k,i-l+k+1\}}\\
&+\frac{\eta_{i,i+1}}{(1-\eta_{i,i+1}^2)}, \quad i=1,\ldots,d-1.
\end{align*}
Setting the $d-1$ above expressions to $0$, we obtain the parameters of the first tree,
buts these do not have an analytic expression in this case, but the solutions are generally
not the true correlations, unless all correlations, and thus partial correlations, in the true 
posterior are $0$, so that the derivatives of the KL divergence reduce to
\begin{align*}
\frac{\partial KL(p||q)}{\partial \eta_{i,i+1}} = \frac{\eta_{i,i+1}}{(1-\eta_{i,i+1}^2)}, \quad i=1,\ldots,d-1,
\end{align*}
which, when setting to $0$ and solving for the first tree parameters, results in
$\eta_{i,i+1}=0$, $i=1,\ldots,d-1$.
\end{proof}

\begin{proposition}\label{prop:backward_KL_known_std}
Even if the standard deviations $\Bb{D}_{\Sigma}$ are known, the true correlation matrix
$\Bb{R}_{\Sigma}$ will \textbf{not} be recovered with the stepwise procedure using
the backward KL divergence, unless a Gaussian D-vine with only one tree is the true model,
i.e. $\rho_{i,i+j|i+1,\ldots,i+j-1}=0$ for $j=1,\ldots,d-2$, $i=1,\ldots,d-j$.
\end{proposition}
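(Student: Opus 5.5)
With $\Bb{D}_{\Psi}=\Bb{D}_{\Sigma}$ and $\B{\nu}=\B{\mu}$ fixed, the backward KL reduces to a function of the correlation matrices alone:
\[
KL(q||p)=\tfrac12\left(tr(\Bb R_{\Sigma}^{-1}\Bb R_{\Psi})-d-\log|\Bb R_{\Psi}|+\log|\Bb R_{\Sigma}|\right).
\]
The plan is to analyse the stepwise minimisation of this function tree by tree. The goal is to show that the first tree step already fails unless $\Bb R_{\Sigma}$ itself has the structure of a one-tree Gaussian D-vine, i.e. unless $\rho_{i,i+j|i+1,\ldots,i+j-1}=0$ for all $j\geq 2$. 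In that case $\Bb R_{\Sigma}^{-1}$ is tridiagonal (the Markov property of a first-order D-vine), and the proof of Theorem \ref{thm:backwardKL} shows no other obstruction arises.

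In the first tree step all higher partial correlations of $q$ are set to $0$. Hence $R_{\Psi,i,i+j}=\prod_{k=1}^{j}\eta_{i+k-1,i+k}$ and $|\Bb R_{\Psi}|=\prod_i(1-\eta_{i,i+1}^2)$. Writing $\Bb R_{\Sigma}^{-1}=:\Bb P$, the trace is $tr(\Bb P\Bb R_{\Psi})=\sum_i P_{ii}+2\sum_{i<l}P_{il}\prod_{k=i}^{l-1}\eta_{k,k+1}$. Differentiating in $\eta_{i,i+1}$ gives the stationarity equations
\[
\sum_{a\le i<b} P_{ab}\prod_{k=a,\,k\neq i}^{b-1}\eta_{k,k+1}+\frac{\eta_{i,i+1}}{1-\eta_{i,i+1}^2}=0,\qquad i=1,\ldots,d-1 .
\]
I would test these at the candidate $\eta_{i,i+1}=R_{\Sigma,i,i+1}$; that is, I would ask when the true first-tree parameters are stationary.

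If $\Bb R_{\Sigma}$ is a one-tree D-vine, then $\Bb R_{\Psi}=\Bb R_{\Sigma}$ at this candidate. The KL is then zero, which is its global minimum, so the truth is recovered and all later trees return $\eta=0$; this settles the ``if'' direction. For the converse, I would argue that the minimiser of the first step is the one-tree D-vine $\Bb R^{(1)}$ closest to $\Bb R_\Sigma$ in backward KL. This is the I-projection onto the family of first-order Markov Gaussian chains. For that exponential-family-like setup the projection matches a different set of moments than the forward (M-)projection. The forward projection matches $R_{i,i+1}$, as shown in the proof of Theorem \ref{thm:forwardKL}. The backward projection instead matches conditions on $\Bb P$, so in general $\eta_{i,i+1}\neq R_{\Sigma,i,i+1}$.

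Concretely, I would plug $\eta_{i,i+1}=R_{\Sigma,i,i+1}$ into the displayed equations. Then $\eta/(1-\eta^2)$ equals minus the tridiagonal part of $\Bb P^{(1)}=(\Bb R^{(1)})^{-1}$, as computed in the forward-KL proof. The equations would therefore hold iff $\sum_{a\le i<b}(P_{ab}-P^{(1)}_{ab})\prod\eta=0$ for all $i$.

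The main obstacle is showing that this system forces $P_{ab}=0$ for $|a-b|\ge2$ and $P_{i,i+1}=P^{(1)}_{i,i+1}$, i.e. $\Bb R_\Sigma=\Bb R^{(1)}$. I would first do $d=3$ explicitly, where a single off-tridiagonal entry $P_{13}\propto-\rho_{13|2}$ appears. Solving the two equations there exhibits $\eta_{12}\neq R_{\Sigma,12}$ whenever $\rho_{13|2}\neq0$. I would then extend to general $d$ by induction on the bandwidth of the violating partial correlation, i.e. by choosing the smallest $j\ge 2$ with $\rho_{i,i+j|\cdot}\neq0$.

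A fallback is a strict-inequality argument. If the stepwise estimate recovered $\Bb R_\Sigma$ at the end, then every intermediate tree step would have to be optimal for its truncated problem. By strict convexity of $-\log|\cdot|$ plus the linear trace term in $\Bb R_\Psi$, the unique unconstrained optimum over the first-tree family must have the same tridiagonal precision entries as $\Bb P$. Under the zero higher-order constraints, this is incompatible with matching $R_{\Sigma,i,i+1}$ unless the higher-order $\rho$'s vanish.

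Once the first step produces wrong $\eta_{i,i+1}$, later trees are conditioned on those fixed wrong values. They therefore cannot restore the true adjacent correlations, so $\Bb R_\Psi\neq\Bb R_\Sigma$.
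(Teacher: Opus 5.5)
Your reduction of the KL, the trace expansion, the stationarity system, and the idea of testing it at $\eta_{i,i+1}=R_{\Sigma,i,i+1}$ are all correct. Your ``if'' direction via $KL=0$ is also correct, and cleaner than the paper's derivative check.

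\textbf{The main step is false.} The step you call the main obstacle is not just hard: the proposition read literally is false. Take $d=3$ and write $r_{jk}=R_{\Sigma,jk}$, $a=\eta_{12}$, $b=\eta_{23}$, $\mathbf P=\mathbf R_\Sigma^{-1}$. The first-tree objective is
\[
g(a,b)=P_{12}a+P_{23}b+P_{13}ab-\tfrac12\log(1-a^2)-\tfrac12\log(1-b^2)+\text{const},
\]
and at the truth
\[
\partial_a g=-\frac{r_{12}\,(r_{13}-r_{12}r_{23})^2}{(1-r_{12}^2)\,|\mathbf R_\Sigma|},\qquad
\partial_b g=-\frac{r_{23}\,(r_{13}-r_{12}r_{23})^2}{(1-r_{23}^2)\,|\mathbf R_\Sigma|}.
\]
So the true first tree is stationary iff $\rho_{13|2}=0$ \emph{or} $r_{12}=r_{23}=0$. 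The second case is a genuine exception. Take $r_{12}=r_{23}=0$ and $r_{13}=1/2$:
\begin{itemize}
\item Then $P_{12}=P_{23}=0$, $P_{13}=-2/3$, and $g(a,b)\ge\tfrac16(a^2+b^2)$, using $-\log(1-x)\ge x$ and $|ab|\le(a^2+b^2)/2$. So the first step returns exactly the true $a=b=0$.
\item The second step minimises the strictly convex $-\tfrac23\eta-\tfrac12\log(1-\eta^2)$. Its stationarity condition $\eta/(1-\eta^2)=2/3$ gives $\eta_{13|2}=1/2$.
\end{itemize}
Hence $\mathbf R_\Psi=\mathbf R_\Sigma$ although $\rho_{13|2}=1/2\neq0$. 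Your claim that the $d=3$ equations give $\eta_{12}\neq R_{\Sigma,12}$ whenever $\rho_{13|2}\neq0$ is therefore false. So is the general claim that the residual system forces $P_{ab}=0$ for $|a-b|\ge2$, and no induction on bandwidth can rescue the strict ``only if''.

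\textbf{The fallback also fails.} Your own stationarity equation gives $(\mathbf R_\Psi^{-1})_{12}=-\eta_{12}/(1-\eta_{12}^2)=P_{12}+P_{13}\eta_{23}$, not $P_{12}$; matching precision entries is a mean-field phenomenon. In addition, the one-tree D-vine correlation matrices do not form a convex set, so convexity gives no uniqueness there.

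\textbf{What can be proved.} The paper's own proof does not establish the strict statement either. After deriving the same stationarity equations it only asserts that the solution ``will in general be different from the true correlations'', then verifies the one-tree case. Your method does yield a rigorous \emph{generic} version, which is what should be claimed:
\begin{itemize}
\item The first-step objective blows up as $|\eta_{i,i+1}|\to1$, so its minimiser is an interior stationary point.
\item The residuals of your system at $\eta_{i,i+1}=R_{\Sigma,i,i+1}$ are rational in the entries of $\mathbf R_\Sigma$, with nonvanishing denominators on the convex, hence connected, set of positive definite correlation matrices.
\item They are not identically zero: embed a $3\times3$ block with $r_{12}\neq0$ and $\rho_{13|2}\neq0$ block-diagonally, and the $i=1$ residual reduces to $\partial_a g$ above.
\item Hence the true first tree fails to be stationary outside a Lebesgue-null set.
\item Your final observation, that $R_{\Psi,i,i+1}=\eta_{i,i+1}$ cannot be repaired by later trees, then completes the non-recovery argument.
\end{itemize}
For $d=3$ the computation gives the exact picture: recovery occurs iff $\rho_{13|2}=0$, or $r_{12}=r_{23}=0$ and $|r_{13}|\le(\sqrt{5}-1)/2$. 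For larger $|r_{13}|$ the origin is a saddle of $g$.
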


\begin{proof}[Proof of Proposition \ref{prop:backward_KL_known_std}]
In the \textbf{first step}, only the marginal mean vector $\B{\nu}$ is estimated, as in the previous 
cases, the true mean vector $\B{\mu}$ is the solution. The resulting KL divergence is then
\[
KL(p||q) = \frac{1}{2}\left(tr\left(\Bb{R}_{\Sigma}^{-1}\Bb{R}_{\Psi}\right)-d+\log|\Bb{R}_{\Sigma}|-\log|\Bb{R}_{\Psi}|\right).
\]

Let $\Bb C=\Bb{R}_{\Sigma}^{-1}\Bb{R}_{\Psi}$. Now, moving on the the \textbf{second step}, 
we get
\[
\Bb C_{ii} = (\Bb R_{\Sigma}^{-1})_{i,i}+\sum_{j\neq i}\eta_{i,j}(\Bb R_{\Sigma}^{-1})_{i,j}.
\]
Plugging in the constraints on $\Bb R_{\Psi}$, this gives
\[
tr(\Bb C) = \sum_{i=1}^{d}(\Bb R_{\Sigma}^{-1})_{i,i}+2\sum_{j=1}^{d-1}\sum_{i=1}^{d-j}(\Bb R_{\Sigma}^{-1})_{i,j}\prod_{k=1}^{j}\eta_{i+k-1,i+k},
\]
so that
\begin{align*}
KL(q||p) = &\frac{1}{2}\sum_{i=1}^{d}(\Bb R_{\Sigma}^{-1})_{i,i}+\sum_{j=1}^{d-1}\sum_{i=1}^{d-j}(\Bb R_{\Sigma}^{-1})_{i,j}\prod_{k=1}^{j}\eta_{i+k-1,i+k}-\frac{d}{2}-\frac{1}{2}\sum_{i=1}^{d-1}\log\left(1-\eta_{i,i+1}^{2}\right)\\
	&+\frac{1}{2}\log|\Bb R_{\Sigma}|
\end{align*}
Hence, the derivatives of the KL divergence with respect to the parameters of the first 
tree are given by
\begin{align*}
\frac{\partial KL(p||q)}{\partial \eta_{i,i+1}} = &(\Bb R_{\Sigma}^{-1})_{i,i+1}+\sum_{l=1}^{i-1}\prod_{k=1}^{l}\eta_{i-k,i-k+1}(\Bb R_{\Sigma}^{-1})_{i-l,i+1}+\sum_{m=2}^{d-i}\prod_{k=1}^{l}\eta_{i+k,i+k+1}(\Bb R_{\Sigma}^{-1})_{i+k,i+k+1}\\
&+\sum_{l=1}^{i-1}\prod_{k=1}^{l}\eta_{i-k,i-k+1}\sum_{m=2}^{d-i}\prod_{n=1}^{m}\eta_{i+n,i+n+1}(\Bb R_{\Sigma}^{-1})_{i-l+k,i-l+k+1}\\
&+\frac{\eta_{i,i+1}}{(1-\eta_{i,i+1}^2)}, \quad i=1,\ldots,d-1.
\end{align*}
Again, there is no analytical solution, and it will in general be different from the true correlations, unless
$\rho_{i,i+j|i+1,\ldots,i+j-1}=0$ for $j=1,\ldots,d-2$, $i=1,\ldots,d-j$. Then 
$R_{\Sigma}^{-1})_{i,i+1}=-\frac{R{\Sigma,i,i+1}}{(1-R_{\Sigma,i,i+1}^2)}$
$R_{\Sigma}^{-1})_{i,i+j}=0$ for $j=1,\ldots,d-2$, $i=1,\ldots,d-j$, and the derivatives of the KL divergence
reduce to
\begin{align*}
\frac{\partial KL(p||q)}{\partial \eta_{i,i+1}} = -\frac{R_{\Sigma,i,i+1}}{(1-R_{\Sigma,i,i+1}^2)}+\frac{\eta_{i,i+1}}{(1-\eta_{i,i+1}^2)}, \quad i=1,\ldots,d-1,
\end{align*}
which, when setting to $0$ and solving for the first tree parameters, results in
$\eta_{i,i+1}=R_{\Sigma,i,i+1}$, $i=1,\ldots,d-1$.
\end{proof}

\section{Computational Complexity}\label{app:computational_complexity}
A $\tau$-truncated D-vine has $d(d-1)/2 - (d - \tau)(d - \tau - 1)/2$ parameters in comparison to $d(d-1)/2$ parameters of a full D-vine and Gaussian distribution with full-rank covariance matrix (ignoring the mean vector 0 for better comparison).

\section{Implementation of Stepwise VI with Vine Copulas}\label{app:implementation}
We implement our approach in \verb|pyro| \citep{bingham2018pyro} and \verb|PyTorch| \citep{paszke2017automatic}. More specifically, we implement D-vines and Gaussian pair copulas as \verb|TorchDistribution| class in \verb|pyro| with methods \verb|sample| and \verb|log_prob|. The \verb|log_prob| method is based on the likelihood function of a D-vine implemented in \verb|rvinecopulib| \citep{nagler2025rvinecopulib} and the \verb|sample| method is based on Algorithm 6.6. by \citet{czado2019analyzing}. We do not use the python package \verb|torchvinecopulib| \citep{cheng2025vine} which implements vine copulas in \verb|torch| enabling automatic differentiation. This is because \verb|torchvinecopulib| lacks an implementation of parametric pair copula families and we wanted to focus on simpler parameteric variational models.
As the reparametrized VR-IWAE bound yields the same SGD procedure as the reparametrized VR bound \citep{daudel2023alpha}, we use \verb|pyro|'s implementation of the VR bound and its reparametrized lower bound gradient estimators. 

Code will be published upon acceptance of the paper.

\section{Simulation Study on $\alpha$ in the VR-IWAE}\label{app:simulation_study_alpha}
We conduct an intensive simulation study to analyze the effect of $\alpha \in (0,1)$ in the VR-IWAE on the approximation capacity of the D-vine and the MF as a variational distributions. The MF and the D-vine are analyzed separately to find out whether the effect of $\alpha$ on the approximation capacity differs between D-vine and MF. 
% The goal is to find out whether tuning of $\alpha \in (0,1)$ is jointly or for the D-vine and MF each is necessary, or whether a joint $\alpha$ or two or separate $\alpha$ values can be pre-defined independent of the current example.

We create three different set ups and sample 10 different examples, i.e. data sets, per set up. On each we run VI with D-vine and MF as the variational distribution for each $\alpha \in \{0.01, 0.05, 0.1, 0.2, 0.3, 0.4, 0.5, 0.6, 0.7, 0.8, 0.9\} =: A$.

\subsection{Gaussian Set Up}
Following \citet{shen2025wild} we simulate a data set $(\mathbf{x}_i^T, y_i)_{i \in [n]}$ as follows: We first sample $n=300$ observations from the explanatory variables $(X_1, ..., X_4, Y)^T$:
\begin{align}
    (X_1, ..., X_4)^T &\sim \mathcal{N}(\bm{0}, C_k) \; ,
\end{align}
and then set:
\begin{align}
    Y := \beta_1 X_1 + \beta_2 X_2 + \beta_3 X_3 + \beta_4 X_4 \; .
\end{align}
We define the true value of the latent vector to:
\begin{align}
    (\beta_1, ..., \beta_4)^T := (10, -10, 5, 3) \; .
\end{align}
and set different values for the correlation matrix $C_k$ for each $k \in [10]$:
\begin{align}\label{eq:C1C2_correlation_simulation_study_definition}
    C_1 &:= \begin{pmatrix}
         1.0000 & -0.8590 & -0.8749 & -0.6122 \\
        -0.8590 &  1.0000 &  0.9154 &  0.8862 \\
        -0.8749 &  0.9154 &  1.0000 &  0.6948 \\
        -0.6122 &  0.8862 &  0.6948 &  1.0000
    \end{pmatrix} \; , \\
    C_2 &:= \begin{pmatrix}
        1.0000 & -0.6000 &  -0.0371 & 0.5559 \\
        -0.6000 & 1.0000 & 0.7000 & 0.9066 \\
        -0.0371 & 0.7000 & 1.0000 & -0.5000 \\
        0.5559 & 0.9066 & -0.5000 & 1.0000
    \end{pmatrix} \; ,
\end{align}
and for remaining $k \in \{3, ..., 10\}$ we sample 4 precision matrices each from:
\begin{align}
    P_k &\sim \mathcal{W}(\nu=5, C_1^{-1}) \; , \\
    P_k &\sim \mathcal{W}(\nu=10, C_2^{-1}) 
\end{align}
where $\mathcal{W}$ denotes the Wishart distribution, and obtain the correlation matrices by $C_k := Corr(P_k^{-1})$. As proposed by \citet{shen2025wild} we use the likelihood $y_i | \bm{\beta}, \mathbf{x}_i \sim \mathcal{N}(\mathbf{x}_i^T \bm{\beta}, 1)$\footnote{This likelihood as proposed by \citet{shen2025wild} is slightly mis-specified due to its standard deviation of 1. As the goal of the simulation study is to study the effect of $\alpha$ on the performance of variational approximation obtained with VR-IWAE, we take over the examples as is.} and chose $\mathcal{N}(\bm{0}, 10^4 \cdot I_4)$ as prior. We compare the variational distribution to samples from the true posterior obtained from NUTS \citep{hoffman2014no}.

\paragraph{Stepwise D-vine}
For comparing different $\alpha$ values, we compute the forward KL-divergence, i.e. $KL\big(p(\bm{\beta}|\mathbf{x}, \bm{y}) || q(\bm{\beta}; \bm{\lambda}^*, \bm{\eta}) \big)$, which is available in closed form for two multivariate Gaussians. Since the KL-divergence does not have a unified scale, values between different example cannot be compared. For this reason we assess:
\begin{align}\label{def:relKL}
    \Delta KL_{rel}^{(\alpha, k)} := \frac{KL^{(\alpha, k)}\big(p(\bm{\beta}|\mathbf{x}, \bm{y}) || q(\bm{\beta}; \bm{\lambda}^*, \bm{\eta}) \big) - KL^{(\alpha^{min}, k)}\big(p(\bm{\beta}|\mathbf{x}, \bm{y}) || q(\bm{\beta}; \bm{\lambda}^*, \bm{\eta}) \big)}{ | KL^{(\alpha^{min}, k)}\big(p(\bm{\beta}|\mathbf{x}, \bm{y}) || q(\bm{\beta}; \bm{\lambda}^*, \bm{\eta}) \big) |}
\end{align}
where:
\begin{align}
    \alpha^{min} \in \arg \min_{\alpha \in A} KL^{(\alpha, k)}\big(p(\bm{\beta}|\mathbf{x}, \bm{y}) || q(\bm{\beta}; \bm{\lambda}^*, \bm{\eta}) \big)
\end{align}
for the specific example $k$.
To assess the impact of $\alpha$ solely on the D-vine without any effect of the marginals/MF, we set $\bm{\lambda}^*$ to the correct MF parameters obtained from NUTS.
The results in Table \ref{tab:simulation_alpha_Gaussian_dvine_kl} indicate that consistently $\alpha \leq 0.2$ gives a smaller $KL\big(p(\bm{\beta}|\mathbf{x}, \bm{y}) || q(\bm{\beta}; \bm{\lambda}^*, \bm{\eta}) \big)$, where in 4 out of 10 examples $\alpha=0.05$ yields the best result.

\begin{table}[ht]
    \centering
    \caption{$\Delta KL_{rel}^{(\alpha, k)}$ defined in Equation \eqref{def:relKL} between $q$, a stepwise D-vine, and the true posterior obtained with NUTS, by $\alpha$ in the VR-IWAE (rows) and simulated observed data coming from a multivariate normal (columns). The minimum $\Delta KL_{rel}^{(\alpha, k)}$ value per Example (column) is marked in bold indicating the best $\alpha$ value per example.}
    \label{tab:simulation_alpha_Gaussian_dvine_kl}
    \scriptsize{
    \begin{tabular}{lrrrrrrrrrr}
        \toprule
        Example & 1 & 2 & 3 & 4 & 5 & 6 & 7 & 8 & 9 & 10 \\
    \midrule
    $\alpha=0.01$ & 0.9863 & 4.1335 & 2.2100 & 0.3619 & 0.3108 & 0.3112 & 7.6358 & \textbf{0.0000} & 9.1015 & \textbf{0.0000} \\
    $\alpha=0.05$ & \textbf{0.0000} & 3.3678 & 6.0011 & 0.0615 & \textbf{0.0000} & \textbf{0.0000} & \textbf{0.0000} & 0.7748 & 3.6409 & 0.8254 \\
    $\alpha=0.1$ & 2.4009 & \textbf{0.0000} & \textbf{0.0000} & 0.0050 & 0.1315 & 0.0606 & 5.1381 & 1.6675 & \textbf{0.0000} & 0.1515 \\
    $\alpha=0.2$ & 0.7186 & 1.2066 & 8.5026 & \textbf{0.0000} & 0.1960 & 0.0802 & 8.3610 & 0.9560 & 0.5140 & 1.0075 \\
    $\alpha=0.3$ & 3.4847 & 6.0026 & 5.8918 & 0.5133 & 0.5074 & 0.5757 & 25.4941 & 1.3103 & 6.0731 & 2.3233 \\
    $\alpha=0.4$ & 5.8391 & 12.2899 & 18.6095 & 0.4655 & 1.0352 & 0.6295 & 45.1336 & 2.1350 & 10.9958 & 3.9044 \\
    $\alpha=0.5$ & 10.0232 & 25.1026 & 26.0312 & 0.6685 & 1.0099 & 0.0275 & 59.9073 & 4.0688 & 23.9811 & 8.3800 \\
    $\alpha=0.6$ & 16.0396 & 38.3687 & 59.5540 & 0.9479 & 1.6862 & 0.9086 & 99.4024 & 11.2921 & 26.7213 & 17.4015 \\
    $\alpha=0.7$ & 24.8383 & 58.0596 & 95.2756 & 1.7309 & 3.1522 & 1.1258 & 118.9287 & 18.1933 & 52.6679 & 31.7101 \\
    $\alpha=0.8$ & 38.5228 & 80.9844 & 182.3937 & 3.5975 & 7.2812 & 1.9569 & 160.2008 & 28.5624 & 92.5257 & 45.7793 \\
    $\alpha=0.9$ & 66.5913 & 111.7053 & 252.3727 & 8.3199 & 13.2690 & 5.3173 & 181.2606 & 37.6151 & 162.3288 & 70.9501 \\
    \bottomrule
    \end{tabular}
    }%
\end{table}

\paragraph{Gaussian MF}
We compare the Gaussian MF as variational distribution to samples from the true posterior obtained from NUTS. From \citet{margossian2024variational} we know that the MF recovers the true posterior mean. Instead of computing $KL\big(p(\bm{\beta}|\mathbf{x}, \bm{y}) || q(\bm{\beta}; \bm{\lambda}) \big)$, we therefore compute the relative root means squared error (RMSE) of the standard deviations averaged over $d$:
\begin{align}\label{eq:mean_rel_RMSE_stds}
        \frac{1}{d} \sum_{j=1}^d \frac{\sqrt{(\sigma_j - \hat{\sigma}_j)^2}}{\sigma_j} \; ,
\end{align}
per $\alpha$ and simulated observed data example. Here $\sigma_j, \; j \in [d]$ are the standard deviations of the true posterior $p(\bm{\beta}|\mathbf{x}, \bm{y})$ and $\hat{\sigma}_j, \; j \in [d]$ are the standard deviations of the approximate posterior $q(\bm{\beta}; \bm{\lambda}^*, \bm{\eta})$.

The results of Table \ref{tab:simulation_alpha_Gaussian_MF_rel_RMSE_stds} indicate, that consistently a value of $\alpha \leq 0.2$ leads to a lowest relative RMSE of the estimated standard deviations.

\begin{table}[ht]
    \centering
    \caption{Mean relative RMSE of standard deviation \eqref{eq:mean_rel_RMSE_stds} of $q$, a Gaussian MF, and the true posterior obtained with NUTS, by $\alpha$ in the VR-IWAE (rows) and simulated observed data coming from a multivariate normal (columns). The minimum value per Example (column) is marked in bold indicating the best $\alpha$ value per example.}
    \label{tab:simulation_alpha_Gaussian_MF_rel_RMSE_stds}
    \scriptsize{
    \begin{tabular}{lrrrrrrrrrr}
        \toprule
        Example & 1 & 2 & 3 & 4 & 5 & 6 & 7 & 8 & 9 & 10 \\
        \midrule
        $\alpha=0.01$ & 0.2106 & 0.4120 & \textbf{0.2483} & 0.6312 & 0.5207 & 0.6121 & 0.5505 & 0.5237 & \textbf{0.4068} & \textbf{0.4344} \\
        $\alpha=0.05$ & \textbf{0.1935} & 0.4209 & 0.2723 & 0.6335 & 0.5287 & \textbf{0.6102} & \textbf{0.5416} & 0.5268 & 0.4187 & 0.4438 \\
        $\alpha=0.1$ & 0.2135 & \textbf{0.3794} & 0.2671 & 0.6220 & \textbf{0.5170} & 0.6317 & 0.5547 & \textbf{0.5161} & 0.4259 & 0.4399 \\
        $\alpha=0.2$ & 0.2265 & 0.4503 & 0.3100 & \textbf{0.6206} & 0.5428 & 0.6242 & 0.5914 & 0.5259 & 0.4512 & 0.4731 \\
        $\alpha=0.3$ & 0.2717 & 0.4572 & 0.3238 & 0.6337 & 0.5579 & 0.6354 & 0.6096 & 0.5569 & 0.4855 & 0.4461 \\
        $\alpha=0.4$ & 0.3219 & 0.5061 & 0.3784 & 0.6437 & 0.5691 & 0.6644 & 0.6585 & 0.5872 & 0.5195 & 0.5007 \\
        $\alpha=0.5$ & 0.3949 & 0.5515 & 0.4249 & 0.6780 & 0.5976 & 0.6654 & 0.6746 & 0.6351 & 0.5649 & 0.5463 \\
        $\alpha=0.6$ & 0.4567 & 0.6090 & 0.4838 & 0.7142 & 0.6587 & 0.7021 & 0.7019 & 0.6738 & 0.6121 & 0.6227 \\
        $\alpha=0.7$ & 0.5297 & 0.6427 & 0.5406 & 0.7799 & 0.7440 & 0.7575 & 0.7425 & 0.7145 & 0.6633 & 0.6761 \\
        $\alpha=0.8$ & 0.6115 & 0.6788 & 0.6196 & 0.8556 & 0.8043 & 0.8704 & 0.7522 & 0.7430 & 0.7118 & 0.7218 \\
        $\alpha=0.9$ & 0.6657 & 0.7066 & 0.6751 & 0.8913 & 0.8334 & 0.9117 & 0.7712 & 0.7783 & 0.7447 & 0.7642 \\
        \bottomrule
    \end{tabular}
    }%
\end{table}

% updated with psaver results til here

\subsection{Student-t Set Up}
We create a set up in similar fashion of the (Gaussian) needle example in \citet{shen2025wild} with a multivariate Student-t distribution. We sample $n=300$ observations from the explanatory variables $(X_1, ..., X_4, Y)^T$:
\begin{align}
    (X_1, ..., X_4)^T &\sim t_{d=4}(\nu=4, \bm{\mu}=\bm{0}, \Sigma=C_k) \; ,
\end{align}
and then set:
\begin{align}
    Y := \beta_1 X_1 + \beta_2 X_2 + \beta_3 X_3 + \beta_4 X_4 \; .
\end{align}
We define the true value of the latent vector to:
\begin{align}
    (\beta_1, ..., \beta_4)^T := (10, -10, 5, 3) \; .
\end{align}
and set $C_1$ and $C_2$ as in \eqref{eq:C1C2_correlation_simulation_study_definition} and for remaining $k \in \{3, ..., 10\}$ sample:
\begin{align}
    P_k &\sim \mathcal{W}(\nu=20, C_1^{-1}) \; , \\
    P_k &\sim \mathcal{W}(\nu=20, C_2^{-1}) 
\end{align}
where $\mathcal{W}$ denotes the Wishart distribution, and set $C_k := Corr(P_k^{-1})$. Similarly as in the Gaussian set up and as proposed by \citet{shen2025wild}, we use the likelihood $y_i | \bm{\beta}, \mathbf{x}_i \sim t_{d=1}(\nu=4, \mu = \mathbf{x}_i^T \bm{\beta}, \sigma=1) $ and chose $t_{d=4}(\nu=5, \bm{\mu}=\bm{0}, \Sigma=10^4 \cdot I_4)$ as prior.

For this set up, the KL-divergence is not available in closed form. However, it can be estimated up to an additive constant with samples from NUTS, if we take $p(\bm{\beta}|\mathbf{x}, \bm{y})$ to be the distribution NUTS converges to (the closest we come to the true posterior). More precisely, we can estimate:
\begin{align}
    KL\big(p(\bm{\beta}|\mathbf{x}, \bm{y}) || q(\bm{\beta}; \bm{\lambda}^*, \bm{\eta}) \big) = E_{\bm{\beta} \sim p(\bm{\beta}|\mathbf{x}, \bm{y})} \big[ \log p(\bm{\beta}, \bm{y} | \mathbf{x}) - \log q(\bm{\beta}; \bm{\lambda}^*, \bm{\eta}) \big] + \underbrace{\big( - \log p(\bm{y} | \mathbf{x} ) \big)}_{=: C}
\end{align}
up to the normalizing constant $C$ (that is fixed for a specific example $k$ and varies between examples). This constant cancels out in the $\Delta KL_{rel}^{(\alpha, k)}$ of Equation \eqref{def:relKL}.

\paragraph{Stepwise D-vine}
As in the Gaussian set up we set marginal/MF parameters $\bm{\lambda}^*$ to the values obtained from NUTS.
Table \ref{tab:simulation_alpha_StudentT_dvine_DeltaKL} again suggests that lower $\alpha$ values give better D-vine approximate posteriors. Specifically $\alpha \leq 0.1$ yields the lowest $\Delta KL_{rel}^{(\alpha, k)}$, where the relative differences to the lowest KL-divergence value (up to the additive constant) are quite low.

\begin{table}[ht]
    \centering
    \caption{$\Delta KL_{rel}^{(\alpha, k)}$ defined in Equation \eqref{def:relKL} between $q$, a stepwise D-vine, and the true posterior obtained with NUTS, by $\alpha$ in the VR-IWAE (rows) and simulated observed data coming from a multivariate Student-t distribution (columns). The minimum $\Delta KL_{rel}^{(\alpha, k)}$ value per Example (column) is marked in bold indicating the best $\alpha$ value per example.}
    \label{tab:simulation_alpha_StudentT_dvine_DeltaKL}
    \scriptsize{
    \begin{tabular}{lrrrrrrrrrr}
        \toprule
        Example & 1 & 2 & 3 & 4 & 5 & 6 & 7 & 8 & 9 & 10 \\
        \midrule
        $\alpha=0.01$ & 9.0e-05 & 2.6e-04 & 2.9e-04 & \textbf{0.0e+00} & 2.4e-04 & 1.5e-04 & 3.0e-04 & 2.6e-04 & 2.5e-04 & 7.7e-05 \\
        $\alpha=0.05$ & \textbf{0.0e+00} & 1.4e-04 & \textbf{0.0e+00} & 2.3e-04 & \textbf{0.0e+00} & 3.4e-04 & \textbf{0.0e+00} & \textbf{0.0e+00} & 3.2e-04 & 2.8e-05 \\
        $\alpha=0.1$ & 3.0e-04 & \textbf{0.0e+00} & 2.2e-05 & 8.6e-05 & 4.7e-04 & \textbf{0.0e+00} & 1.0e-03 & 1.6e-04 & \textbf{0.0e+00} & \textbf{0.0e+00} \\
        $\alpha=0.2$ & 3.8e-04 & 2.0e-05 & 4.5e-04 & 3.9e-06 & 4.4e-04 & 3.0e-04 & 1.8e-04 & 5.7e-05 & 9.0e-05 & 2.8e-04 \\
        $\alpha=0.3$ & 9.4e-04 & 1.3e-04 & 6.9e-04 & 3.9e-04 & 1.2e-03 & 1.3e-03 & 7.5e-04 & 2.5e-04 & 1.4e-04 & 1.9e-04 \\
        $\alpha=0.4$ & 1.6e-03 & 3.1e-04 & 1.5e-03 & 1.1e-03 & 1.9e-03 & 1.8e-03 & 1.5e-03 & 8.5e-04 & 5.3e-04 & 3.8e-04 \\
        $\alpha=0.5$ & 2.3e-03 & 5.2e-04 & 2.4e-03 & 2.5e-03 & 5.3e-03 & 3.8e-03 & 2.2e-03 & 1.0e-03 & 1.1e-03 & 4.9e-04 \\
        $\alpha=0.6$ & 3.4e-03 & 8.1e-04 & 4.0e-03 & 3.7e-03 & 7.0e-03 & 7.5e-03 & 3.4e-03 & 1.4e-03 & 1.8e-03 & 9.6e-04 \\
        $\alpha=0.7$ & 5.0e-03 & 1.4e-03 & 7.1e-03 & 6.6e-03 & 9.6e-03 & 1.5e-02 & 5.1e-03 & 2.2e-03 & 3.3e-03 & 1.2e-03 \\
        $\alpha=0.8$ & 9.8e-03 & 2.2e-03 & 1.5e-02 & 1.1e-02 & 1.9e-02 & 2.8e-02 & 7.2e-03 & 2.8e-03 & 4.4e-03 & 1.7e-03 \\
        $\alpha=0.9$ & 1.6e-02 & 2.4e-03 & 2.3e-02 & 3.3e-02 & 3.3e-02 & 6.2e-02 & 8.9e-03 & 3.8e-03 & 5.8e-03 & 1.9e-03 \\
        \bottomrule
    \end{tabular}
    }%
\end{table}

\paragraph{Gaussian MF}

As for the D-vine, we compare the $\Delta KL_{rel}^{(\alpha, k)}$ defined in Equation \eqref{def:relKL}.
Out of 10 examples a value of $\alpha=0.1$ yields the lowest $\Delta KL_{rel}^{(\alpha, k)}$ 7 times, see Table \ref{tab:simulation_alpha_StudentT_MF_DeltaKL}. In total an $\alpha \leq 0.1$ gives the best approximation in all examples, where the relative differences to the lowest KL-divergence value (up to the additive constant) are quite low.

\begin{table}[ht]
    \centering
    \caption{$\Delta KL_{rel}^{(\alpha, k)}$ defined in Equation \eqref{def:relKL} between $q$, a Gaussian MF, and the true posterior obtained with NUTS, by $\alpha$ in the VR-IWAE (rows) and simulated observed data coming from a multivariate Student-t distribution (columns). The minimum $\Delta KL_{rel}^{(\alpha, k)}$ value per Example (column) is marked in bold indicating the best $\alpha$ value per example.}
    \label{tab:simulation_alpha_StudentT_MF_DeltaKL}
    \scriptsize{
    \begin{tabular}{lrrrrrrrrrr}
        \toprule
        Example & 1 & 2 & 3 & 4 & 5 & 6 & 7 & 8 & 9 & 10 \\
        \midrule
        $\alpha=0.01$ & \textbf{0.0e+00} & 7.1e-04 & \textbf{0.0e+00} & 1.2e-03 & 2.2e-04 & 1.2e-03 & 1.6e-03 & \textbf{0.0e+00} & \textbf{0.0e+00} & \textbf{0.0e+00} \\
        $\alpha=0.05$ & 8.6e-05 & 1.6e-03 & 8.7e-04 & 3.7e-04 & \textbf{0.0e+00} & 1.3e-03 & 9.7e-04 & 8.0e-05 & 3.5e-04 & 3.0e-03 \\
        $\alpha=0.1$ & 3.0e-05 & \textbf{0.0e+00} & 1.1e-03 & \textbf{0.0e+00} & 1.4e-03 & \textbf{0.0e+00} & \textbf{0.0e+00} & 8.8e-04 & 8.2e-04 & 2.0e-03 \\
        $\alpha=0.2$ & 7.7e-04 & 2.2e-03 & 2.9e-03 & 7.8e-04 & 2.2e-03 & 1.6e-03 & 5.7e-03 & 8.5e-04 & 1.1e-03 & 2.5e-03 \\
        $\alpha=0.3$ & 9.9e-04 & 4.1e-03 & 2.6e-03 & 1.6e-03 & 3.5e-03 & 1.7e-03 & 5.9e-03 & 1.2e-03 & 3.7e-03 & 4.5e-03 \\
        $\alpha=0.4$ & 1.5e-03 & 1.1e-02 & 2.6e-03 & 2.5e-03 & 2.9e-03 & 2.3e-03 & 1.1e-02 & 2.6e-03 & 4.1e-03 & 4.8e-03 \\
        $\alpha=0.5$ & 2.0e-03 & 1.6e-02 & 9.1e-03 & 4.8e-03 & 7.2e-03 & 4.7e-03 & 2.7e-02 & 5.4e-03 & 7.9e-03 & 1.2e-02 \\
        $\alpha=0.6$ & 8.4e-03 & 2.6e-02 & 1.6e-02 & 9.1e-03 & 1.7e-02 & 1.2e-02 & 4.2e-02 & 9.5e-03 & 1.7e-02 & 1.6e-02 \\
        $\alpha=0.7$ & 1.7e-02 & 3.8e-02 & 2.7e-02 & 1.7e-02 & 3.8e-02 & 3.1e-02 & 7.9e-02 & 1.5e-02 & 3.0e-02 & 2.1e-02 \\
        $\alpha=0.8$ & 3.2e-02 & 4.7e-02 & 3.8e-02 & 3.0e-02 & 7.3e-02 & 7.1e-02 & 1.2e-01 & 2.3e-02 & 4.3e-02 & 2.5e-02 \\
        $\alpha=0.9$ & 5.7e-02 & 6.0e-02 & 6.4e-02 & 4.4e-02 & 1.1e-01 & 1.2e-01 & 1.7e-01 & 2.7e-02 & 5.9e-02 & 3.5e-02 \\
        \bottomrule
    \end{tabular}
    }%
\end{table}

\subsection{Vine Set Up}
Again, we create a set up in similar fashion of the (Gaussian) needle example in \citet{shen2025wild}, where this time the data comes from a vine copula with Gaussian marginals and Gaussian and Clayton pair copulas. We sample $n=300$ observations from the explanatory variables $(X_1, ..., X_4, Y)^T$:
\begin{align}
    (X_1, ..., X_4)^T &\sim \text{vine copula}\Big(\mathcal{F}, \mathcal{V}, \mathcal{B}(\mathcal{V}), \bm{\eta}\big(\mathcal{B}(\mathcal{V})\big)\Big)\; ,
\end{align}
and then set:
\begin{align}
    Y := \beta_1 X_1 + \beta_2 X_2 + \beta_3 X_3 + \beta_4 X_4 \; .
\end{align}
We set the marginals $\mathcal{F} := (F_1, ..., F_d)$ to be $F_j := \mathcal{N}(0, 1)$ for all $j \in [d]$ and the vine tree structure $\mathcal{V}$ to be a D-vine. For each edge in a tree in $\mathcal{V}$, we sample the pair copula family in $\mathcal{B}(\mathcal{V})$ from a Bernoulli to either be a Clayton copula (with $p=0.6$) or a Gaussian pair copula ($1 - p$). For each Clayton pair copula we sample its parameter $\eta \sim U\big([1,8]\big)$, for each Gaussian pair copula $\eta \sim U\big([0,1]\big)$.

We define the true value of the latent vector to:
\begin{align}
    (\beta_1, ..., \beta_4)^T := (10, -10, 5, 3) \; .
\end{align}
As in the Gaussian set up, we use the likelihood $y_i | \bm{\beta}, \mathbf{x}_i \sim \mathcal{N}(\mathbf{x}_i^T \bm{\beta}, 1)$, which in this case is clearly mis-specified, and chose $\mathcal{N}(\bm{0}, 10^4 \cdot I_4)$ as prior. 
The resulting posterior is a multivariate normal and we can compute the forward KL-divergence $KL\big(p(\bm{\beta}|\mathbf{x}, \bm{y}) || q(\bm{\beta}; \bm{\lambda}^*, \bm{\eta}) \big)$ between $p(\bm{\beta}|\mathbf{x}, \bm{y})$, the posterior obtained by NUTS, and $q(\bm{\beta}; \bm{\lambda}^*, \bm{\eta})$, the approximate posterior to assess the performance of VR-IWAE under different $\alpha$ values.

\paragraph{Stepwise D-vine}
As in the previous two set ups, we set marginal/MF parameters $\bm{\lambda}^*$ to the values obtained from NUTS. 
In Table \ref{tab:simulation_alpha_Vine_dvine_DeltaKL} a $\alpha \leq 0.4$ yields the lowest KL-divergence in all examples. This value is slightly higher than in the set ups before. Still in the majority of examples (7 out of 10) an $\alpha \leq 0.2$ yields the lowest $\Delta KL_{rel}^{(\alpha, k)}$.

\begin{table}[ht]
    \centering
    \caption{$\Delta KL_{rel}^{(\alpha, k)}$ defined in Equation \eqref{def:relKL} between $q$, a stepwise D-vine, and the true posterior obtained with NUTS, by $\alpha$ in the VR-IWAE (rows) and simulated observed data coming from a vine copula with Gaussian marginals and Clayton and Gaussian pair copulas (columns). The minimum $\Delta KL_{rel}^{(\alpha, k)}$ value per Example (column) is marked in bold indicating the best $\alpha$ value per example.}
    \label{tab:simulation_alpha_Vine_dvine_DeltaKL}
    \scriptsize{
    \begin{tabular}{lrrrrrrrrrr}
        \toprule
        Example & 1 & 2 & 3 & 4 & 5 & 6 & 7 & 8 & 9 & 10 \\
        \midrule
        $\alpha=0.01$ & \textbf{0.0000} & 1.4084 & 0.7960 & \textbf{0.0000} & \textbf{0.0000} & 0.0663 & 0.0016 & 4.3844 & 0.3249 & 0.6476 \\
        $\alpha=0.05$ & 0.0876 & 1.4489 & \textbf{0.0000} & 0.0093 & 0.0228 & 0.0726 & \textbf{0.0000} & 0.7574 & 1.3431 & 1.8231 \\
        $\alpha=0.1$ & 0.6093 & 1.4973 & 0.0443 & 0.0015 & 0.0282 & 0.0579 & 0.0327 & 0.3312 & \textbf{0.0000} & \textbf{0.0000} \\
        $\alpha=0.2$ & 0.2182 & 1.5363 & 0.1981 & 0.0031 & 0.3240 & 0.1062 & 0.0041 & \textbf{0.0000} & 1.4434 & 0.4692 \\
        $\alpha=0.3$ & 0.3218 & 1.7829 & 0.2536 & 0.0054 & 0.7951 & \textbf{0.0000} & 0.0159 & 2.5615 & 6.8313 & 1.9801 \\
        $\alpha=0.4$ & 1.0896 & \textbf{0.0000} & 0.9100 & 0.0135 & 1.0328 & 0.4799 & 0.0192 & 4.6749 & 8.2737 & 1.8281 \\
        $\alpha=0.5$ & 3.8322 & 0.7775 & 2.2266 & 0.0255 & 2.3891 & 1.2552 & 0.0965 & 10.4346 & 28.1116 & 8.4882 \\
        $\alpha=0.6$ & 8.7510 & 1.3662 & 3.5056 & 0.0982 & 5.2396 & 3.0301 & 0.1873 & 25.5516 & 63.2145 & 17.7040 \\
        $\alpha=0.7$ & 2.7649 & 9.5905 & 4.9209 & 0.7148 & 1.8280 & 5.6362 & 0.4041 & 42.3457 & 111.8122 & 34.4603 \\
        $\alpha=0.8$ & 21.2089 & 17.4623 & 7.7865 & 1.8747 & 1.8287 & 6.1535 & 0.5688 & 56.7802 & 169.4176 & 49.1580 \\
        $\alpha=0.9$ & 29.8824 & 21.4766 & 22.0038 & 4.5072 & 47.9719 & 8.3250 & 0.9030 & 81.0791 & 238.8662 & 84.9565 \\
        \bottomrule
    \end{tabular}
    }%
\end{table}

\paragraph{Gaussian MF}
For the mis-specified case we have no statement in the style of \citet{margossian2024variational} of whether the correct posterior mean is recovered by the Gaussian MF or not. For this reason it makes sense to assess the KL-divergence. However, for several examples several $\alpha$ yield the same KL-divergence value.\footnote{This could indicate that even though the statement of \citet{margossian2024variational} does not hold for the current, mis-specified case, the posterior mean obtained by NUTS might be recovered by the Gaussian MF in this set up.} For this reason we instead assess the mean relative RMSE of the standard deviations averaged over $d$ as defined in Equation \eqref{eq:mean_rel_RMSE_stds}.

Table \ref{tab:simulation_alpha_Vine_MF_rel_RMSE_stds} reports that a value of $\alpha \leq 0.2$ yields the lowest VR-IWAE loss for all examples reported.

\begin{table}[ht]
    \centering
    \caption{Mean relative RMSE of standard deviation \eqref{eq:mean_rel_RMSE_stds} of $q$, a Gaussian MF, and the true posterior obtained with NUTS, by $\alpha$ in the VR-IWAE (rows) and simulated observed data coming from a vine copula with Gaussian marginals and Clayton and Gaussian pair copulas (columns). The minimum value per Example (column) is marked in bold indicating the best $\alpha$ value per example.}
    \label{tab:simulation_alpha_Vine_MF_rel_RMSE_stds}
    \scriptsize{
    \begin{tabular}{lrrrrrrrrrr}
        \toprule
        Example & 1 & 2 & 3 & 4 & 5 & 6 & 7 & 8 & 9 & 10 \\
        \midrule
        $\alpha=0.01$ & \textbf{0.1288} & \textbf{0.1356} & 0.1245 & 0.0762 & 0.3238 & 0.1872 & 0.2329 & 0.0650 & \textbf{0.1316} & \textbf{0.2368} \\
        $\alpha=0.05$ & 0.1804 & 0.1598 & \textbf{0.1143} & 0.0752 & 0.3049 & 0.2171 & \textbf{0.2084} & 0.0579 & 0.1684 & 0.2538 \\
        $\alpha=0.1$ & 0.1441 & 0.1447 & 0.1632 & 0.0861 & \textbf{0.2875} & \textbf{0.1165} & 0.2638 & \textbf{0.0519} & 0.1506 & 0.3116 \\
        $\alpha=0.2$ & 0.1798 & 0.1618 & 0.1442 & \textbf{0.0544} & 0.3399 & 0.1891 & 0.2713 & 0.1058 & 0.1784 & 0.3075 \\
        $\alpha=0.3$ & 0.1642 & 0.1793 & 0.1613 & 0.1088 & 0.3360 & 0.1865 & 0.3046 & 0.1648 & 0.2274 & 0.3230 \\
        $\alpha=0.4$ & 0.2232 & 0.2288 & 0.2155 & 0.1628 & 0.3766 & 0.2459 & 0.3860 & 0.2227 & 0.2794 & 0.3673 \\
        $\alpha=0.5$ & 0.2851 & 0.3040 & 0.2690 & 0.1944 & 0.4379 & 0.2854 & 0.4454 & 0.2463 & 0.2928 & 0.4455 \\
        $\alpha=0.6$ & 0.3578 & 0.3573 & 0.3936 & 0.2626 & 0.5079 & 0.3913 & 0.5279 & 0.3320 & 0.3865 & 0.5140 \\
        $\alpha=0.7$ & 0.4706 & 0.4713 & 0.4609 & 0.3401 & 0.5724 & 0.4908 & 0.5755 & 0.3869 & 0.4571 & 0.5690 \\
        $\alpha=0.8$ & 0.5335 & 0.5876 & 0.5769 & 0.4196 & 0.6605 & 0.6223 & 0.6207 & 0.4456 & 0.5276 & 0.6268 \\
        $\alpha=0.9$ & 0.5814 & 0.6551 & 0.6396 & 0.4888 & 0.7255 & 0.7081 & 0.6482 & 0.4744 & 0.5670 & 0.6639 \\
        \bottomrule
    \end{tabular}
    }%
\end{table}

\subsection{Discussion of Simulation Study Results}
In total, we ran 10 examples for 3 different data generating set ups with each 11 different $\alpha \in [0,1]$ for the stepwise D-vine and a Gaussian MF as variational distributions. For all examples we found that $\alpha$ values of at most 0.4 in the VR-IWAE loss give the best approximation to the true posterior with both the Gaussian MF and the stepwise D-vine. For all examples, but 2 examples we found that an $\alpha$ of at most 0.2 gives the best approximation.

The results are consistent in the sense that the $\Delta KL_{rel}^{(\alpha, k)}$ value for the stepwise D-vine and the mean relative RMSE of the standard deviations for the Gaussian MF exhibit a certain degree of monotonicity: the closer the current $\alpha$ is to $\alpha^{min} := \arg \min_{\alpha}\Delta KL_{rel}^{(\alpha, k)}$, the lower $\Delta KL_{rel}^{(\alpha, k)}$ value and similarly for the mean relative RMSE of the standard deviations.

In Section \ref{sec:Renyi-divergence_VI} we elaborate how $\alpha$ closer to 0 gives higher variance in the VR-IWAE. For this reason we conclude from the simulation study that an $\alpha=0.1$ is an overall good choice that gives good approximation for both the Gaussian MF and stepwise D-vine.

\section{Competitor Models}\label{app:competitor_models}

\subsection{Gaussian Copula VI (GC-VI)}\label{app:GC-VI}
Gaussian copula VI (GC-VI) is proposed by \citet{tran2015copula}, where the mean-field (MF) variational distribution is augmented with a vine copula to capture the dependence among the latent variables and a black-box VI approach is obtained. The authors assume the vine tree structure and the pair copula families to be fixed. 
In experiments they attempt to learn the tree structure and pair copula families from synthetic data of the latent variables generated from an estimate of the variational distribution. However, for a good reason\footnote{This is a very unrealistic setting: Either there is a high-quality variational posterior available from which we could generate synthetic data and meaningfully learn the vine tree structure. However, then there is no need to estimate a vine copula variational posterior. Or we only have a low-quality variational posterior available. The synthetic data generated from the latter will be useless for vine structure learning.}  the authors do not further explore this route. \citet{tran2015copula} use the score estimator to re-express the ELBO gradient w.r.t. the MF parameters as an expectation over the full variational distribution. In the case of differentiable latent variables, they additionally apply the reparametrization trick for the ELBO gradient w.r.t. the MF and the pair copula parameters. The proposed copula VI algorithm alternates between optimizing the MF parameters $\bm{\lambda}$ until convergence while the pair copula parameters $\bm{\eta}$ are held fixed, and optimizing $\bm{\eta}$ until convergence with $\bm{\lambda}$ fixed, until convergence of the whole routine.

Neither source code nor full details or parameters of the experiments (e.g. the vine tree structure, pair copula types or threshold value for parameter convergence, etc.) are made available by the authors, our request by email was not answered. Therefore we need to fill the gaps, where we are as faithful to \citet{tran2015copula} as possible.

We set the vine tree structure to be a D-vine and pair copula families to be Gaussian. We take 2 optimization rounds on MF and D-vine parameters. We specify the threshold on change of the parameter values as stopping criterion for the optimization to be ADVI's default, i.e. $0.001$ \citep{kucukelbir2015automatic}.\footnote{Detecting convergence of variational parameter estimates is hard: Setting the value for the threshold on the parameter change is difficult and has a big impact on the quality of the variational posterior approximation.} \citet{tran2015copula} use 1024 MC samples to estimate the ELBO gradient in each step. We find that drawing 1024 MC samples are slows the optimization down to an impractical degree. For this reason we reduce the number of MC samples drawn for the lower bound gradient estimate to 10.

\subsection{Masked Auto-Regressive Flow Model (MAF)}
A normalizing flow model for $\mathbf{X} \in \mathbb{R}^d$ consists of a simple base distribution $\pi$ of $\mathbf{U}$ and an invertible transformation $T$, where $\mathbf{x} := T(\bm{\mathbf{u}})$ and both $T$ and $T^{-1}$ are differentiable. Then with the change of variables:
\begin{align}
    p_{\mathbf{X}}(\mathbf{x}) = \pi\big(T^{-1}(\mathbf{x})\big) \cdot | \det J_{T^{-1}}(\mathbf{x})| \; .
\end{align}
Generally, $T$ is a composition of a finite number of simple transformations:
\begin{align}
    T := T_K \circ \dots \circ T_1 \; ,
\end{align}
where with $\mathbf{z}_0 := \mathbf{u}$ and $\mathbf{z}_K := \mathbf{x}$. The forward evaluation yields:
\begin{align}
    \mathbf{z}_k := T_k(\mathbf{z}_{k-1})
\end{align}
and consequently:
\begin{align}
    \log | \det J_T(\mathbf{z}_0)| = \log | \prod_{k=1}^K \det J_{T_k}(\mathbf{z}_{k-1})| = \sum_{k=1}^K \log | \det J_{T_k}(\mathbf{z}_{k-1}) | \; .
\end{align}

Masked auto-regressive flows (MAFs) proposed by \citet{papamakarios2017masked} belong to the class of auto-regressive flows. In an auto-regressive flow a transformation $T_k$ takes on the form:
\begin{align}
    z_j' := \tau \big( z_j; \bm{h}_j \big) \quad \text{where} \quad \bm{h}_j := c_j(\mathbf{z}_{<j}) \; , \quad  j \in [d]
\end{align}
where we left out the index $k$ of $T_k$ to ease understanding and use the notation $\mathbf{z}'$ to defined the output of $T_k$, i.e. $\mathbf{z}' := T_k (\mathbf{z})$. Here $\tau$ is the so-called transformer and $c_j$ the $j$th conditioner. The conditioner is auto-regressive since $c_j$ depends on $\mathbf{z}_{<j}$. It is easy to check that this auto-regressive form yields a triangular $J_{T_k}$, for which $\log |\det J_{T_k} (\mathbf{z}) |$ can be computed efficiently in linear time. 
% \citet{huang2018neural} set $\tau$ to a neural network and obtain neural auto-regressive flows. 

A masked conditioner \citep{papamakarios2017masked} is an auto-regressive conditioner and was introduced to increase speed of evaluating and inverting $T_k$. Having $d$ separate conditioner models $c_j(\mathbf{z}_{<j}), \; j \in [d]$ each with separate parameters scales poorly with $d$. Instead, the idea of a masked conditioner is to share parameters across conditioners by having a single conditioner model $c$ and obtain $(\bm{h}_l, ..., \bm{h}_d)$ in a single forward pass $(\bm{h}_l, ..., \bm{h}_d) = c(\mathbf{z})$. Typically, $c$ is a feed-forward neural network (NN) that satisfies an auto-regressive structure. The latter is achieved by starting off with the NN and removing all paths from $z_j$ to $(\bm{h}_l, ..., \bm{h}_j)$ from it. This is done by masking, namely by multiplying the according NN weights with 0 \citep{papamakarios2021normalizing}.

To obtain MAFs, \citet{papamakarios2017masked} combine masked conditioners with affine transformers:
\begin{align}
    \tau \big( z_j; \bm{h}_j \big) := \exp(\alpha_j) z_j + \beta_j \quad \text{with} \quad (\alpha_j, \beta_j)^T := \bm{h}_j
\end{align}

In our experiments we use the MAF implementation of \verb|Zuko| \citep{zuko_pypi} with $K=20$ transformations $T_k, \; k \in [K]$ and Adam as optimizer \citep{kingma2014adam}.

In our experiments we also tried the more flexible neural spline flow models (NSFs) proposed by \citet{durkan2019neural}. However, they did not give sensible approximations, so we switched to the simpler MAFs.

\section{Simulated Examples: Results}\label{app:simulated_examples_results}

\begin{figure}[H]
    \centering
    \includegraphics[width=\linewidth]{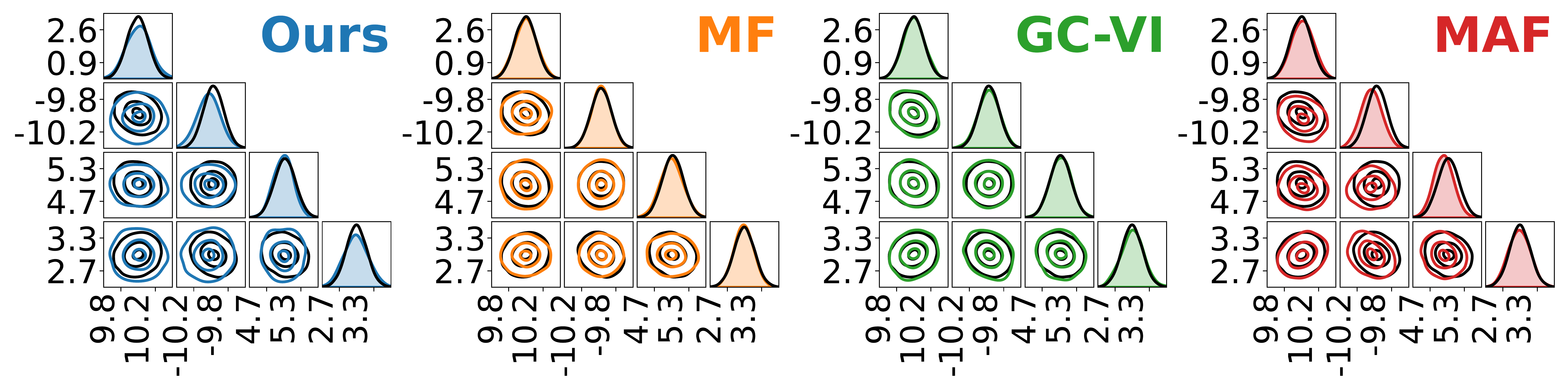}
    \includegraphics[width=\linewidth]{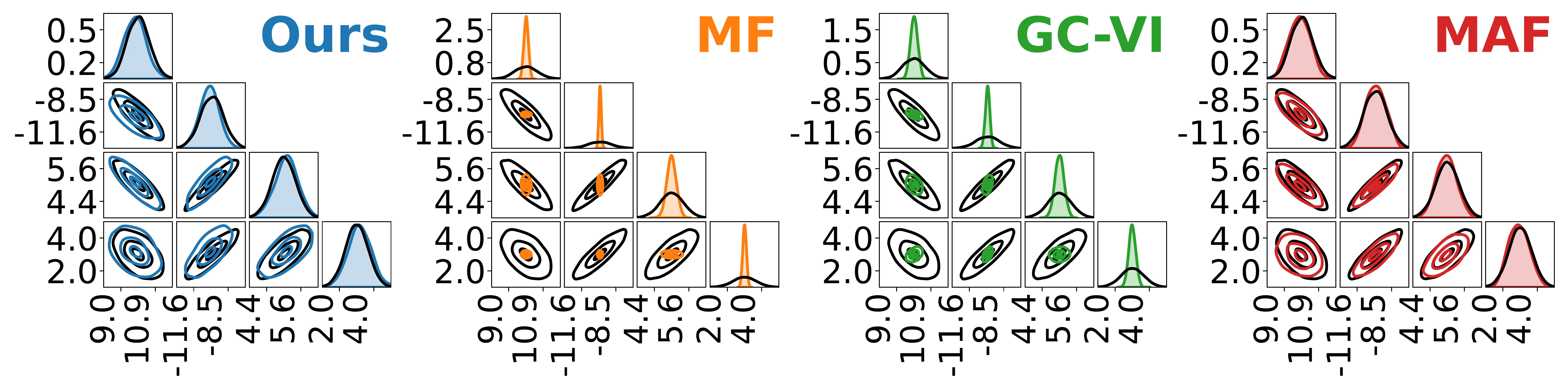}
    \caption{Contour plots of samples from NUTS (black) regarded as ground truth and variational approximations obtained with stepwise VI with vines (blue), MFVI (orange), GC-VI (green) and MAF (red) for the independence example (top) and the needle example (bottom) of Section \ref{sec:simulated_examples_results}.}
    \label{fig:example_independence_pairsplots}
\end{figure}

\section{GP Example: Results}\label{app:GP_results}

\begin{figure}[H]
\centering\includegraphics[width=0.5\linewidth]{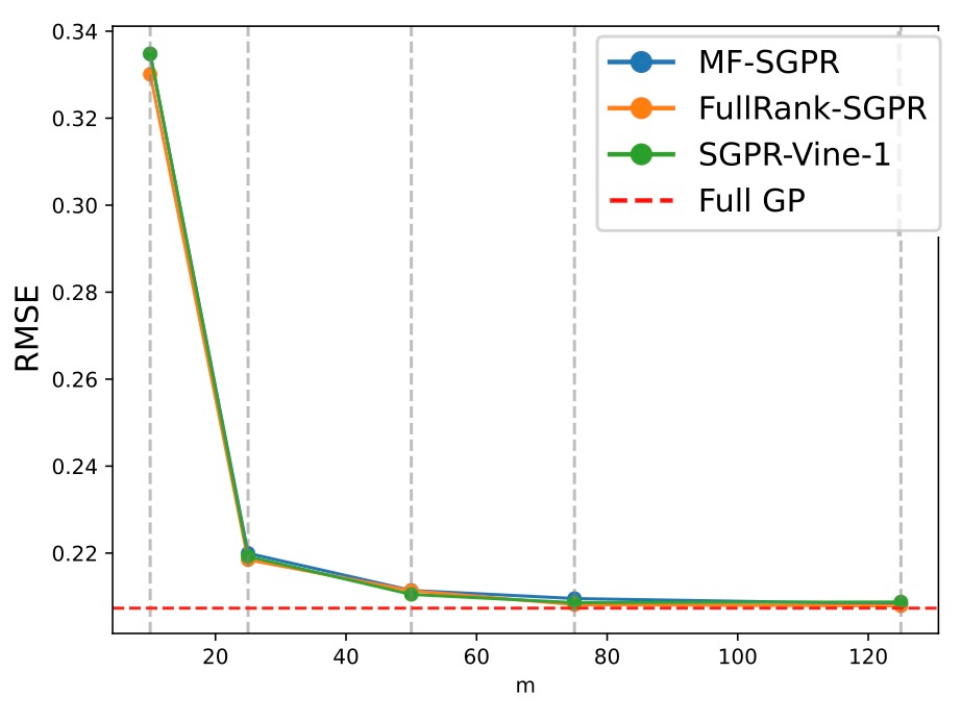}
    \caption{RMSE across a range of inducing point values for the \texttt{pumadyn32nm} dataset. We compare the mean-field SGPR (MF-SGPR), a full rank SGPR (FullRank-SGPR), and the vine at tree level $\tau=1$ to a full (non-sparse) GP fit (red dashed line).}
    \label{fig:GP_results_rmse}
\end{figure}

% \paragraph{Lotka-Volterra}

% We consider a two-dimensional Lotka-Volterra predator-prey system following \citet{carpenter2018predator}. The latent state $\mathbf{z}(t) = (u(t), v(t))^\top$ evolves according to the ODEs:
% \begin{align}
%     \frac{du(t)}{dt} &= (\alpha - \beta v(t))\,u(t), \\
%     \frac{dv(t)}{dt} &= (-\gamma + \delta u(t))\,v(t),
% \end{align}
% where $\boldsymbol{\theta} = (\alpha, \beta, \gamma, \delta)^\top \in \mathbb{R}_+^4$. Given fixed initial condition $\mathbf{z}(0) := (30,5)^T$ and noise parameter $\sigma := 0.3$, we simulate $n=50$ noisy observations $(\bm{y}_t)_{t \in [n]}$ as follows: First, we numerically solve the ODEs to obtain the trajectory $\mathbf{z}(t)$. Then we set $\bm{y}_t := \mathbf{z}(t) \cdot \exp(\sigma \bm{\epsilon}_t)$ where we draw $\bm{\epsilon}_t \sim \mathcal{N}(\bm{0}, I_2)$ i.i.d. for $t \in [n]$.

% We set a $\mathcal{N}(\bm{0}, I_4)$ prior on $\log(\bm{\theta})$ and use the likelihood $\bm{y}_t | \bm{\theta} \sim \mathrm{LogNormal}\Big(\log \big( \mathbf{z}(t; \boldsymbol{\theta}) \big), \sigma I_2\Big)$.

\section{Compute Resources}
The independence and the needle example in Section \ref{sec:simulated_examples_results} were conducted on an Apple Macbook Pro with macOS Sequoia 15.6.1, Apple M2 Pro chip, 16 GB RAM and 10 cores. We used \verb|Python| 3.12.2 and \verb|conda| 24.7.1 for virtual environment managing.

The Gaussian Process examples were performed on an Apple Macbook Pro with macOS Sequoia 15.1.1, Apple M4 chip, 16GB RAM and 10 cores on \verb|Python| version was 3.10.19.

\end{document}